\documentclass{article}




     \usepackage[final]{neurips_2024}


\usepackage[utf8]{inputenc} 
\usepackage[T1]{fontenc}    
\usepackage{hyperref}       
\usepackage{url}            
\usepackage{booktabs}       
\usepackage{amsmath,amsfonts,amssymb,amsthm,bm,dsfont}
\usepackage{nicefrac}       
\usepackage{microtype}      
\usepackage{xcolor}         

\usepackage{enumitem}   
\usepackage{wrapfig}
\usepackage{cleveref}
\usepackage{float}
\usepackage{url}
\usepackage{upgreek}
\usepackage{tikz-cd}
\usepackage{cleveref} 
\usepackage{physics}

\usepackage{graphicx}
\usepackage{subcaption}
\usepackage{caption}

\usepackage{adjustbox}

\newcommand{\X}{\mathbb{R}^d}
\newcommand{\cP}{\mathcal{P}}

\newcommand{\cD}{\mathcal{D}}
\newcommand{\E}{\mathbb E}

\newcommand{\ps}[1]{\langle #1 \rangle}
\newcommand{\cF}{\mathcal{F}}

\newcommand{\cH}{\mathcal{H}}

\newcommand{\N}{\Gamma}
\newcommand{\hatN}{\hat{\Gamma}}

\newcommand{\Kp}{K_{\hat{p}}}
\newcommand{\Kq}{K_{\hat{q}}}
\newcommand{\Kpq}{K_{\hat{p},\hat{q}}}
\newcommand{\Kqp}{K_{\hat{q},\hat{p}}}

\newcommand{\Hk}{\mathcal{H}}
\newcommand{\KKL}{\mathrm{KKL}}
\newcommand{\MMD}{\mathrm{MMD}}
\newcommand{\Id}{\mathop{\mathrm{Id}}\nolimits}

\newcommand{\KL}{\mathop{\mathrm{KL}}\nolimits}
\newcommand{\KALE}{\mathop{\mathrm{KALE}}\nolimits}

\newcommand{\Var}{\mathop{\mathrm{Var}}\nolimits}
\newcommand{\JS}{\mathop{\mathrm{JS}}\nolimits}

\newcommand{\Ker}{\mathop{\mathrm{Ker}}\nolimits}

\newcommand{\supp}{\mathrm{supp}}

\newcommand{\R}{\mathbb{R}}

\newcommand{\bH}{\mathbf{H}}

\theoremstyle{plain}
\newtheorem{theorem}{Theorem}
\newtheorem{proposition}[theorem]{Proposition}

\newtheorem{lemma}[theorem]{Lemma}

\theoremstyle{definition}

\newtheorem{remark}[theorem]{Remark}

\newenvironment{sproof}{%
  \proof}{\endproof}

\DeclareMathSymbol{\shortminus}{\mathbin}{AMSa}{"39}

\title{Statistical and Geometrical Properties of Regularized Kernel Kullback-Leibler Divergence}

\author{ Clémentine Chazal \\
  CREST, ENSAE, 
  IP Paris \\
\texttt{clementine.chazal@ensae.fr} \\
  \And
   Anna Korba \\
  CREST, ENSAE, 
  IP Paris \\
  \texttt{anna.korba@ensae.fr} \\
  \And
   Francis Bach \\
  INRIA - Ecole Normale Supérieure 
 \\
  PSL Research university \\
  \texttt{francis.bach@inria.fr} 
}

\usepackage[textwidth=2cm,disable]{todonotes}  
\usepackage{xargs}

 \newcommand{\ak}[1]
 {\todo[color=pink!200,size=\footnotesize]{ \textbf{AK:}  #1}}
 \newcommand{\cc}[1]
 {\todo[color=blue!20,size=\footnotesize]{ \textbf{CC:}  #1}}

\begin{document}

\maketitle

\begin{abstract}
In this paper, we study the statistical and geometrical properties of the Kullback-Leibler divergence with kernel covariance operators ($\KKL$) introduced by \cite{bach2022information}. Unlike the classical Kullback-Leibler (KL) divergence that involves density ratios, the $\KKL$ compares probability distributions through covariance operators (embeddings) in a reproducible kernel Hilbert space (RKHS), and compute the Kullback-Leibler quantum divergence. 
This novel divergence hence shares parallel but different aspects with both the standard Kullback-Leibler between probability distributions and kernel embeddings metrics such as the maximum mean discrepancy. 
A limitation faced with the original $\KKL$ divergence is its inability to be defined for distributions with disjoint supports. To solve this problem, we propose in this paper a regularized variant that guarantees that the divergence is well defined for all distributions. We derive bounds that quantify the deviation of the regularized $\KKL$ to the original one, as well as finite-sample bounds. 
In addition, we provide a closed-form expression for the regularized $\KKL$, specifically applicable when the distributions consist of finite sets of points, which makes it implementable. 
Furthermore, we derive a Wasserstein gradient descent scheme of the $\KKL$ divergence in the case of discrete distributions, and study empirically its properties to transport a set of points to a target distribution.
\end{abstract}

\section{Introduction}

A fundamental task in machine learning is to approximate a target distribution $q$. 
For example, in Bayesian inference \citep{gelman1995bayesian}, it is of interest to approximate posterior distributions of the parameters of a statistical model for predictive inference. This has led to the vast development of parametric methods from variational inference \citep{blei2017variational}, or non-parametric ones such as Markov Chain Monte Carlo (MCMC) \citep{roberts2004general}, and more recently particle-based optimization \citep{liu2016stein,korba2021kernel}. 
In generative modelling \citep{brock2019large,ho2020denoising,song2020score,franceschi2023unifying}  
only samples from $q $ are available and the goal is to generate data whose distribution is similar to the training set distribution.  
Generally, this problem can be cast as an optimization problem over $\cP(\R^d)$, the space of probability distributions over $\R^d$, where the optimization objective is chosen as a dissimilarity function $\cD(\cdot|q)$ (a distance or divergence) between probability distributions, that only vanishes at $q$. 
Starting from an initial distribution $p_0$, a descent scheme can then be applied such that the trajectory $(p_t)_{t\ge 0}$ approaches $q$. 
In particular, on the space of probability distributions with bounded second moment $\cP_2(\R^d)$, one can consider the Wasserstein gradient flow of the functional $\cF(p) = \cD(p||q)$. The latter defines a path of distributions, ruled by a velocity field, that is of steepest descent for $\cF$ with respect to the Wasserstein-2 distance from optimal transport.

This approach has led to a large variety of algorithms based on the choice of a specific dissimilarity functional $\cF$, often determined by the information available on the target $q$. For example, in Bayesian or variational inference, where the target's density is known up to an intractable normalizing constant, a common choice for the cost is the Kullback-Leibler (KL) divergence, whose optimization is tractable in that setting \citep{wibisono2018sampling, ranganath2014black}.  
When only samples of $q$ are available, 
it is not convenient to choose the optimization cost as the KL, as it is only defined for probability distributions $p$ that are absolutely continuous with respect to $q$. 
In contrast, it is more convenient to choose an $\cF$ that can be written as integrals against $q$, for instance, maximum mean discrepancy (MMD) \citep{arbel2019maximum,hertrich2023generative}, sliced-Wasserstein distance \citep{liutkus2019sliced} or Sinkhorn divergence \citep{genevay2018learning}.  
 However,  sliced-Wasserstein distances, that average optimal transport distances of 1-dimensional projections of probability distributions (slices) over an infinite number of directions, have to be approximated by a finite number of directions in practice \citep{tanguy2023properties}; and Sinkhorn divergences involve solving a relaxed optimal transport problem. In contrast, MMD can be written in closed-form for discrete measures thanks to the reproducing property of positive definite kernels. 
The MMD represents probability distributions through their kernel mean embeddings in a reproducing kernel Hilbert space (RKHS), and compute the RKHS norm of the difference of embeddings (namely, the witness function). Moreover, the 
MMD flow with a smooth kernel (e.g., Gaussian) as in \cite{arbel2019maximum} is easy to implement, as the velocity field is expressed as the gradient of the witness function, and preserve discrete measures. However, due to the non-convexity of the MMD in the underlying Wasserstein geometry    \citep{arbel2019maximum}, its gradient flow is often stuck in local minimas in practice even for simple target as Gaussian $q$, calling for adaptive schemes tuning the level of noise or kernel hyperparameters \citep{xu2022accurate,galashov2024deep}, or regularizing the kernel~\citep{chen2024regularized}.
MMD with non-smooth kernels, e.g., based on negative distances \citep{sejdinovic13energy}, have also attracted attention recently, as their gradient flow enjoys better empirical convergence properties  than the previous ones \citep{hertrich2024wasserstein, hertrich2023generative}. However, their gradient flow does not preserve discrete measures; and their practical simulation rely on implicit time discretizations \citep{hertrich2024wasserstein} or slicing \citep{hertrich2023generative}.

In contrast to the MMD with smooth kernels, 
 the KL divergence is 
displacement convex \citep[Definition 16.5]{villani2009optimal} when the target is log-concave (i.e., $q$ has a density $q \propto e^{-V}$ with $V$ convex), and its gradient flow enjoys fast convergence when $q$ satisfies a log-Sobolev inequality \citep{bakry2014analysis}. 
In this regard, it enjoys better geometrical properties than the MMD. 
Moreover, 
the KL divergence is equal to infinity for singular $p$ and $q$, which makes its gradient flow extremely sensitive to mismatch of support, 
so that the flow enforces the concentration on the support of  $q$ as desired.
On the downside, while the Wasserstein gradient flow of KL divergences is well-defined \citep{chewi2020svgd}, its associated particle-based discretization is difficult to simulate when only samples of $q$ are available, and a surrogate optimization problem usually needs to be introduced \citep{gao2019deep,ansari2020refining,simons2022variational,birrell2022f,liu2022minimizing}.
However, it is unclear whether this surrogate optimization problem preserves the geometry of the KL flow. 

Recently, \cite{bach2022information} introduced alternative divergences based on quantum divergences evaluated through kernel covariance operators, that we call here a kernel Kullback-Leibler ($\KKL$) divergence. The latter can be seen as second-order embeddings of probability distributions, in contrast with first-order kernel mean embeddings (as used in MMD). In \cite{bach2022information}, it was shown that the $\KKL$ enjoys nice properties such as separation of measures, and that it is framed between a standard KL divergence (from above) and a smoothed KL divergence (from below), i.e., a KL divergence between smoothed versions of the measures with respect to a specific smoothing kernel. Hence, it cannot directly be identified to a KL divergence and corresponds to a novel and distinct divergence. 
However, many of its properties remained unexplored, including a complete analysis of the $\KKL$ for empirical measures, a tractable closed-form expression and its optimization properties. In this paper, we tackle the previous questions. We propose a regularized version of the $\KKL$ that is well-defined for any discrete measures, in contrast with the original $\KKL$. We establish upper bounds that quantify the deviation of the regularized $\KKL$ to its unregularized counterpart, and convergence for empirical distributions. Moreover, we derive a tractable closed-form for the regularized $\KKL$ and its derivatives that writes with respect to kernel Gram matrices, leading to a practical optimization algorithm. Finally, we investigate empirically the statistical properties of the regularized $\KKL$, as well as its geometrical properties when using it as an objective to target a probability distribution $q$. 

This paper is organized as follows. \Cref{sec:kkl} introduces the necessary background and the regularized $\KKL$. \Cref{sec:skew_and_empirical_results} presents our theoretical  results on the deviation and finite-sample properties of the latter. \Cref{sec:optim_kkl} provides the closed-form of regularized $\KKL$ for discrete measures as well as the practical optimization scheme based on an explicit time-discretisation of its Wasserstein gradient flow. \Cref{sec:related_work} discusses closely related work including distances or divergences between distributions based on reproducing kernels. Finally, \Cref{sec:experiments} illustrates the statistical and optimization properties of the $\KKL$ on a variety of experiments. 

\section{Regularized kernel Kullback-Leibler ($\KKL$) divergence }\label{sec:kkl}

In this section, we state our notations and previous results on the (original) kernel Kullback-Leibler ($\KKL$) divergence introduced by \cite{bach2022information}, before introducing our proposed regularized version.

\paragraph{Notations.} Let $\cP(\R^d)$ the set of probability measures on $\R^d$. Let $\cP_2(\R^d)$ the set of probability measures on $\R^d$ with finite second moment, which becomes a metric space when equipped with Wasserstein-2 ($W_2$) distance~\citep{villani2009optimal}.

For $p \in \cP(\R^d)$, we denote that  $p$ is absolutely continuous w.r.t.\ $q$ using $p \ll q$, and we use $dp/dq$ to denote the Radon-Nikodym derivative.  We recall the standard definition of the Kullback-Leibler divergence, $\KL(p||q)=\int \log(dp/dq)dp$ if $p\ll q$, $+\infty$ else.

If $g : \R^d \to \R^r$ is differentiable, we denote by $J g : \R^d \to \R^{r \times d}$ its Jacobian. 
If $r = 1$, we denote by $\nabla g$ the gradient of $g$ and $\bH g$ its Hessian. If $r = d$, $\nabla \cdot g$ denotes the divergence of $g$, i.e., the trace of the Jacobian. We also denote by $\Delta g$ the Laplacian of $g$, where $\Delta g = \nabla \cdot\nabla g$. We also denote 
$I$ the identity matrix or operator.\ak{check this}

For a positive semi-definite kernel $k : \R^d \times \R^d \to \R$, its  RKHS $\Hk$ is a Hilbert space with inner product $\ps{\cdot,\cdot}_{\Hk}$ and norm $\Vert \cdot \Vert_{\Hk}$.
For $q \in \cP_2(\R^d)$ such that $\int k(x,x) dq(x)<\infty$, the inclusion operator $\iota_{q} : \Hk \to L^2(q),\,f\mapsto f$ is a bounded operator with its adjoint being $\iota_q^\ast: L^2(q) \rightarrow \Hk,\,f  \mapsto \int k(x,\cdot)f(x) dq(x)$ \citep[Theorem 4.26 and 4.27]{steinwart2008support}. 
The covariance operator w.r.t.~$q$ is defined as $\Sigma_{q} = \int k(\cdot, x) \otimes k(\cdot, x) d q(x)=\iota_{q}^\ast\iota_{q}$, where $(a\otimes b)c = \langle b,c \rangle_{\Hk} a$ for $a,b,c\in \Hk$. It can also be written $\Sigma_{q} = \int_{\R^d}
\varphi(x) \varphi(x)^* dq(x)$
where $*$ denotes the transposition in $\cH$ (recall that for $u\in \cH$, $uu^*:\cH\to \cH$  denotes the operator $uu^*(f)=\ps{f,u}_{\cH}u$ for any $f\in \cH$).

\paragraph{Kernel Kullback-Leibler divergence ($\KKL$).} For $p,q\in \cP(\X)$, the kernel Kullback-Leibler divergence ($\KKL$) is defined in \cite{bach2022information} as:
\begin{equation}\label{eq:kkl}
    \KKL(p || q) := 
     \Tr(\Sigma_{p}\log \Sigma_{p})-\Tr( \Sigma_{p}\log \Sigma_{q})
    = \sum\limits_{(\lambda,\gamma) \in \Lambda_p\times \Lambda_q}
    \lambda \log\left(\frac{\lambda}{\gamma}\right)\langle f_{\lambda}, g_{\gamma} \rangle_{\cH}^2 .
\end{equation}
where $\Lambda_p$ and $\Lambda_q$ are the set of eigenvalues of the covariance operators  $\Sigma_p$ and $\Sigma_q$, with associated eigenvectors $(f_{\lambda})_{\lambda \in \Lambda_p}$ and $(g_{\gamma})_{\gamma \in \Lambda_q}$. The $\KKL$~\eqref{eq:kkl} evaluates the  Kullback-Leibler divergence between operators on Hilbert Spaces, that is well-defined for any couple of positive Hermitian operators with finite trace, at the operators $\Sigma_p$ and $\Sigma_q$. 
From \citet[Proposition 4]{bach2022information}, if $p$ and $q$ are supported on compact subset of $\X$, and if $k$ is a continuous positive definite kernel  with $k(x,x) = 1$ for all $x \in \X$, and if $k^2$ is universal \citep[Definition 4.52]{steinwart2008support}, then $\KKL(p||q) = 0 $ if and only if $p = q$.  In \cite{bach2022information}, it also was proven that the $\KKL$ is upper bounded by the (standard) KL-divergence between probability distributions (see Proposition 4 therein) and lower bounded by the same KL but evaluated at smoothed versions of the distributions, where the smoothing is a convolution with respect to a specific kernel (see Section 4 therein). Thus, the $\KKL$  defines a novel divergence between probability measures. It defines then an interesting candidate as to compare probability distributions, for instance when used as an optimization objective over $\cP(\X)$, in order to approximate a target distribution $q$.

\paragraph{Definition of the regularized KKL.} A major issue that the $\KKL$ shares with the standard Kullback-Leibler divergence between probability distributions, is that it diverges if the support of $p$ is not included in the one of $q$ \eqref{eq:kkl}. Indeed, for the $\KKL(p||q)$ to be finite, we need $\Ker(\Sigma_q)\subset \Ker(\Sigma_p)$. This condition is satisfied when the support of $p$ is included in the support of $q$. Indeed, if $f \in \Ker(\Sigma_q)$, then $\langle f, \Sigma_q f \rangle_{\cH} = \int_{\X} f(x)^2 dq(x) = 0$, and so $f$ is zero on the support of $q$, then also on the support of $p$. Hence, the $\KKL$ is not a convenient discrepancy when $q$ is a discrete measure (in particular, if $p$ is also discrete with different support than $q$). A simple fix that we propose in this paper is to consider a regularized version of $\KKL$ which is, for $\alpha \in ]
0,1[$,
\begin{equation}\label{eq:regularized_kl}
\KKL_{\alpha}(p || q)  := \KKL(p|| (1-\alpha)q+\alpha p) = \Tr(\Sigma_p \log \Sigma_p) - \Tr(\Sigma_p \log ((1-\alpha) \Sigma_q + \alpha \Sigma_p)).
\end{equation}
The advantage of this definition is that $\KKL_{\alpha}$ is finite for any distribution $p,q$. It smoothes the distribution $q$ by mixing it with the distribution $p$, to a degree determined
by the parameter $\alpha$. 
This divergence approximates the original $\KKL$ divergence without requiring the distribution 
$
p$ to be absolutely continuous with respect to 
$q$ for finiteness.
Moreover, for any $\alpha \in ]
0,1[$, $\KKL_{\alpha}(p||q)=0$ if and only if $p = q$. As $\alpha\to 0$, we recover the original $\KKL$ \eqref{eq:kkl}, and as $\alpha\to 1$, this quantity converges pointwise to zero.

\begin{remark}
The regularization we consider in 
\eqref{eq:regularized_kl} has also been considered for the standard KL divergence \citep{lee2000measures}. These objects, as well as their symmetrized version,  were also referred to in the literature as skewed divergences \citep{kimura2021alpha}.  
The most famous one is Jensen-Shannon divergence, recovered as a symmetrized skewed KL  divergence for $\alpha = \frac{1}{2}$, that is defined as $\JS(p||q)=\KL(p||\frac{1}{2}p+\frac{1}{2}q)+\KL(q||\frac{1}{2}p+\frac{1}{2}q)$.
\end{remark}

\section{Skewness and concentration of the regularized $\KKL$}\label{sec:skew_and_empirical_results}

In this section we study the skewness of the regularized $\KKL$ due to the introduction of the parameter $\alpha$, as well as its concentration properties for empirical measures. 

\paragraph{Skewness.} We will first analyze how the regularized $\KKL$ behaves with respect to the regularization parameter $\alpha$. First, we show it is monotone with respect to $\alpha$ in the following Proposition.
\begin{proposition}\label{prop:monotony_alpha}
Let $p \ll q$. The function $\alpha \mapsto \KKL_{\alpha}(p||q)$ is decreasing on $[0,1]$. 
\end{proposition}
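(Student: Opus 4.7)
\begin{sproof}
Let $g(\alpha) := \KKL_\alpha(p\|q) = \KKL\bigl(\Sigma_p \,\big\|\, A(\alpha)\bigr)$ with $A(\alpha) := (1-\alpha)\Sigma_q + \alpha \Sigma_p$. The plan is to establish three elementary facts: (i) $g$ is convex on $[0,1]$, (ii) $g(1) = 0$, (iii) $g \geq 0$ on $[0,1]$. Once these hold, convexity combined with the boundary value $g(1) = 0$ will force monotonicity by an elementary one-line argument.

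The key step is (i). The Umegaki quantum relative entropy $(B,C) \mapsto \Tr B(\log B - \log C)$ is jointly convex on positive trace-class operators (Lindblad's theorem, a consequence of Lieb's concavity theorem). Since the first slot of $\KKL(\Sigma_p \| A(\alpha))$ is fixed at $\Sigma_p$ and the second slot $A(\alpha)$ is affine in $\alpha$, joint convexity specialises to ordinary convexity of $g$: for $\alpha_1,\alpha_2 \in [0,1]$ and $t\in[0,1]$, one has $A(t\alpha_1+(1-t)\alpha_2) = tA(\alpha_1) + (1-t)A(\alpha_2)$, and applying joint convexity to the pairs $(\Sigma_p, A(\alpha_1))$, $(\Sigma_p, A(\alpha_2))$ with weights $t, 1-t$ yields $g(t\alpha_1+(1-t)\alpha_2) \leq t\, g(\alpha_1) + (1-t)\, g(\alpha_2)$.

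For (ii), $A(1) = \Sigma_p$, so $g(1) = \KKL(\Sigma_p\|\Sigma_p) = 0$ is immediate from \eqref{eq:kkl} (all eigenvalue ratios equal $1$). For (iii), under the standing kernel normalisation $k(x,x)\equiv 1$ both $\Sigma_p$ and $A(\alpha)$ have trace $1$, so Klein's inequality gives $g(\alpha) \geq 0$.

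Finally, for $0 \leq a < b \leq 1$, write $b = \tfrac{1-b}{1-a}\,a + \tfrac{b-a}{1-a}\cdot 1$; convexity and $g(1)=0$ give
\[
 g(b) \;\leq\; \tfrac{1-b}{1-a}\, g(a) + \tfrac{b-a}{1-a}\, g(1) \;=\; \tfrac{1-b}{1-a}\, g(a) \;\leq\; g(a),
\]
where the last inequality uses $g(a)\geq 0$ and $\tfrac{1-b}{1-a}\in[0,1]$. Hence $g$ is non-increasing on $[0,1]$. The main subtlety is justifying Lindblad's joint convexity in the RKHS setting: the regularisation makes $A(\alpha)$ strictly positive on the range of $\Sigma_p+\Sigma_q$ for $\alpha>0$, so the operator logarithms are well-defined and the theorem applies directly; the hypothesis $p\ll q$ handles the boundary $\alpha=0$ by ensuring $g(0) = \KKL(p\|q)$ is finite, as discussed after \eqref{eq:regularized_kl}.
\end{sproof}
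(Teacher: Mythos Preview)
Your proof is correct and takes a genuinely different route from the paper. The paper argues directly: for $\alpha' < \alpha$ it decomposes $\KKL_\alpha - \KKL_{\alpha'}$ into two pieces, recognises the first as $-\KKL(\alpha p + (1-\alpha)q \,\|\, \alpha' p + (1-\alpha')q) \le 0$, and handles the second via the integral identity \eqref{eq:identity_KKL}, showing the integrand is a positive quadratic form times the negative scalar $(1-\alpha)(\alpha'-\alpha)$. Your argument instead deduces monotonicity from three structural facts---convexity of $g$, $g(1)=0$, and $g\ge 0$---and the convexity step is where the content lives. Invoking Lindblad's joint convexity works but is heavier than needed: since the first slot is fixed at $\Sigma_p$, it suffices that $C \mapsto -\Tr(\Sigma_p \log C)$ is convex, which follows directly from operator concavity of the logarithm (itself a consequence of the very integral representation \eqref{eq:identity_KKL} together with operator convexity of $A \mapsto (A+\beta I)^{-1}$); this sidesteps the infinite-dimensional subtlety you flag at the end. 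Your approach is more conceptual and arguably cleaner for this qualitative statement; the paper's hands-on decomposition, on the other hand, is the template that drives the quantitative skewness bound in Proposition~\ref{prop:conv-alpha}, so it earns its keep there.
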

\Cref{prop:monotony_alpha} shows that the regularized KKL shares a similar monotony behavior than the regularized, or skewed, (standard) KL between probability distributions, as recalled in \Cref{sec:skewed_kl}. The proof of \Cref{prop:monotony_alpha} can be found in \Cref{sec:proof_monotony_alpha}. It relies on the positivity of the $\KKL$ divergence, and the use of the identity 
\citep{ando1979concavity}
\begin{equation}\label{eq:identity_KKL}
\Tr(\Sigma_p (\log \Sigma_p - \log \Sigma_q)) = \int_{0}^{+\infty} \Tr(\Sigma_p(\Sigma_p+\beta I)^{-1} )- \Tr(\Sigma_q(\Sigma_q+\beta I)^{-1})d\beta,
\end{equation}
where $I$ is the identity operator, that is used in all our proofs.  
We now  fix $\alpha \in ]0,1[$ and provide a quantitative result about the deviation of the regularized (or skewed) $\KKL$ to its original counterpart.

\begin{proposition}\label{prop:conv-alpha} Let $p,q \in \cP(\R^d)$. 
Assume that $p\ll q$ and that $\frac{dp}{dq} \leqslant \frac{1}{\mu} $ for some $\mu > 0$. Then,
    \begin{equation}
       |\KKL_{\alpha}(p || q) - \KKL(p || q)| \leqslant \left(\alpha \left(1+\frac{1}{\mu}\right) +\frac{ \alpha^2}{1-\alpha} \left(1+\frac{1}{\mu^2}\right) \right) |\Tr \left(\Sigma_p \log \Sigma_q\right)|.
    \end{equation}
\end{proposition}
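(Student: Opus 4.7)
The plan is to rewrite the difference as $\KKL(p||q) - \KKL_\alpha(p||q) = \Tr(\Sigma_p [\log \Sigma_\alpha - \log \Sigma_q])$, where $\Sigma_\alpha := (1-\alpha)\Sigma_q + \alpha \Sigma_p$, and to apply the operator-integral representation $\log X = \int_0^\infty [(1+\beta)^{-1} I - (X+\beta I)^{-1}]\,d\beta$ (the identity underlying~\eqref{eq:identity_KKL}), which turns this into $\int_0^\infty \Tr(\Sigma_p [R_\beta(\Sigma_q) - R_\beta(\Sigma_\alpha)])\,d\beta$ with $R_\beta(S) := (S+\beta I)^{-1}$. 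Iterating the resolvent identity $R_\beta(\Sigma_q) - R_\beta(\Sigma_\alpha) = \alpha R_\beta(\Sigma_q)(\Sigma_p - \Sigma_q) R_\beta(\Sigma_\alpha)$ once on the rightmost resolvent produces the two-term expansion $\KKL - \KKL_\alpha = \alpha I_1 - \alpha^2 I_2$, with the $\chi^2$-type linear term $I_1 := \int_0^\infty \Tr(\Sigma_p R_\beta(\Sigma_q)(\Sigma_p - \Sigma_q) R_\beta(\Sigma_q))\,d\beta$ and the quadratic remainder $I_2 := \int_0^\infty \Tr(\Sigma_p R_\beta(\Sigma_q)(\Sigma_p - \Sigma_q) R_\beta(\Sigma_q)(\Sigma_p - \Sigma_q) R_\beta(\Sigma_\alpha))\,d\beta$, so that $|\KKL - \KKL_\alpha| \leq \alpha |I_1| + \alpha^2 |I_2|$.

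Two operator inequalities drive the estimate: (i)~$\Sigma_p \preceq (1/\mu)\Sigma_q$, which follows from $dp/dq \leq 1/\mu$ via $\langle f, \Sigma_p f\rangle_{\cH} = \int f^2\,dp \leq (1/\mu)\int f^2\,dq = (1/\mu)\langle f, \Sigma_q f\rangle_{\cH}$; and (ii)~$R_\beta(\Sigma_\alpha) \preceq (1-\alpha)^{-1}R_\beta(\Sigma_q)$, from $\Sigma_\alpha + \beta I = (1-\alpha)(\Sigma_q + \beta I) + \alpha(\Sigma_p + \beta I) \succeq (1-\alpha)(\Sigma_q + \beta I)$. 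Using the spectral identity $\int_0^\infty R_\beta(\Sigma_q)\Sigma_q R_\beta(\Sigma_q)\,d\beta = P_{\mathrm{range}(\Sigma_q)}$ and $\Tr(\Sigma_p P_{\mathrm{range}(\Sigma_q)}) = 1$ (by $p \ll q$), the linear term simplifies to $I_1 = \int_0^\infty \Tr(\Sigma_p R_\beta(\Sigma_q)\Sigma_p R_\beta(\Sigma_q))\,d\beta - 1$, to be compared with the representation $|\Tr(\Sigma_p \log \Sigma_q)| = \int_0^\infty[\Tr(\Sigma_p R_\beta(\Sigma_q)) - (1+\beta)^{-1}]\,d\beta$. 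The target bound $|I_1| \leq (1+1/\mu)|\Tr(\Sigma_p\log\Sigma_q)|$ is the operator version of the scalar inequality $\chi^2(p||q) \leq (1+1/\mu)|\Tr(\Sigma_p\log\Sigma_q)|$, which in the commuting case follows from the pointwise chain $\psi^2 - 1 \leq (1+1/\mu)(\psi - 1) \leq (1+1/\mu)\psi\log\psi \leq (1+1/\mu)\psi(-\log q)$ for $\psi := dp/dq \geq 1$ (using $\psi \leq 1/\mu$, then $\log\psi \geq 1 - 1/\psi$, then $\psi q = p \leq 1$), and is trivial for $\psi < 1$. The quadratic remainder is handled by the same strategy: (ii) extracts the $(1-\alpha)^{-1}$ factor and (i), applied to each of the two $\Sigma_p$ factors, produces the $(1+1/\mu^2)$ coefficient, giving $|I_2| \leq (1-\alpha)^{-1}(1+1/\mu^2)|\Tr(\Sigma_p\log\Sigma_q)|$. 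Combining the bounds on $\alpha |I_1|$ and $\alpha^2 |I_2|$ yields the claim.

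The hardest step is the non-commutative lift of the $I_1$ bound: the scalar chain $\psi^2 - 1 \leq (1+1/\mu)\psi(-\log q)$ is tight pointwise, but when $\Sigma_p$ and $\Sigma_q$ do not commute it must be extended by working in the spectral basis of $\Sigma_q$ and controlling the off-diagonal cross-terms of $\Sigma_p$; the relevant operator-theoretic tools are the positivity-based Cauchy--Schwarz bound $|\langle g_i, \Sigma_p g_j\rangle_{\cH}|^2 \leq \langle g_i, \Sigma_p g_i\rangle_{\cH} \langle g_j, \Sigma_p g_j\rangle_{\cH}$ together with the integrated-resolvent identity $\int_0^\infty d\beta/[(\gamma_i + \beta)(\gamma_j + \beta)] = (\log \gamma_i - \log \gamma_j)/(\gamma_i - \gamma_j) \leq 1/\sqrt{\gamma_i \gamma_j}$, which together lift the scalar inequalities to the off-diagonal contributions appearing in $I_1$.
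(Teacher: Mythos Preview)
Your decomposition $\KKL - \KKL_\alpha = \alpha I_1 - \alpha^2 I_2$ via the integral representation~\eqref{eq:identity_KKL} and the iterated resolvent identity is exactly the paper's strategy (the two terms are called $(a)$ and $(b)$ there). One structural point: the paper iterates so that $R_\beta(\Sigma_\alpha)$ lands in the \emph{middle} of the quadratic remainder, giving an integrand of the form $\Tr\bigl(X^* R_\beta(\Sigma_\alpha) X\bigr)\ge 0$ with $X=(\Sigma_q-\Sigma_p)R_\beta(\Sigma_q)\Sigma_p^{1/2}$. This is what makes the bound $R_\beta(\Sigma_\alpha)\preceq(1-\alpha)^{-1}R_\beta(\Sigma_q)$ directly applicable (trace of an operator inequality against a positive operator). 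With your placement of $R_\beta(\Sigma_\alpha)$ on the right end of $I_2$, the integrand is no longer of sandwiched positive type, and applying your inequality~(ii) needs extra justification.

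The substantive gap is in $I_1$. The paper does \emph{not} pass through a $\chi^2$ interpretation, a pointwise scalar chain, or any spectral-basis Cauchy--Schwarz: it bounds the integrand directly in $\beta$ using $(\Sigma_q+\beta I)^{-1}\Sigma_q(\Sigma_q+\beta I)^{-1}\preceq(\Sigma_q+\beta I)^{-1}$ and $(\Sigma_q+\beta I)^{-1}\Sigma_p(\Sigma_q+\beta I)^{-1}\preceq\mu^{-1}(\Sigma_q+\beta I)^{-1}$, then traces against $\Sigma_p$ and integrates. Your proposed non-commutative lift, by contrast, does not close. Writing $P_{ij}=\langle g_i,\Sigma_p g_j\rangle_{\cH}$ and carrying out the off-diagonal Cauchy--Schwarz $|P_{ij}|^2\le P_{ii}P_{jj}$ together with your integrated-resolvent bound $(\log\gamma_i-\log\gamma_j)/(\gamma_i-\gamma_j)\le 1/\sqrt{\gamma_i\gamma_j}$ yields
\[
I_1 \;\le\; \Bigl(\textstyle\sum_i P_{ii}/\sqrt{\gamma_i}\Bigr)^2-1 \;\le\; \textstyle\sum_i P_{ii}/\gamma_i - 1,
\]
and this is \emph{not} controlled by $(1+1/\mu)\sum_i P_{ii}|\log\gamma_i|$: the pointwise inequality $1/\gamma-1\le(1+1/\mu)|\log\gamma|$ it would require fails as $\gamma\to 0^+$ (the left side blows up like $1/\gamma$, the right only like $|\log\gamma|$), and the constraint $P_{ii}\le\gamma_i/\mu$ does not rescue it. The Cauchy--Schwarz step is too lossy here; drop the detour and follow the paper's direct operator-monotonicity argument on the integrand.
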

\Cref{prop:conv-alpha} recovers a similar quantitative bound than the one we can obtain for the standard KL between probability distributions, see \Cref{sec:skewness_kl}; and state that the skewness of the regularized $\KKL$ can be controlled by the regularization parameter $\alpha$, especially when the latter is small. However, the tools used to derive this inequality are completely different by nature than for the KL case. Its complete proof 
can be found in \Cref{sec:proof_conv-alpha}, but we provide here a sketch.
\begin{sproof} Let $\N=\alpha \Sigma_p+(1-\alpha)\Sigma_q$. We write $\KKL(p || q)_{\alpha} - \KKL(p || q)  =   \Tr \Sigma_{p} \log \Sigma_{q} - \Tr \Sigma_{p} \log \N $ that we write as \eqref{eq:identity_KKL}. In order to upper bound this integral we use the operator equalities, for two operators $A$ and $B$, $A^{-1} - B^{-1} = A^{-1}(B-A)B^{-1} = A^{-1}(B-A)A^{-1} - A^{-1}(B-A)B^{-1}(B-A)A^{-1}$ which we apply to $A= \N + \beta I$ and $B = \Sigma_{q} + \beta I$. We then use the assumption  $\mu \Sigma_p \preccurlyeq \Sigma_q$ and carefully apply upper bounds on positive semi-definite operators, using the matrix inequality results from \Cref{sec:background_operator_monotony}, to conclude the proof.
\end{sproof}

\paragraph{Statistical properties.} We now focus on the regularized $\KKL$ for empirical measures and derive finite-sample guarantees. 
\begin{proposition}\label{prop:empirical}  Let $p,q \in \cP(\R^d)$. 
Assume that $p\ll q$ with $\frac{dp}{dq} \leqslant \frac{1}{\mu}$ for some $0<\mu \leqslant 1$ and let $\alpha \leqslant \frac12$. We remind that $\varphi(x)$ is the feature map of $x \in \R^d$ in the RKHS $\cH$. Assume also that $c = \int_{0}^{+\infty} \sup_{x\in \X} \langle \varphi(x), (\Sigma_p + \beta I)^{-1} \varphi(x) \rangle_{\cH}^2 d \beta$ is finite. Let $\hat{p}$, $\hat{q}$ supported on $n,\;m$ i.i.d. samples from $p$ and $q$ respectively. We have:
\begin{multline}\label{eq:final_empirical_bound}
\mathbb{E} | \KKL_{\alpha}(\hat{p}||\hat{q}) - \KKL_{\alpha}(p||q) | \leqslant \frac{35}{\sqrt{m \wedge n}} \frac1{\alpha \mu}(2 \sqrt{c} + \log n) \\
+ \frac{1}{m \wedge n} \left(1+\frac1{\mu} +   \frac{c (24 \log n)^2}{\alpha \mu^2} (1 + \frac{n}{m \wedge n})\right).
\end{multline}
\end{proposition}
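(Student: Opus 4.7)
The plan is to decompose $\KKL_\alpha(\hat p|| \hat q) - \KKL_\alpha(p|| q)$ so as to isolate the empirical fluctuations $\Sigma_{\hat p}-\Sigma_p$ and $\Sigma_{\hat q}-\Sigma_q$, then rewrite every logarithm via the integral identity \eqref{eq:identity_KKL}, and finally combine this with standard operator concentration. Writing $\N=(1-\alpha)\Sigma_q+\alpha \Sigma_p$ and $\hatN=(1-\alpha)\Sigma_{\hat q}+\alpha\Sigma_{\hat p}$, the decomposition I will use is
\begin{equation*}
\KKL_\alpha(\hat p|| \hat q)-\KKL_\alpha(p|| q)
= \Tr\!\big((\Sigma_{\hat p}-\Sigma_p)(\log \Sigma_{\hat p}-\log \hatN)\big)
+ \Tr\!\big(\Sigma_p[(\log \Sigma_{\hat p}-\log \Sigma_p)-(\log \hatN -\log \N)]\big),
\end{equation*}
so that the first summand pairs one empirical fluctuation against a log-ratio, while the second pairs the population covariance $\Sigma_p$ against a difference of two log-differences, yielding a quadratic-order remainder.

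For every logarithmic difference appearing above I will invoke \eqref{eq:identity_KKL} to convert it into an integral over $\beta \in (0,\infty)$ of a trace of resolvent differences. The first-order identity $(A+\beta I)^{-1}-(B+\beta I)^{-1}=(B+\beta I)^{-1}(B-A)(A+\beta I)^{-1}$, iterated once more exactly as in the sketch of \Cref{prop:conv-alpha}, linearises each resolvent difference in $\Sigma_{\hat p}-\Sigma_p$ or $\Sigma_{\hat q}-\Sigma_q$ up to a quadratic remainder. The density-ratio assumption $dp/dq\leq 1/\mu$ gives $\mu\Sigma_p\preccurlyeq \Sigma_q$, and regularisation gives $\N\succcurlyeq \alpha \Sigma_p$ and $\hatN \succcurlyeq \alpha \Sigma_{\hat p}$. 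Combining these, every inverse $(\N+\beta I)^{-1}$ or $(\hatN+\beta I)^{-1}$ is dominated (up to constants) by $(\alpha\mu)^{-1}(\Sigma_p+\beta I)^{-1}$, which accounts for the $1/(\alpha\mu)$ and $1/(\alpha\mu^2)$ prefactors. Taking the supremum over $\varphi(x)$ when evaluating quadratic forms then collapses the remaining $\beta$-integral onto the constant $c$.

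The empirical fluctuations themselves are controlled by a standard Bernstein-type inequality for sums of i.i.d.\ bounded self-adjoint operators, yielding $\mathbb{E}\|\Sigma_{\hat p}-\Sigma_p\|_{\mathrm{HS}}\lesssim 1/\sqrt n$ and $\mathbb{E}\|\Sigma_{\hat q}-\Sigma_q\|_{\mathrm{HS}}\lesssim 1/\sqrt m$, using $\|\varphi(x)\varphi(x)^*\|_{\mathrm{HS}}\leq k(x,x)\leq 1$. The linear-in-fluctuation terms produce the $1/\sqrt{m\wedge n}$ contribution of \eqref{eq:final_empirical_bound}, while the quadratic remainders produce the $1/(m\wedge n)$ contribution; the asymmetric factor $(1+n/(m\wedge n))$ records the case where a $p$-fluctuation must be paired with a resolvent of $\hatN$, whose effective regularisation scales only with the $n$ samples supporting $\alpha\Sigma_{\hat p}$.

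The main obstacle is the logarithm $\log \Sigma_{\hat p}$ in the first summand: unlike $\log \hatN$, it is not protected by the regularisation parameter $\alpha$, and the smallest positive eigenvalue of $\Sigma_{\hat p}$ is only of order $1/n$ (the $n\times n$ Gram matrix has trace normalised to $1$ and eigenvalues in $[0,1]$). Bounding $\|\log \Sigma_{\hat p}\|$ on the range of $\Sigma_{\hat p}$ by $\log n$ deterministically is what produces the $\log n$ and $(\log n)^2$ factors in \eqref{eq:final_empirical_bound}; the delicate bookkeeping is to track where this $\log n$ enters linearly (in the $1/\sqrt{m\wedge n}$ term) versus quadratically (in the $1/(m\wedge n)$ term, through the second-order resolvent remainder), while making sure all cross-terms between $\Sigma_{\hat p}-\Sigma_p$ and $\Sigma_{\hat q}-\Sigma_q$ land at order $1/(m\wedge n)$ rather than $1/\sqrt{mn}$.
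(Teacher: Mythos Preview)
Your decomposition is algebraically correct as an identity, but each of the two summands is individually ill-defined. For continuous $p$ with a universal kernel, $\Sigma_p$ has infinite rank, while $\Sigma_{\hat p}$ and $\hatN$ are finite-rank; hence $\Tr(\Sigma_p\log\Sigma_{\hat p})$ and $\Tr(\Sigma_p\log\hatN)$ are both $-\infty$. Your first summand therefore contains $-\Tr(\Sigma_p\log\Sigma_{\hat p})+\Tr(\Sigma_p\log\hatN)=+\infty-\infty$, and likewise the second. The paper avoids this by splitting into the entropy piece $\Tr(\Sigma_{\hat p}\log\Sigma_{\hat p})-\Tr(\Sigma_p\log\Sigma_p)$ and the cross piece $\Tr(\Sigma_{\hat p}\log\hatN)-\Tr(\Sigma_p\log\N)$, each of which is well-defined because $\Ker(\hatN)\subset\Ker(\Sigma_{\hat p})$ and $\Ker(\N)\subset\Ker(\Sigma_p)$.

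Your account of the $\log n$ is also incorrect. The smallest positive eigenvalue of $\Sigma_{\hat p}$ is \emph{not} of order $1/n$ in general: for smooth kernels the Gram-matrix eigenvalues decay fast (e.g.\ geometrically), so $\|\log\Sigma_{\hat p}\|$ on its range can be arbitrarily large and cannot be bounded deterministically by $\log n$. In the paper the $\log n$ has a completely different origin. The cross piece is written via \eqref{eq:identity_KKL} as $\int_0^\infty[\Tr\Sigma_p(\N+\beta I)^{-1}-\Tr\Sigma_{\hat p}(\hatN+\beta I)^{-1}]\,d\beta$ and the integral is split at $0<\beta_0<\beta_1$. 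On $[\beta_0,\beta_1]$ one controls the integrand not through a global HS bound but through concentration of the \emph{normalized} fluctuation $D=(\N+\beta I)^{-1/2}(\hatN-\N)(\N+\beta I)^{-1/2}$ in operator norm, conditioning on $\lambda_{\max}(D)\le u$; the threshold $\beta_0$ is chosen so that $C(\beta_0)\asymp (m\wedge n)/\log n$, which makes the bad event have probability $O(n^{-3/2})$, and the $\log n$ then enters via $\int_{\beta_0}^{\beta_1}C(\beta)\,d\beta\le \log n+2\sqrt{c(1+\beta_0)}$ with $\beta_1=\beta_0+n$. A plain HS-norm bound $\mathbb{E}\|\Sigma_{\hat p}-\Sigma_p\|_{\mathrm{HS}}\lesssim n^{-1/2}$ paired with $(\alpha\mu)^{-1}(\Sigma_p+\beta I)^{-1}$ does not close the argument, because neither $\int_0^\infty\|(\Sigma_p+\beta I)^{-1}\|_{op}\,d\beta$ nor $\int_0^\infty C(\beta)\,d\beta$ converges; the three-region splitting and the degrees-of-freedom estimate (\Cref{prop:degrees_of_freedom}) are essential, not just bookkeeping.
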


\begin{remark}\label{remark:stat_bound_alpha}
    It is possible to calculate a similar bound for the above proposition which does not require the condition $p \ll q$. This bound, which can be found at the end of \Cref{sec:final_proof_empirical_bound}, worsens as $\alpha$ approaches 0 because it scales in $O(\frac1{\alpha^2})$ instead of $O(\frac1{\alpha})$ above. 
\end{remark}
The latter proposition  extends significantly \citet[Proposition 7]{bach2022information} that provided an upper bound on the entropy term only, i.e., the first term in \eqref{eq:kkl}:
    \begin{equation}    \label{eq:empirical_entropy_term}
        \mathbb{E}[|\Tr(\Sigma_{\hat{p}} \log \Sigma_{\hat{p}}) - \Tr(\Sigma_p \log \Sigma_p)|] \leqslant \frac{1+c(8\log n)^2}{n} + \frac{17}{\sqrt{n}} (2 \sqrt{c} + \log n).
    \end{equation}
Our bound \eqref{eq:final_empirical_bound} is explicit in the number of samples $n,m$ for $\hat{p},\hat{q}$, and for $n=m$ we recover similar terms as  \eqref{eq:empirical_entropy_term}.
    Our contribution is to upper bound the cross term, i.e., the second term in \eqref{eq:kkl}, involving both $p$ and $q$. We do so by closely follow the proof of \cite[Proposition 7]{bach2022information} in order to bound the cross terms difference. In consequence, our proof involves technical intermediate results, among which 
    concentration of sums of random self-adjoint operators, and 
    estimation of degrees of freedom. The proof of \Cref{prop:empirical} can be found in \Cref{sec:whole_proof_empirical}, but we provide here a sketch.  
    
    \begin{sproof} We denote  $\hatN=\alpha \Sigma_{\hat{p}}+(1-\alpha)\Sigma_{\hat{q}}$ and $\N$ its population counterpart. 
        In order to bound the cross term we write $\Tr \Sigma_{\hat{p}} \log \hatN - \Tr \Sigma_{p} \log \N$ using \eqref{eq:identity_KKL}. We split  the integrals in three terms, with respect to two parameters $ 0 <\beta_0 <\beta_1 $ that we introduce: (a) one for $\beta$ between 0 and $\beta_0$, (b)  one for $\beta$ between $\beta_1$ and infinity and (c) an intermediate one.  The  $\beta_0$ quantity is chosen to be dependent of $m$ and $n$, so that it converge to zero as $n$ and $m$ go to infinity. This way, for (a) the integral between 0 and $\beta_0$ we simply have to bound $\Tr \Sigma_{p}  (\N+  \beta I)^{-1}$ and $\Tr \Sigma_{\hat{p}} (\hatN +  \beta I)^{-1}$ by constant or integrable quantities close to 0. Then, for (b), $\beta_1$  is chosen so that it goes to infinity when $n$ and $m$ go to infinity and (b) is bounded by $1/\beta_1$.
        Finally  we upper bound
        finely enough (c) to compensate for the fact that the bounds of the integrals tend towards 0 and infinity. $\qedhere$
    \end{sproof}

\section{Time-discretized regularized $\KKL$ gradient flow}\label{sec:optim_kkl}

In this section, we show that the regularized $\KKL$ can be implemented in closed-form for discrete measures, as well as its Wasserstein gradient, making its optimization tractable.

\paragraph{regularized $\KKL$ closed-form.} We first describe how to compute the regularized $\KKL$ for (any, not necessarily empirical) discrete measures in practice. This will be useful for the practical implementation of regularized $\KKL$ optimization coming next. We 
 provide a closed-form for the latter, involving kernel Gram matrices between supports of the discrete measures.

\begin{proposition}\label{prop:kkla_discrete} Let $\hat{p} = \frac{1}{n}\sum_{i=1}^n \delta_{x_i}$ and $\hat{q}=\frac{1}{m}\sum_{j=1}^m \delta_{y_j}$ two discrete distributions. 
Define $\Kp = (k(x_i,x_j))_{i,j=1}^n\in \R^{n \times n}$, $\Kq = (k(y_i,y_j))_{i,j=1}^m\in \R^{m\times m}$, $\Kpq=(k(x_i,y_j))_{i,j=1}^{n,m}\in \R^{n\times m}$. Then, for any $\alpha \in ]0,1[$, we have:
\begin{multline}\label{eq:kkla_discrete}
    \KKL_{\alpha}(\hat{p} || \hat{q}) =  \Tr\left(\frac1n \Kp \log \frac1n \Kp\right) - \Tr \left(I_{\alpha} K \log(K)\right),
    \\
    \text{ 
where } I_{\alpha}=\begin{pmatrix}
        \frac{1}{\alpha} I && 0 \\ 0 && 0
    \end{pmatrix} \text{ and } K = \begin{pmatrix}
        \frac{\alpha}{n} \Kp & \sqrt{\frac{\alpha(1-\alpha)}{nm}} \Kpq \\ \sqrt{\frac{\alpha(1-\alpha)}{nm}} \Kqp & \frac{1-\alpha}{m} \Kq
    \end{pmatrix}.
    \end{multline}
\end{proposition}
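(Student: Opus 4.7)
The plan is to write both covariance operators as products $\Phi D \Phi^*$ for a suitable ``feature matrix'' $\Phi \colon \R^N \to \mathcal{H}$ and a finite diagonal $D$, and then collapse operator traces on $\mathcal{H}$ to finite-dimensional matrix traces via the standard identity $\Tr(f(\Phi\Phi^*)) = \Tr(f(\Phi^*\Phi))$ (valid for any continuous $f$ with $f(0)=0$), together with its operator-valued refinement $f(\Phi\Phi^*)\Phi = \Phi\, f(\Phi^*\Phi)$.

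For the entropy term, introduce $\Phi_p \colon \R^n \to \mathcal{H}$, $\Phi_p e_i = \varphi(x_i)$, so that $\Sigma_{\hat p} = \tfrac{1}{n}\Phi_p \Phi_p^*$ while $\Phi_p^*\Phi_p = \Kp$. Applying the scalar identity to $f(\lambda) = \lambda\log\lambda$ (which vanishes at $0$) immediately gives $\Tr(\Sigma_{\hat p} \log \Sigma_{\hat p}) = \Tr(\tfrac{1}{n}\Kp \log \tfrac{1}{n}\Kp)$.

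For the cross term $\Tr(\Sigma_{\hat p} \log M)$ with $M := \alpha \Sigma_{\hat p} + (1-\alpha)\Sigma_{\hat q}$, assemble the concatenated feature map
\[
\Phi := \bigl[\,\sqrt{\alpha/n}\,\Phi_p \ \ \sqrt{(1-\alpha)/m}\,\Phi_q\,\bigr] \colon \R^{n+m} \to \mathcal{H}.
\]
A direct block computation gives $\Phi\Phi^* = M$ and $\Phi^*\Phi = K$, exactly the block Gram matrix of the statement. The crucial algebraic observation is that $\Sigma_{\hat p}$ itself factors through $\Phi$: with $I_\alpha$ as defined, one verifies $\Phi\, I_\alpha\, \Phi^* = \Sigma_{\hat p}$ (the $1/\alpha$ in the top-left block of $I_\alpha$ exactly cancels the $\sqrt{\alpha/n}$ factor on the first block of $\Phi$ and squared on the adjoint side). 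Combining these with cyclicity of the trace,
\[
\Tr(\Sigma_{\hat p}\log M) = \Tr\bigl(\Phi I_\alpha \Phi^* \log(\Phi\Phi^*)\bigr) = \Tr\bigl(I_\alpha\, \Phi^* \log(\Phi\Phi^*)\, \Phi\bigr) = \Tr(I_\alpha\, K \log K),
\]
where the last equality applies the intertwining $\log(\Phi\Phi^*)\Phi = \Phi\log(\Phi^*\Phi)$ to rewrite $\Phi^* \log(\Phi\Phi^*) \Phi = (\Phi^*\Phi)\log(\Phi^*\Phi) = K\log K$. Subtracting the two terms yields the announced formula.

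The main subtlety to handle carefully is that $M = \Phi\Phi^*$ is in general rank-deficient on $\mathcal{H}$, so $\log M$ is only defined on its range, and the intertwining $f(\Phi\Phi^*)\Phi = \Phi f(\Phi^*\Phi)$ for $f=\log$ must be justified on this restricted domain. The cleanest route is the (thin) SVD $\Phi = \sum_i \sigma_i u_i v_i^\top$ with $\sigma_i>0$: then $\Phi\Phi^* = \sum_i \sigma_i^2 u_i u_i^*$ and $\Phi^*\Phi = \sum_i \sigma_i^2 v_i v_i^\top$ share the same nonzero spectrum, and both sides of the intertwining collapse to $\sum_i \sigma_i (\log \sigma_i^2)\, u_i v_i^\top$. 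The trace $\Tr(\Sigma_{\hat p} \log M)$ is finite to begin with because $M \succeq \alpha \Sigma_{\hat p}$ forces $\Ker(M) \subset \Ker(\Sigma_{\hat p})$, so the kernel of $M$ contributes nothing. The remainder is straightforward bookkeeping of the scaling factors in $\Phi$ and $I_\alpha$.
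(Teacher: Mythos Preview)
Your proof is correct and follows essentially the same strategy as the paper: both introduce the concatenated feature map $\Phi$ (the paper's $\psi^*$), identify $\Phi\Phi^*$ with the mixture operator and $\Phi^*\Phi$ with $K$, write $\Sigma_{\hat p}=\Phi I_\alpha\Phi^*$, and collapse the operator trace to finite dimensions by cyclicity. The only minor variation is in the transfer lemma: the paper states $g(\psi^T\psi)=\psi^T(\psi\psi^T)^{-1/2}g(\psi\psi^T)(\psi\psi^T)^{-1/2}\psi$ (which assumes $K=\psi\psi^T$ invertible), whereas your intertwining identity $f(\Phi\Phi^*)\Phi=\Phi f(\Phi^*\Phi)$ via the thin SVD handles the rank-deficient case directly and is arguably cleaner.
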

\Cref{prop:kkla_discrete} extends non-trivially the result of \citet[Proposition 6]{bach2022information} that only provided a closed-form for the entropy term $\Tr(\Sigma_{\hat{p}}\log(\Sigma_{\hat{p}}))$, that corresponds to our first term in \Cref{eq:kkla_discrete}.
Its complete proof can be found in \Cref{sec:proof_kkla_discrete} but we provide here a sketch.
\begin{sproof}
Our goal there is to derive a closed-form for the cross-term in $\hat{p},\hat{q}$ of the $\KKL$, that is $\Tr(\Sigma_{\hat{p}} \log(\alpha \Sigma_{\hat{p}}+(1-\alpha)\Sigma_{\hat{q}})).$ 
It is based on the observation that if we define $\phi_x = (\varphi(x_1),\dots, \varphi(x_n))^*$ ,  $\phi_y =  (\varphi(y_1),\dots, \varphi(y_m))^*$ and $\psi$ the concatenation of $\sqrt{\frac{\alpha}{n}}\phi_x$ and $\sqrt{\frac{1-\alpha}{m}}\phi_y$,
 then the covariance operators write $\Sigma_{\hat{p}} = \psi^T I_{\alpha} \psi$ and $\alpha \Sigma_{\hat{p}} + (1-\alpha) \Sigma_{\hat{q}} = \psi^T \psi$. Then, the matrices $\Kp$ and $K$ write $\Kp = \psi I_{\alpha} \psi^T$ and $\psi \psi^T = K$. Finally, 
 we apply an intermediate result (\Cref{lem:psi}) to obtain 
 the expression of $\log (\alpha \Sigma_{\hat{p}} + (1-\alpha) \Sigma_{\hat{q}})$ as a function of $\log K$. $\qedhere$
\end{sproof}

\paragraph{Gradient flow and closed-form for the derivatives.} We now discuss how to optimize $p\mapsto \KKL_{\alpha}(p||q)$ for a given target distribution $q$. 
For a given functional $\cF: \cP_2(\R^d) \to \R^+$, 
a Wasserstein gradient flow of $\cF$ can be thought as an analog object to a Euclidean gradient flow in the metric space $(\cP_2(\R^d), W_2)$ \citep{santambrogio2017euclidean}, which defines a 
trajectory $(p_t)_{t \geq 0}$ in $\cP_2(\R^d)$
following the steepest descent for $\cF$ with
respect to the $W_2$ distance.
It can be characterized by a 
\emph{continuity equation}:
\begin{equation}\label{eq:wgf}
   \partial_t p_t + \nabla \cdot \left(p_t \nabla \cF'(p_t)\right )=0,
\end{equation}
where \ak{give more details}$\cF'(p):\R^d \to \R$ is the first variation of $\cF$ at $p\in \cP_2(\R^d)$. 
We recall that the first variation at $p\in \cP_2(\R^d)$ as defined in \citet[Lemma 10.4.1]{ambrosio2005gradient} is defined, if it exists, as the function $\cF':\R^d\to \R$ such that \begin{equation}\label{eq:first_var_def}
    \lim_{\epsilon \to 0} \frac{1}{\epsilon}\cF(p+ \epsilon \xi) - \cF(p)=\int \cF'(p)(x)d\xi(x),
\end{equation}
for any $\xi=q-p$, $q\in \cP_2(\R^d)$. 
To optimize $\KKL_{\alpha}$, it is then natural to consider its Wasserstein gradient flow and discretize it in time and space. Since $\KKL_{\alpha}$ is well-defined for discrete measures $\hat{p},\hat{q}$, we directly derive its first variation  for this setting. Our next result yields a closed-form for the first variation of  the regularized $\KKL$. 

\begin{proposition}\label{prop:first_variation} 
Consider $\hat{p}, \hat{q}$ and the matrices $\Kp,\;K$ as defined in \Cref{prop:kkla_discrete}.  Let $g(x) = \frac{\log x}{x}$.  Then, the first variation of $\cF=\KKL_{\alpha}(\cdot||\hat{q})$ at $\hat{p}$ is, for any $x\in \R^{d}$:
\begin{equation}\label{eq:first_var}
        \cF'(\hat{p})(x) = 1 + S(x)^T g(\Kp) S(x)  -T(x)^T g(K) T(x) - T(x)^T A T(x), 
   \end{equation}
   where
    \begin{align*}
        S(x) &= \left(
        \frac{1}{\sqrt{n}} k(x,x_1),\dots, \frac{1}{\sqrt{n}} k(x,x_n)\right), \;\; T(x) = \left(
        \sqrt{\frac{\alpha}{n}} k(x,x_1), \dots; \sqrt{\frac{1-\alpha}{m}} k(x,y_1),\dots \right),\\
      \text{ and }   A &= \sum_{j=1}^{n+m} \frac{\|\bm{a}_j \|^2}{\eta_j}  \bm{c}_j \bm{c}_j^T   +  \sum_{j \neq k} \frac{\log \eta_j - \log \eta_k}{\eta_j - \eta_k} \langle \bm{a}_j, \bm{a}_k \rangle \bm{c}_j \bm{c}_k^T,
    \end{align*}  
        where $(\bm{c}_j)_j$ are the eigenvectors of $K$, and $(\bm{a}_j)_j$ the vectors of first $n$ terms of $(\bm{c}_j)_j$.
\end{proposition}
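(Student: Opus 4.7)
The plan is to compute the first variation directly from~\eqref{eq:first_var_def}, by differentiating $\epsilon \mapsto \cF((1-\epsilon)\hat p + \epsilon\delta_x)$ at $\epsilon = 0$; this returns $\cF'(\hat p)(x)$ modulo an $x$-independent constant. Under this perturbation one has $\Sigma_{\hat p_\epsilon} = \Sigma_{\hat p} + \epsilon(\varphi(x)\varphi(x)^* - \Sigma_{\hat p})$ and $N_\epsilon = N + \alpha\epsilon(\varphi(x)\varphi(x)^* - \Sigma_{\hat p})$ with $N = \alpha\Sigma_{\hat p} + (1-\alpha)\Sigma_{\hat q}$, so only the $\varphi(x)\varphi(x)^*$ piece of $\delta\Sigma_{\hat p}$ carries $x$-dependence. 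Two standard matrix-calculus identities do the work: $\delta\Tr(A\log A) = \Tr((\log A + I)\delta A)$ for the entropy term, and the integral representation $\delta\log B = \int_0^\infty (B+sI)^{-1}\delta B(B+sI)^{-1}\,ds$ of the derivative of the operator logarithm (needed since $B$ and $\delta B$ do not commute in general) for the cross term. Combined with the assumption $k(x,x) = 1$, they yield, up to an $x$-independent constant,
\begin{equation*}
\cF'(\hat p)(x) = 1 + \varphi(x)^*\log\Sigma_{\hat p}\,\varphi(x) - \varphi(x)^*\log N\,\varphi(x) - \alpha\int_0^\infty \varphi(x)^*(N+sI)^{-1}\Sigma_{\hat p}(N+sI)^{-1}\varphi(x)\,ds.
\end{equation*}

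To recast the two logarithmic terms as Gram matrices, I would use the push-through identity $\log(A^*A) = A^* g(AA^*) A$ with $g(\lambda) = \log\lambda/\lambda$, verified by SVD and interpreted on the range. Applied with the scaled feature operator used in the proof of \Cref{prop:kkla_discrete} for which $A^*A = \Sigma_{\hat p}$ and $A\varphi(x) = S(x)$, this gives $\varphi(x)^*\log\Sigma_{\hat p}\,\varphi(x) = S(x)^T g(\Kp) S(x)$. The same identity applied with $\psi$ from \Cref{prop:kkla_discrete} --- for which $\psi^*\psi = N$, $\psi\psi^* = K$, and $\psi\varphi(x) = T(x)$ --- yields $\varphi(x)^*\log N\,\varphi(x) = T(x)^T g(K) T(x)$.

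The main technical step is the integral, which I would evaluate by spectral decomposition of the finite matrix $K = \sum_j \eta_j \bm{c}_j\bm{c}_j^T$. For each $\eta_j > 0$, $v_j := \psi^* \bm{c}_j/\sqrt{\eta_j}$ is a unit eigenvector of $N$ with eigenvalue $\eta_j$, and $(v_j)_j$ spans $(\ker N)^\perp$; meanwhile $\ker N = \ker\psi \subset \ker\Sigma_{\hat p}$ (since $\Sigma_{\hat p} = \psi^* I_\alpha \psi$), so components of $\varphi(x)$ lying in $\ker N$ are killed by $\Sigma_{\hat p}$ and may be discarded. Using $\psi\varphi(x) = T(x)$ produces the expansion coefficients $\langle v_j,\varphi(x)\rangle = \bm{c}_j^T T(x)/\sqrt{\eta_j}$, and a short computation using the block decomposition $\bm{c}_j = (\bm{a}_j;\bm{b}_j)$ gives $v_k^*\Sigma_{\hat p} v_j = \sqrt{\eta_j\eta_k}\,\langle\bm{a}_j,\bm{a}_k\rangle/\alpha$. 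The integrand then factorizes, and after the scalar integrations $\int_0^\infty ds/((\eta_j+s)(\eta_k+s)) = (\log\eta_j - \log\eta_k)/(\eta_j - \eta_k)$ for $\eta_j \neq \eta_k$ and $1/\eta_j$ for $\eta_j = \eta_k$, one obtains exactly $T(x)^T A T(x)$ with the stated $A$. The main obstacles are the non-commutativity in the log-derivative (forcing the integral representation) and the careful tracking of kernel components when passing between the operators $\Sigma_{\hat p}, N$ and the finite matrices $\Kp, K$.
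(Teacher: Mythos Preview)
Your proposal is correct and reaches the same endpoint as the paper, but the route is genuinely different in one respect. The paper computes the Fr\'echet derivatives of both $\Tr(\Sigma_{\hat p}\log\Sigma_{\hat p})$ and $\Tr(\Sigma_{\hat p}\log\hatN)$ via the Cauchy residue formula: it writes each term as $\sum_i f(\lambda_i)$, represents this as a contour integral $\frac{1}{2i\pi}\oint_\gamma f(z)\Tr((zI-\Sigma)^{-1})dz$, and then expands $(zI-\Sigma-\Delta)^{-1}-(zI-\Sigma)^{-1}$ to first order. The divided differences $\frac{\log\eta_j-\log\eta_k}{\eta_j-\eta_k}$ and the diagonal terms $1/\eta_j$ emerge as residues of $\log z/((z-\eta_j)(z-\eta_k))$. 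You instead use the elementary identity $\delta\Tr(A\log A)=\Tr((I+\log A)\delta A)$ for the entropy term and the real integral representation $\delta\log B=\int_0^\infty(B+sI)^{-1}\delta B(B+sI)^{-1}ds$ for the cross term, so the same divided differences appear as the scalar integrals $\int_0^\infty ds/((\eta_j+s)(\eta_k+s))$. Your argument is more elementary (no complex analysis) and fits naturally with the integral identity~\eqref{eq:identity_KKL} already used throughout the paper; the paper's contour approach is more general-purpose in that it handles any analytic spectral function uniformly. The conversion to Gram matrices via the push-through identity and the eigenvector bookkeeping $h_j\leftrightarrow\psi^*\bm c_j/\sqrt{\eta_j}$ is essentially identical in both proofs, and your explicit treatment of the $\ker N$ component of $\varphi(x)$ is a point the paper leaves implicit.
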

The proof of \Cref{prop:first_variation} can be found in \Cref{sec:proof_first_variation}, we provide a sketch below.
\begin{sproof} 
  Our proof deals separately with the entropy and the cross term, writing $\cF = \cF_1 + \cF_2$.
  Starting from the definition~\eqref{eq:first_var_def}, we 
 denote $\Delta = \varepsilon \Sigma_{\xi}$. For $\cF_1$, we write $\cF_1(\hat{p} + \varepsilon \xi) - \cF_1(\hat{p}) = \sum_{i=1}^n f(\lambda_i(\Sigma_{\hat{p}} + \Delta)) - f(\lambda_i(\Sigma_{\hat{p}}))$. To write this term, we use the residual formula, 
 which can be used to differentiate eigenvalues of functions. 
 Indeed, we can write, for an operator $A$ with finite number of positive eigenvalues, $\sum_{\lambda \in \Lambda(A)} f(\lambda) = \oint_{\gamma} f(z) \Tr((zI-A)^{-1}) dz$ where $\gamma$ is a loop in $\mathbb{C}$ surrounding all the positive eigenvalues of $A$. Applying this to our case, if we choose $\gamma$ such that it surrounds both the eigenvalues of $\Sigma_{\hat{p}}$ and of $\Sigma_{\hat{p}} + \Delta$, we obtain $\sum_{i=1}^n f(\lambda_i(\Sigma_{\hat{p}} + \Delta)) - f(\lambda_i(\Sigma_{\hat{p}})) = \frac{1}{2 i \pi}  \oint_{\gamma} f(z) \Tr((zI-\Sigma_{\hat{p}} - \Delta)^{-1}) - f(z) \Tr((zI-\Sigma_{\hat{p}})^{-1}) dz$. Using the identity $A^{-1} - B^{-1} = A^{-1}(B-A)A{-1} + o (B-A)$, the previous quantity becomes $\sum_{i=1}^n f(\lambda_i(\Sigma_{\hat{p}} + \Delta)) - f(\lambda_i(\Sigma_{\hat{p}})) =\frac{1}{2 i \pi}  \oint_{\gamma}  \sum_{k=1}^n  \frac{f(z)}{(z-\lambda_k)^2} dz \Tr(f_k^* f_k \Delta) + o(\varepsilon)$. Under the integral we recognise a holomorphic function with isolated singularities and we can therefore apply the residue formula again. Concerning $\cF_2$, we proceed in the same way, with the difference that as we have a cross term, eigenvectors will appear in the calculation and in the final result.
\end{sproof}

Leveraging the analytical form for the first variation given by \Cref{prop:first_variation}, the Wasserstein gradient of  $\cF=\KKL_{\alpha}(\cdot||\hat{q})$ at $p$ is given by $\nabla  \cF'(p):\R^d\to \R^d$ by taking the gradient with respect to $x$ in \Cref{eq:first_var}. Notice that the latter only involves derivatives with respect to the kernel $k$, and can be computed in $\mathcal{O}((n+m)^3)$ due to the singular value decomposition of the matrix $K$ defined in \Cref{prop:kkla_discrete}.

Starting from some initial distribution $p_0$, and for some given step-size $\gamma>0$, a forward (or explicit) time-discretization of \eqref{eq:wgf} corresponds to the Wasserstein gradient descent algorithm, and can be written at each discrete time iteration $l\ge 1$ as: 
\begin{equation}\label{eq:wgd}
p_{l+1}=(\Id-\gamma \nabla \cF'(p_l) )_{\#}p_{l}
\end{equation}
where $\Id$ is the identity map in $L^2(p_l)$ and $\#$ denotes the pushforward operation.
For discrete measures $\mu_n=\nicefrac{1}{n}\sum_{i=1}^n \delta_{x^i}$, we can define $F(X^n):=\cF(p_n)$ where $X^n=(x^1,\dots,x^n)$, since the functional $\cF$ is well defined for discrete measures. The Wasserstein gradient flow of $\cF$~\eqref{eq:wgf} becomes the standard Euclidean gradient flow of the particle based function $F$.
Furthermore, Wasserstein gradient descent \eqref{eq:wgd} writes as Euclidean gradient descent on the position of the particles.

\section{Related work}\label{sec:related_work}

\paragraph{Divergences based on kernels embeddings.} Kernels have been used extensively to design useful distances or divergences between probability distributions, as they provide several ways to represent probability distributions, e.g., through their kernel mean or covariance embeddings. 
The Maximum Mean Discrepancy (MMD)~\citep{gretton2012kernel} is maybe the most famous one. It is defined as the RKHS norm of the difference between the mean embeddings $m_p := \int k(x,\cdot)dp(x)$ and $m_q := \int k(x,\cdot)dq(x)$, i.e., $\MMD(p \| q) = \left\|m_p -m_q \right\|_{\mathcal{H}}$. When $k$ is characteristic, $\MMD(p \| q)=0$ if and only if $p = q$ \citep{sriperumbudur2010hilbert}. 
MMD belongs to the family of integral probability metrics~\citep{muller1997integral} as it can be written as $\MMD(p \| q) = \sup_{f\in\Hk, \|f\|_\Hk \leq 1} \E_p [f(X)] - \E_q [f(X)]$. Alternatively, it can be seen as an  $L^2$-distance between kernel density estimators.
 It became popular in statistics and machine learning through its applications in two-sample test \citep{gretton2012kernel}, or more recently in generative modeling \citep{binkowski2018demystifying}.

However, kernel mean embeddings are not the only way (and maybe not the most expressive) to represent probability distributions. For instance, MMD may  not be discriminative enough when the distributions differ only in their higher-order moments but have the same mean embedding. For this reason, several works have resorted to test statistics that incorporate the kernel covariance operator of the probability distributions. For instance, \cite{eric2007testing} construct a test statistic that 
resembles and regularizes the 
$\MMD(p \| q)$ by incorporating  covariance operators  (more precisely, $\|(\Sigma_{\frac{p+q}{2}}+\beta I)^{-1}(m_p-m_q)\|_{\cH}$) yielding
in some sense a chi-square divergence between the two distributions. This work has been recently generalized in \cite{hagrass2022spectral} to more general spectral regularizations, and in \cite{chen2024regularized} with a different covariance operator. A similar regularized MMD statistic is employed by \cite{balasubramanian2017optimality,hagrass2023spectralgof} in the context of the goodness-of-fit test.

\paragraph{Kernel variational approximation of the KL.} An alternative use of kernels to compute probability divergences is through approximation of variational formulations of $f$-divergences \citep{nguyen2010estimating,birrell2022optimizing}
of which $\KL$-divergence is an example. Indeed, the KL divergence between $p$ and $q$ writes 
$\sup_{g \in M_b} \int gdp - \int e^{g}dq $ where $M_b$ denotes the set of all bounded measurable functions on $\R^d$. 
For instance, \cite{glaser2021kale} consider a variational formulation of the KL divergence restricted to RKHS functions, namely the KALE divergence:
\begin{equation}\label{eq:kale_divergence}
 \KALE(p||q)=  (1 + \lambda) \max_{g \in \cH}
\int 
gdp - \int e^{g} dq -\frac{\lambda}{2}\|g\|^2_{\cH}. 
\end{equation}
Recently, \cite{neumayer2024wasserstein} extended the latter work and studied kernelized variational formulation of general $f$-divergences, referred to as  Moreau envelopes of f-divergences in RKHS, including the KALE as a particular case. They prove that these functionals are lower semi-continuous, and that  their Wasserstein gradient flows are well defined for smooth kernels (i.e., the functionals are $\lambda$-convex, and the subdifferential contains a single element). 
However, the KALE does not have a closed form expression (in constrast to the kernelized variational formulation of chi-square, which writes as a regularized MMD, see \citep{chen2024regularized}).  For discrete distributions $p$ and $q$ supported on $n$ atoms, the KALE divergence can be written a strongly convex $n$-dimensional problem, and can be solved using standard Euclidean
optimization methods. Still, this makes the simulation of KALE Wasserstein gradient flow (e.g., gradient descent on the positions of particles) computationally demanding, as it requires solving an inner optimization problem at each iteration. This inner optimization problem is solved calling another optimization algorithm. \cite{glaser2021kale} use various methods in their experiments, including Newton’s method (that scales as $\mathcal{O}(n^3)$ due to the matrix inversion), or less computationally demanding ones such as gradient descent (GD) or  coordinate descent. For large values of the regularization parameter $\lambda,$ using plain GD works reasonably well, but for small values of $\lambda$, the problem becomes quite ill-conditioned and GD needs to be run with smaller step-sizes. Moreover, as KALE (and its gradient) are not available in closed-form, they cannot be used with fast and hyperparameter-free methods, such as  L-BFGS \citep{liu1989limited} which requires exact gradients. This contrasts with our regularized $\KKL$ divergence and its gradient, which are available in closed-form. In our experiments, we will investigate further the relative performance of KALE and $\KKL$.

\section{Experiments}\label{sec:experiments}

In this section, we illustrate the validity of our theoretical results and the performance of gradient descent for the regularized $\KKL$. 
In all our experiments, we consider Gaussian kernels $k(x,y)=\exp(-\frac{\|x-y\|^2}{\sigma^2})$ where $\sigma$ denotes the bandwith. Our code is available on the github repository \url{https://github.com/clementinechazal/KKL-divergence-gradient-flows.git}.

\begin{wrapfigure}[17]{r}{0.46\textwidth}
\vspace{-0.1cm}
\begin{center}
\centerline{\includegraphics[width=0.4\columnwidth]{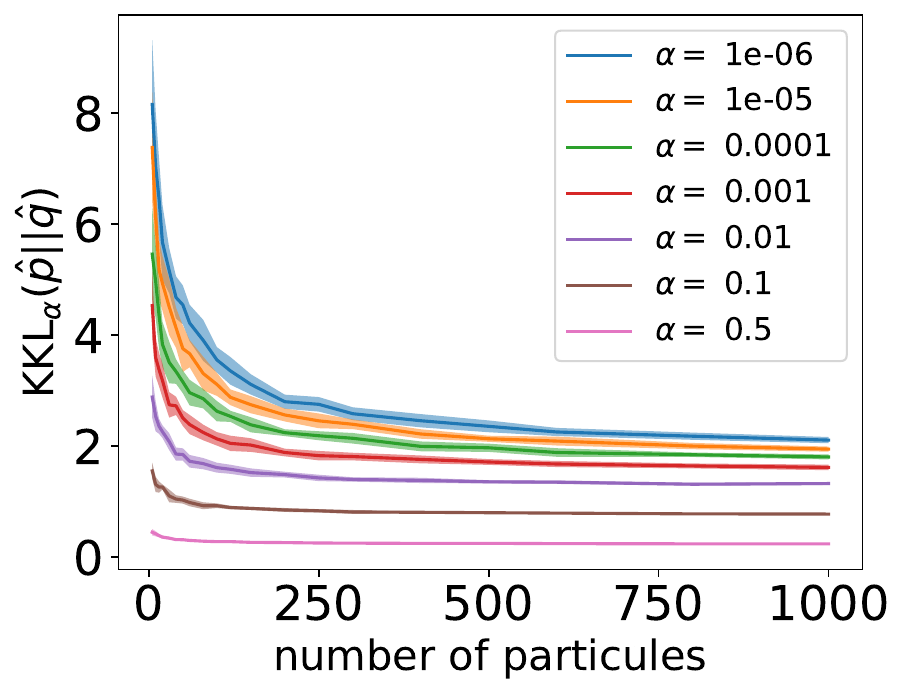}}
\caption{Concentration of empirical $\KKL_{\alpha}$ for $d=10$, $\sigma = 10$, $p,q$ Gaussians.}
\label{fig:ev_n_d_10}
\end{center}
\vspace{-10cm}
\end{wrapfigure}
\paragraph{Illustrations of skewness and concentration of the $\KKL$.} We first illustrate our results of \Cref{prop:conv-alpha} and \Cref{prop:empirical}, i.e. the skewness and concentration properties of $\KKL_{\alpha}$. We investigate these properties for various settings of  $p, q$ two fixed probability distributions on $\R^d$, varying the choice of $\alpha$, dimension $d$, and distributions $p,q$. We consider empirical measures $\hat{p}$, $\hat{q}$ supported on $n$ i.i.d. samples of $p,q$ (in this section we take the same number of samples for both distributions, i.e., $n=m$ in the notations of \Cref{sec:skew_and_empirical_results}), and we observe
the concentration of $\KKL_{\alpha}(\hat{p},\hat{q})$ around its population limit as the number of samples $n$ (particles) go to infinity.  

Each time, we plot the results obtained over 50 runs, randomizing the samples drawn from each distribution. Thick lines represent the average value over these multiple runs. We represent the dependence in $\alpha$ and $n$ in dimension 10 in \Cref{fig:ev_n_d_10}, for $p,q$ anisotropic Gaussian distributions with different means and variances. Alternative settings and additional results are deferred to the \Cref{sec:additional_experiments}, such as different distributions (e.g. a Gaussian $p$ versus an Exponential $q$)
, as well as the  dimension dependence for a fixed $\alpha$.
We can clearly see in \Cref{fig:ev_n_d_10} the monotony of $\KKL_{\alpha}$ with respect to $\alpha$ (as stated in \Cref{prop:monotony_alpha}) and the concentration of the empirical $\KKL_{\alpha}$ around its population version, which happens faster for a larger value of $\alpha$, as predicted by our finite-sample bounds in \Cref{prop:empirical}.

\paragraph{Sampling with $\KKL$ gradient descent.} Finally, we study the performance of $\KKL$ gradient descent in practice, as described in \Cref{sec:optim_kkl}. 
We consider two
settings already used by \citet{glaser2021kale} for KALE gradient flow, reflecting different topological properties for the source-target pair: a pair with a target
supported on a hypersurface (zero volume support) and a pair with disjoint supports of positive volume. Alternative settings, e.g. Gaussians source and mixture of Gaussians target that are pairs of distributions with a positive density supported on $\R^d$, are deferred to \Cref{sec:additional_experiments}. We also report there additional plots related to the experiments of this section.

We have treated $\alpha$ as a hyperparameter here, and in this section for simplicity of notations we refer to $\KKL$ as the objective functional. 
As both $\KKL$ and its gradient can be explicitly computed, one can implement descent either using a constant step-size, or through a quasi-Newton algorithm such as L-BFGS \citep{liu1989limited}. The latter is often faster and more robust than the conventional gradient descent and does not require choosing critical hyper-parameters, such as a learning rate, since L-BFGS performs a line-search to find suitable step-sizes. It only requires a tolerance parameter on the norm of the gradient, which is in practice set to machine precision. In contrast, as said in the previous section, the KALE and its gradient are not available in closed-form.

The first example is a target distribution $q$  supported
(and uniformly distributed) on a lower-dimensional surface that defines three non-overlapping rings, see \Cref{fig:3_rings_main}.
The initial source is a Gaussian distribution with a mean in the vicinity of the target $q$.  
We compare Wasserstein gradient descent of $\KKL$, Maximum Mean Discrepancy \citep{arbel2019maximum} and KALE \citep{glaser2021kale}, using the code provided in these references. For each method, we choose a  bandwith $\sigma = 0.1$, and we optimize the step-size for each method,
and sample $n=100$ points from the source and target distribution. Our method is robust to the choice of $\alpha$ and generally performs very well on this example, as shown in \Cref{fig:3_rings_main}. We can notice that since MMD is not sensitive to the difference of support between $p$ and $q$, the particles may leave the rings; while for the regularized $\KKL$ flow, as for KALE flow, the particles follow closely the support of the target distribution. 

The second example consists of a 
source and target pair $p,q$ that are supported on disjoint subsets, each with a finite, positive volume, in contrast with the previous example. The source and the target are uniform supported on a heart and a spiral respectively. We again run MMD, KALE and $\KKL$ gradient descent. In this example, both $\KKL$ and KALE recover the spiral shape, much before the MMD flow trajectory; but both have a harder time recovering outliers,
disconnected from the main support of the spiral.

\begin{figure}[h!]
    \centering
    \begin{minipage}[b]{0.6\textwidth}
        \centering
        \includegraphics[width=1\textwidth]{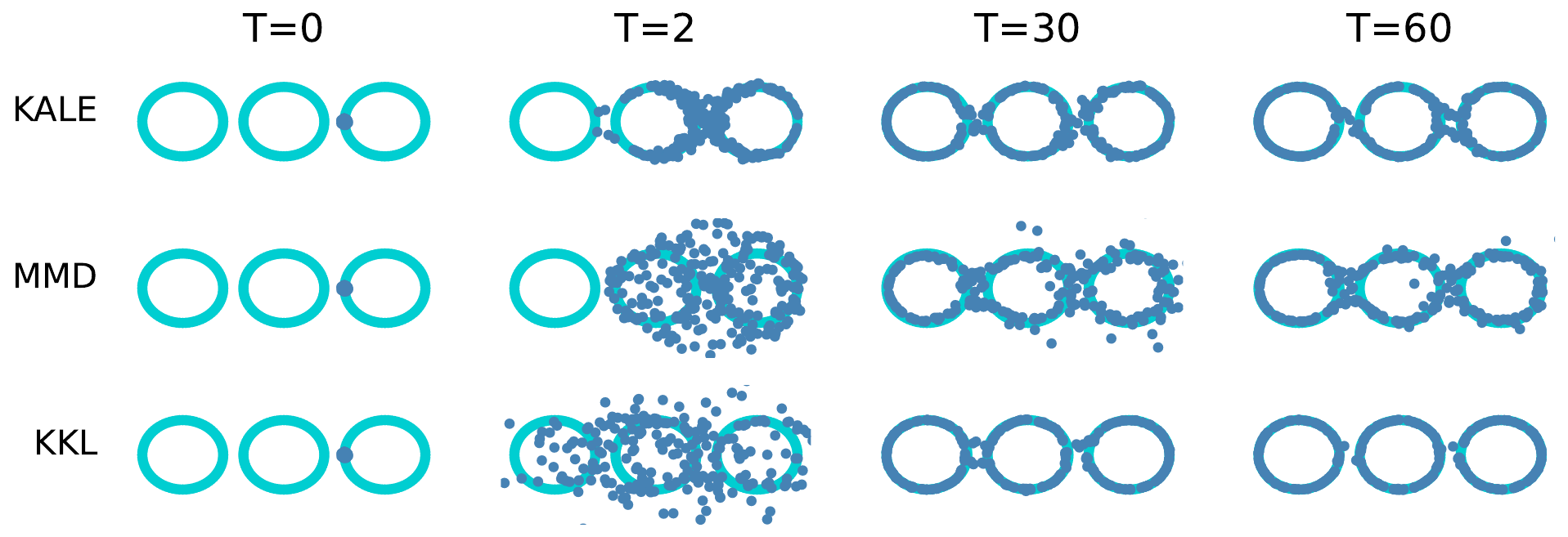}
        \caption{MMD, KALE and $\KKL$ flow for 3 rings target.}
        \label{fig:3_rings_main}
    \end{minipage}
    \hfill
    \begin{minipage}[b]{0.39\textwidth}
        \centering
        \includegraphics[width=\textwidth]{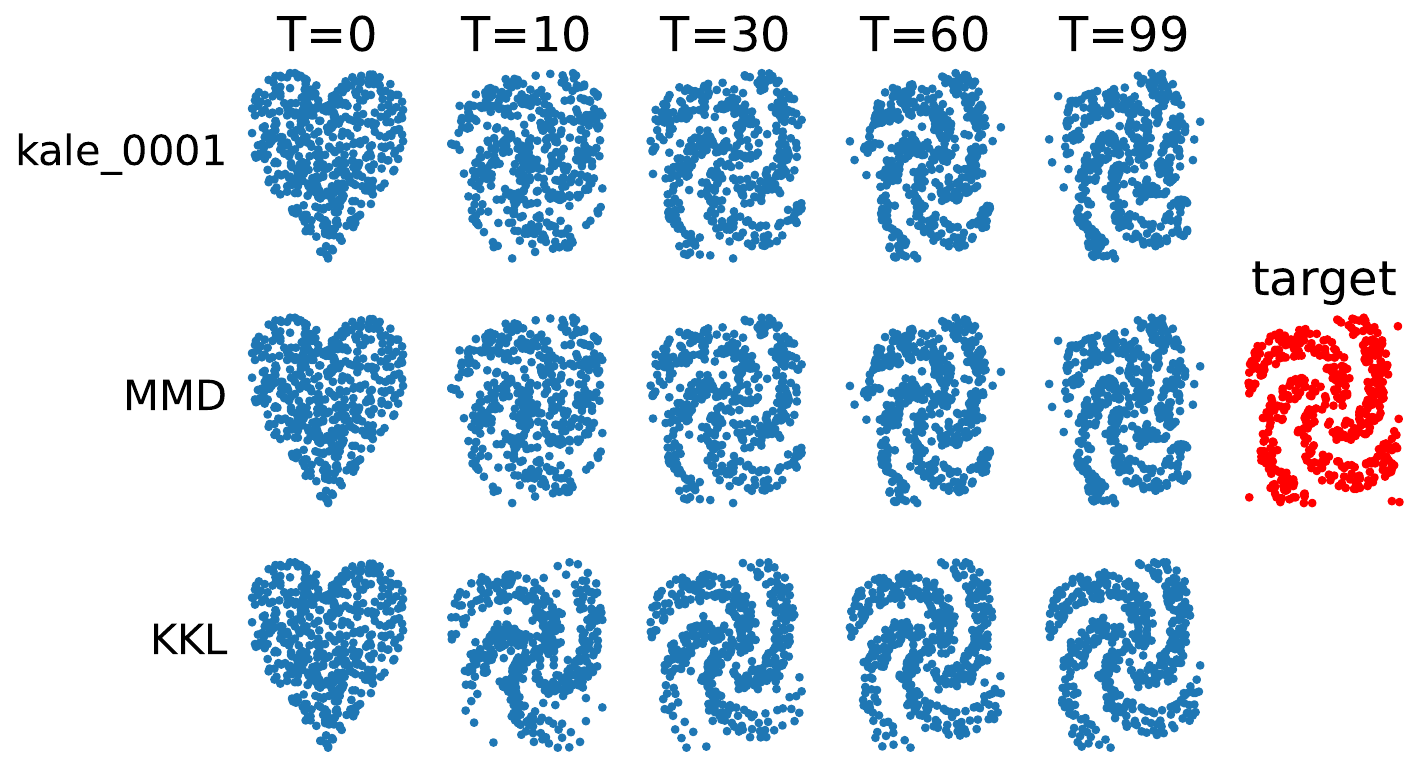}
        \caption{Shape transfer}
        \label{fig:image2}
    \end{minipage}
    \label{fig:side_by_side}
\end{figure}

\section{Conclusion}

In this work, we investigated the properties of the recently introduced Kernel Kullback-Leibler ($\KKL$) divergence as a tool for comparing probability distributions. We provided several theoretical results, among which quantitative bounds on the deviation from the regularized $\KKL$ to the original one, and finite-sample guarantees for empirical measures, that are validated by our numerical experiments. We also derived a closed-form and computable expression for the regularized $\KKL$ as well as its derivatives, enabling to implement (Wasserstein) gradient descent for this objective.
Our experiments validate the use of $\KKL$ as a tool to compare discrete measures, as its gradient flow is much better behaved than the one of Maximum Mean Discrepancy which relies only on mean (first moments)   embeddings of probability distributions. It can also be computed in closed-form, in contrast to the KALE divergence introduced recently in the literature, and can benefit from fast and hyperparameter-free methods such as L-BFGS.

While our study has advanced our understanding of the $\KKL$ divergence, several limitations must be acknowledged. Firstly, theoretical guarantees for the convergence of the $\KKL$ flow remain unestablished. 
Secondly, reducing the computational cost is crucial for practical applications. Investigating the use of random features presents a promising avenue for making the computations more efficient.

\section{Acknowledgments}

A. Korba and C. Chazal acknowledge the support of ANR PEPR PDE-AI.

\bibliographystyle{plainnat}
\bibliography{biblio}

\newpage
\appendix

\section{Additional Background}

\subsection{Monotonicity of the $\KL_{\alpha}$ divergence}\label{sec:skewed_kl}
\begin{proposition}
    The function $\alpha \mapsto  \KL(p||\alpha p+(1-\alpha)q)$ is decreasing on $[0,1]$.
\end{proposition}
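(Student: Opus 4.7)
The plan is to exploit the well-known convexity of $\KL(p\|\cdot)$ in its second argument: for probability measures $\nu_0,\nu_1$ and $t\in[0,1]$, $\KL(p\|t\nu_0+(1-t)\nu_1)\le t\,\KL(p\|\nu_0)+(1-t)\,\KL(p\|\nu_1)$. Writing $m_\alpha \deq \alpha p+(1-\alpha)q$, the claim follows from a clever reparametrization that expresses $m_{\alpha_2}$ itself as a mixture of $p$ and $m_{\alpha_1}$ whenever $\alpha_1<\alpha_2$.

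Concretely, for $0\le \alpha_1<\alpha_2\le 1$, I would solve for $\beta\in[0,1]$ in the equation $m_{\alpha_2}=\beta p+(1-\beta)m_{\alpha_1}$. Matching the coefficients of $p$ and $q$ gives $\beta=\frac{\alpha_2-\alpha_1}{1-\alpha_1}$ and $1-\beta=\frac{1-\alpha_2}{1-\alpha_1}$, both in $[0,1]$ (the edge case $\alpha_1=1$ is trivial since then $m_{\alpha_1}=m_{\alpha_2}=p$ and both sides vanish). Convexity of $\KL(p\|\cdot)$ then yields
\begin{equation*}
\KL(p\|m_{\alpha_2}) \le \beta\,\KL(p\|p)+(1-\beta)\,\KL(p\|m_{\alpha_1}) = (1-\beta)\,\KL(p\|m_{\alpha_1}) \le \KL(p\|m_{\alpha_1}),
\end{equation*}
which is exactly the monotonicity statement.

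An alternative route, if one prefers a differential argument (which is the analogue of the KKL proof sketched via \eqref{eq:identity_KKL}), is to compute
\begin{equation*}
\frac{d}{d\alpha}\KL(p\|m_\alpha) = -\int \frac{p\,(p-q)}{m_\alpha}\,dx = -\frac{1}{1-\alpha}\int \frac{p\,(p-m_\alpha)}{m_\alpha}\,dx = -\frac{1}{1-\alpha}\left(\int \frac{p^2}{m_\alpha}\,dx-1\right),
\end{equation*}
and then invoke Cauchy--Schwarz, $\int \frac{p^2}{m_\alpha}\,dx \cdot \int m_\alpha\,dx \ge \left(\int p\,dx\right)^2=1$, to conclude that the derivative is nonpositive on $[0,1)$.

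There is no real obstacle here: the only mild subtlety is handling the boundary $\alpha=1$ (where $m_\alpha=p$ and $\KL(p\|m_\alpha)=0$) and, in the differential approach, justifying differentiation under the integral and the division by $1-\alpha$. I would favor the first (convexity) proof, as it avoids any regularity discussion and works uniformly on the closed interval $[0,1]$.
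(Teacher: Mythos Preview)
Your proof is correct, and both of the approaches you outline work. However, they differ from the paper's argument, which proceeds by a direct decomposition rather than invoking convexity of $\KL(p\|\cdot)$ or a Cauchy--Schwarz bound on the derivative.

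The paper's proof writes, for $0<\alpha'<\alpha<1$,
\[
\KL(p\|m_\alpha)-\KL(p\|m_{\alpha'})=\int\bigl(\log m_{\alpha'}-\log m_\alpha\bigr)\,dp,
\]
and then splits $dp=(\alpha\,dp+(1-\alpha)\,dq)+(1-\alpha)(dp-dq)$. The first piece is $-\KL(m_\alpha\|m_{\alpha'})\le 0$; the second piece is nonpositive because $\log$ is increasing and $(\log m_{\alpha'}-\log m_\alpha)$ has the opposite sign to $p-q$ pointwise. This decomposition is chosen deliberately: it is the scalar template for the paper's operator-level proof of \Cref{prop:monotony_alpha}, where the same splitting of $\Sigma_p$ as $(\alpha\Sigma_p+(1-\alpha)\Sigma_q)+(1-\alpha)(\Sigma_p-\Sigma_q)$ is used and the second term is controlled via the integral representation \eqref{eq:identity_KKL}.

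Your convexity argument is shorter and avoids any pointwise sign analysis; it also yields the slightly stronger inequality $\KL(p\|m_{\alpha_2})\le \frac{1-\alpha_2}{1-\alpha_1}\,\KL(p\|m_{\alpha_1})$. The trade-off is that it does not generalize straightforwardly to the $\KKL$ setting, because joint convexity of the quantum relative entropy does not by itself give the structure needed there (the first argument is fixed, but the operator mixture sits inside a trace with a non-commuting factor). The paper's decomposition, while less slick for the classical $\KL$, is the one that transfers.
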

\begin{proof}
    Let $0 < \alpha' < \alpha<1$, 
\begin{align*}
    \KL(p||\alpha &p+(1-\alpha)q) - \KL(p||\alpha' p+(1-\alpha')q) \\
    &= \int (\log(q + \alpha'(p-q)) - \log(q + \alpha(p-q))) dp \\
    &= \int (\log(q + \alpha'(p-q)) - \log(q + \alpha(p-q))) (\alpha dp + (1-\alpha) dq)  \\
    & \quad +(1-\alpha) \int (\log(q + \alpha'(p-q)) - \log(q + \alpha(p-q))) (dp-dq) 
\end{align*}
We have $\int (\log(q +\alpha'(p-q)) - \log(q + \alpha(p-q))) (\alpha dp + (1-\alpha) dq) = - \KL(\alpha p+(1-\alpha)q)||\alpha' p+(1-\alpha')q) \leqslant 0$. For the second term, note that because of the increasing nature of the $\log$, for the points for which $p - q > 0$, $\log(q + \alpha'(p-q)) \leqslant \log(q + \alpha(p-q))$ and vice versa. Hence $$(1-\alpha) \int (\log(q + \alpha'(p-q)) - \log(q +\alpha(p-q))) (dp-dq)  \leqslant 0.$$ This concludes the proof.
\end{proof}

\subsection{Skewness  of the $\KL_{\alpha}$ divergence}\label{sec:skewness_kl}

\begin{proposition}\label{prop:KL_rate} Suppose that $dp/dq \leqslant \frac1{\mu}$,
    $$|\KL(p||\alpha p+(1-\alpha)q) - \KL(p||q) | \leqslant \left(\alpha\left(1 + \frac1{\mu}\right)  + \frac{\alpha^2}{1-\alpha} \left(1 + \frac1{\mu^2} \right)\right) \int \log q dp$$
\end{proposition}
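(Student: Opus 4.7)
The plan is to reduce the problem to a scalar Taylor expansion in $\alpha$. Writing $u := dp/dq$ and $r := \alpha p + (1-\alpha) q$, the Radon--Nikodym derivative is $dr/dq = \alpha u + (1-\alpha)$, so
\[
\KL(p \| r) - \KL(p \| q) \;=\; -\int \log\!\bigl(1 + \alpha(u-1)\bigr)\, dp.
\]
I would then apply pointwise the exact first-order identity $\log(1+y) = y - \int_0^y t/(1+t)\, dt$, valid for $y > -1$, with $y = \alpha(u(x) - 1)$, to split the difference into a linear-in-$\alpha$ main term plus an integral remainder. This expansion is the scalar analogue of the algebraic identity $A^{-1} - B^{-1} = A^{-1}(B-A)A^{-1} - A^{-1}(B-A)B^{-1}(B-A)A^{-1}$ used in the operator proof of \Cref{prop:conv-alpha}, and it should let the two resulting pieces be controlled separately using the assumption $u \leqslant 1/\mu$.

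For the first-order term, $-\alpha\int (u-1)\, dp = -\alpha\bigl(\int u^2\,dq - 1\bigr)$; the hypothesis $u \leqslant 1/\mu$ yields $\int u^2\,dq \leqslant 1/\mu$, whence $\bigl|\alpha \int (u-1)\,dp\bigr| \leqslant \alpha(1/\mu - 1) \leqslant \alpha(1 + 1/\mu)$. For the remainder, observe that on $\supp(p)$ one has $y = \alpha(u-1) \in [-\alpha, \alpha(1/\mu - 1)]$, and for $t$ between $0$ and $y$ one has $1+t \geqslant 1 - \alpha > 0$, so
\[
\left|\int_0^{\alpha(u-1)} \frac{t}{1+t}\, dt\right| \;\leqslant\; \frac{\alpha^2 (u-1)^2}{2(1-\alpha)}.
\]
Integrating this bound against $dp$ and expanding $(u-1)^2 u = u^3 - 2u^2 + u$, one uses $\int u^3\,dq \leqslant 1/\mu^2$ together with $-2\int u^2\,dq \leqslant 0$ and $\int u\, dq = 1$ to get $\int (u-1)^2\,dp \leqslant 1 + 1/\mu^2$, so the remainder is controlled by $\frac{\alpha^2}{2(1-\alpha)}(1 + 1/\mu^2)$.

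Summing the two estimates produces
\[
|\KL(p\|r) - \KL(p\|q)| \;\leqslant\; \alpha\Bigl(1 + \tfrac{1}{\mu}\Bigr) + \frac{\alpha^2}{2(1-\alpha)}\Bigl(1 + \tfrac{1}{\mu^2}\Bigr),
\]
matching the displayed bound of \Cref{prop:KL_rate} up to an inessential factor of $2$ in the denominator of the remainder. The step I expect to be delicate is matching the exact form of the stated RHS: the multiplicative factor $\int \log q\, dp$ does not arise from this expansion, and taking $q$ uniform on $[0,1]$ with any non-uniform $p$ supported in $[0,1]$ makes this factor vanish while the left-hand side is strictly positive. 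I would therefore conjecture that this factor is a typo mirroring the term $|\Tr(\Sigma_p \log \Sigma_q)|$ that appears in the operator-level analogue \Cref{prop:conv-alpha}, and that the intended inequality is the displayed one with no $\int \log q\, dp$ on the right.
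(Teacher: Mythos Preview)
Your proof is correct, and your diagnosis of the factor $\int \log q\, dp$ is right: the stated inequality cannot hold as written (your uniform-$q$ counterexample is decisive), and the intended bound is the one you derive, with no such factor on the right.

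Your route differs from the paper's. The paper mirrors its operator-level argument for \Cref{prop:conv-alpha} by expressing $\log q - \log(q+\alpha(p-q))$ through the integral representation $\log x - \log y = \int_0^\infty\bigl[(y+\beta)^{-1}-(x+\beta)^{-1}\bigr]\,d\beta$ and then expanding the integrand with the scalar resolvent identity $A^{-1}-B^{-1}=A^{-1}(B-A)A^{-1}-A^{-1}(B-A)B^{-1}(B-A)A^{-1}$. You instead pass to the density ratio $u=dp/dq$ and use the exact Taylor identity $\log(1+y)=y-\int_0^y t/(1+t)\,dt$. Both decompositions yield the same first-order piece $-\alpha(u-1)$ and a second-order remainder controlled by $\frac{\alpha^2}{1-\alpha}(1+1/\mu^2)$ (you even gain a factor $\tfrac12$). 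The paper's route has the virtue of paralleling the operator proof line by line, but in its execution it loosens $\int_0^\infty p\,(q+\beta)^{-2}\,d\beta$ to $\int_0^\infty(q+\beta)^{-1}\,d\beta/\mu$, which diverges; this is where the spurious $\log q$ enters. Had the paper evaluated directly, $\int_0^\infty q\,(q+\beta)^{-2}\,d\beta=1$ and $\int_0^\infty p\,(q+\beta)^{-2}\,d\beta=p/q\le 1/\mu$, it would have arrived at your bound. Your argument is more elementary, self-contained, and avoids this pitfall.
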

\begin{proof}
First, we have
    \begin{align*}
       |\KL(p||\alpha p+(1-\alpha)q) - \KL(p||q) 
       &\le \int \left|\left(\log q - \log(q+\alpha(p-q)) \right) \right|dp.
    \end{align*}
Now, we remind the following identity  which is the real-valued analog of \Cref{eq:identity_KKL}. Let $x,y>0$,
\begin{equation*}
    \log x - \log y = \int_0^{\infty} \left(\frac1{y + \beta} - \frac1{x + \beta}\right) d \beta.
\end{equation*}
Hence,
\begin{align}\label{eq:decomp_log_real}
    \log q - \log(q+\alpha(p-q)) &= \int_{0}^{+\infty} \left(\frac{1}{q+  \alpha(p-q)+ \beta} - \frac{1}{q + \beta} \right) d\beta \nonumber \\
    & = \int_{0}^{+\infty} \left(\frac{(q + \beta)^2}{(q + \beta)^2(q+  \alpha(p-q)+ \beta)} - \frac{(q+  \alpha(p-q)+ \beta)(q+\beta)}{(q + \beta)^2(q+  \alpha(p-q)+ \beta)} \right) d\beta \nonumber \\
    &= \int_{0}^{+\infty} \left(\frac{-\alpha(p-q)(q + \beta)}{(q + \beta)^2(q+  \alpha(p-q)+ \beta)} \right) d\beta \nonumber \\
    &= \int_{0}^{+\infty} \left(\frac{-\alpha^2(p-q)^2 - \alpha(p-q)(q+  \alpha(p-q)+ \beta)}{(q + \beta)^2(q+  \alpha(p-q)+ \beta)} \right) d\beta \nonumber \\
    &= \alpha^2  \int_{0}^{+\infty}  \left(\frac{p-q}{q + \beta}\right)^2 \frac{1}{q + \alpha(p-q)+ \beta} d\beta - \alpha  \int_{0}^{+\infty} \frac{p-q}{(q+ \beta)^2} d\beta.
\end{align}
The first term in \eqref{eq:decomp_log_real} can be bounded as
\begin{align*}
    \left|\alpha  \int_{0}^{+\infty} \frac{p-q}{(q+ \beta)^2} d\beta\right|& \le \alpha  \int_{0}^{+\infty} \frac{p}{(q+ \beta)^2} d\beta + \alpha  \int_{0}^{+\infty} \frac{q}{(q+ \beta)^2}d \beta\\
    &\le  \alpha \int_{0}^{+\infty} \frac{1}{\mu(q + \beta)} d\beta  + \alpha \int_{0}^{+\infty} \frac{1}{(q + \beta)} d\beta  \\
    & \le \alpha\left(\frac{1}{\mu}+1\right) \log q.
\end{align*}
where  the penultimate  inequality uses $q \geqslant \mu p $ for the first term. 
The second term in \eqref{eq:decomp_log_real} can be bounded similarly as:
\begin{align*}
\alpha^2\int_{0}^{+\infty}  \left(\frac{p-q}{q + \beta}\right)^2 \frac{1}{q+ \beta + \alpha(p-q)} d\beta &\leqslant \alpha^2\int_{0}^{+\infty}  \frac{p^2 + q^2}{(q + \beta)^2} \frac{1}{q+ \beta + \alpha(p-q)} d\beta\\
&\leqslant \frac{\alpha^2}{1-\alpha}\left(\frac1{\mu^2}+1\right) \log q.
\end{align*}
Finally, 
$$|\KL(p||\alpha p+(1-\alpha)q) - \KL(p||q) | \leqslant \left(\alpha\left(1 + \frac1{\mu}\right)  + \frac{\alpha^2}{1-\alpha} \left(1 + \frac1{\mu^2} \right)\right) \int \log q \ dp.\qedhere$$
\end{proof}

\subsection{Background operator monotony
}\label{sec:background_operator_monotony}

We recall here results about matrix and operator monotony, that we extensively use in all our proofs. These are set out in the blog post \url{https://francisbach.com/matrix-monotony-and-convexity/}, see \citep{bhatia2009positive} for a more complete reference. For 2 operators $A$ and $B$ in $\mathcal{H}$, we denote $ A \preccurlyeq B$ the operators inequality in the sense : $\forall x \in \mathcal{H}$, $x^*Ax \leqslant x^* Bx$. Let $S$ being the set of symmetric operators and $S^+$ the set of symmetric positive operators. We have 
\begin{enumerate}[label=\roman*)]
    \item\label{X} If $A,B \in S$,$X$ another operator in $\mathcal{H}$, $A \preccurlyeq B \Rightarrow X^*AX \preccurlyeq X^*BX$.
    \item\label{trM} If $A,B \in S$, $M \succcurlyeq 0$, $A \preccurlyeq B \Rightarrow \Tr(AM) \leqslant \Tr(BM)$.
    \item\label{inv_B} If $B$ is invertible, $A \preccurlyeq B \Rightarrow B^{-1/2} A B^{-1/2} \preccurlyeq I$.
    \item\label{BB} If $B \in S$, $B^*B \preccurlyeq I\Rightarrow BB^* \preccurlyeq I$.
    \item\label{demi} If $A,B \in S^+$, $A \preccurlyeq B \Rightarrow  A^{1/2} \preccurlyeq B^{1/2}$.
    \item\label{inv_AB} If $A,B \in S^+$ and are invertible, $A \preccurlyeq B \Rightarrow  B^{-1}\preccurlyeq A^{-1}.$
\end{enumerate}

\section{Proofs}

\subsection{Proof of \Cref{prop:monotony_alpha}}\label{sec:proof_monotony_alpha}

    Let $0 < \alpha' < \alpha$. We have:
\begin{align}\label{eq:decomp_kkl}
   &\KKL_{\alpha}(p||q) - \KKL_{\alpha'}(p||q) \nonumber\\ 
   &= \Tr \Sigma_p \log \left(\Sigma_q + \alpha'(\Sigma_p-\Sigma_q)\right) - \Tr \Sigma_p \log \left(\Sigma_q + \alpha(\Sigma_p-\Sigma_q)\right) \nonumber\\
   &= \Tr \left(\alpha\Sigma_p + (1-\alpha) \Sigma_q\right) \log \left(\Sigma_q + \alpha'(\Sigma_p-\Sigma_q)\right) - \Tr\left(\alpha \Sigma_p + (1-\alpha) \Sigma_q \right) \log \left(\Sigma_q + \alpha(\Sigma_p-\Sigma_q) \right)\nonumber\\
   & \quad + (1-\alpha) \Tr (\Sigma_p-\Sigma_q) \left[\log (\Sigma_q + \alpha'(\Sigma_p-\Sigma_q)) - \log (\Sigma_q + \alpha(\Sigma_p-\Sigma_q)) \right]. 
   \end{align}
For the first term in \eqref{eq:decomp_kkl}, we recognize
\begin{multline*} \Tr (\alpha\Sigma_p + (1-\alpha) \Sigma_q) \log (\Sigma_q + \alpha'(\Sigma_p-\Sigma_q)) - \Tr(\alpha \Sigma_p + (1-\alpha) \Sigma_q) \log (\Sigma_q + \alpha(\Sigma_p-\Sigma_q))\\ = - \KKL(\alpha p + (1-\alpha) q || \alpha' p + (1-\alpha') q) \leqslant 0.
\end{multline*} 
For the second term in \eqref{eq:decomp_kkl}, we write
\begin{align*}
   &(1-\alpha) \Tr (\Sigma_p-\Sigma_q) \left[\log (\Sigma_q + \alpha'(\Sigma_p-\Sigma_q)) \shortminus \log (\Sigma_q + \alpha(\Sigma_p-\Sigma_q)) \right] \\
    = & (1-\alpha) \int_{0}^{+ \infty} \Tr (\Sigma_p-\Sigma_q) \left((\Sigma_q + \beta I + \alpha(\Sigma_p-\Sigma_q))^{-1} -(\Sigma_q + \beta I + \alpha'(\Sigma_p-\Sigma_q))^{-1} \right) d\beta \\
    =& (1-\alpha)(\alpha' - \alpha)\times\\
    &\qquad\int_{0}^{+ \infty} \Tr (\Sigma_p- \Sigma_q) (\Sigma_q + \beta I + \alpha(\Sigma_p-\Sigma_q))^{-1}(\Sigma_p - \Sigma_q)(\Sigma_q + \beta I + \alpha'(\Sigma_p-\Sigma_q))^{-1} d\beta, 
\end{align*}
where the last equality uses $A^{-1} - B^{-1} = A^{-1}(B-A)B^{-1}$. 
The term under the integral writes as 
$(AX)^T A X$, 
where $A = (\Sigma_q + \beta I + \alpha(\Sigma_p-\Sigma_q))^{-1}$ 
and $X=\Sigma_p-\Sigma_q$, hence it is positive.  Then, knowing that $(1-\alpha)(\alpha' - \alpha) \leqslant 0$, we conclude that the second term in \eqref{eq:decomp_kkl} is negative. 
Finally, 
$$\KKL_{\alpha}(p||q) - \KKL_{\alpha'}(p||q) \leqslant 0. $$

\subsection{Proof of \Cref{prop:conv-alpha}}\label{sec:proof_conv-alpha}

This proof makes repeated use of the results about matrix monotony, some of which we recall in \Cref{sec:background_operator_monotony}. The reader may refer to \Cref{sec:skewness_kl} for analog computations in the  KL case. We denote $\N= \alpha \Sigma_{p} + (1-\alpha) \Sigma_{q} = \Sigma_{q} + \alpha(\Sigma_{p}-\Sigma_{q})$. 
We write using direct integration \citep{ando1979concavity}
\begin{align*}
     \KKL(p || q)&- \KKL_{\alpha}(p || q)  =   \Tr \Sigma_{p} \log \Sigma_{q} - \Tr \Sigma_{p} \log \N\\
    &=\int_{0}^{+\infty} \left(\Tr \Sigma_{p} (\N+ \beta I)^{-1} - \Tr \Sigma_{p} (\Sigma_{q} + \beta I)^{-1}\right) d \beta\\
    &= \alpha \int_{0}^{+\infty} \Tr \Sigma_{p} (\Sigma_q +\beta I)^{-1} (\Sigma_q - \Sigma_p) (\Sigma_q +\beta I)^{-1} d \beta \\ &-  \alpha^2 \int_{0}^{+\infty} \Tr \Sigma_{p} (\Sigma_q +\beta I)^{-1} (\Sigma_q - \Sigma_p) (\N+\beta I)^{-1} (\Sigma_q - \Sigma_p) (\Sigma_q +\beta I)^{-1} d \beta\\
    &:=(a)-(b).
\end{align*}
We will first upper bound $(a)$ in absolute value, since it is not necessarily positive. We first bound
\begin{equation*}
    (\Sigma_q +\beta I)^{-1} \Sigma_q (\Sigma_q +\beta I)^{-1} \preccurlyeq  (\Sigma_q +\beta I)^{-1}\text{ and }
    (\Sigma_q +\beta I)^{-1} \Sigma_p (\Sigma_q +\beta I)^{-1} \preccurlyeq  \frac{1}{\mu}(\Sigma_q +\beta I)^{-1},
\end{equation*}
where we used for the second term the matrix inequalities $\Sigma_p \preccurlyeq \frac1{\mu} \Sigma_q$. Hence, we have 
\begin{align*}
  |\Tr \Sigma_{p} &(\Sigma_q +\beta I)^{-1} (\Sigma_q - \Sigma_p) (\Sigma_q +\beta I)^{-1}|  = |\Tr \Sigma_{p}^{\frac{1}{2}} (\Sigma_q +\beta I)^{-1} (\Sigma_q - \Sigma_p) (\Sigma_q +\beta I)^{-1}
  \Sigma_{p}^{\frac{1}{2}}| \\
  & \leqslant \Tr \Sigma_{p}^{\frac{1}{2}} (\Sigma_q +\beta I)^{-1} \Sigma_q (\Sigma_q +\beta I)^{-1} \Sigma_{p}^{\frac{1}{2}} + \Tr \Sigma_{p}^{\frac{1}{2}} (\Sigma_q +\beta I)^{-1} \Sigma_p (\Sigma_q +\beta I)^{-1} \Sigma_{p}^{\frac{1}{2}} \\
  & \leqslant \Tr \Sigma_{p}^{\frac{1}{2}} (\Sigma_q +\beta I)^{-1} \Sigma_{p}^{\frac{1}{2}} + \frac{1}{\mu} \Tr \Sigma_{p}^{\frac{1}{2}}(\Sigma_q +\beta I)^{-1} \Sigma_{p}^{\frac{1}{2}} \\
  & = \left(1 + \frac{1}{\mu}\right) \Tr \Sigma_{p} (\Sigma_q +\beta I)^{-1}.
\end{align*}
We can then upper bound $|(a)|$ as: 
\begin{equation*}
    \left| \alpha \int_{0}^{+\infty} \Tr \Sigma_{p} (\Sigma_q +\beta I)^{-1} (\Sigma_q - \Sigma_p) (\Sigma_q +\beta I)^{-1} d \beta \right|\leqslant \alpha\left(1 + \frac{1}{\mu}\right) |\Tr \Sigma_p \log \Sigma_q|.
\end{equation*}

We now turn to $(b)$ which we can upper bound without absolute value since it is a positive term. 
Since $\N\succcurlyeq (1-\alpha) \Sigma_q$, $\alpha\in [0,1]$ and we are dealing with p.s.d. operators, we can bound the inverse as $(\N+\beta I)^{-1} \preccurlyeq \frac{1}{1-\alpha}(\Sigma_q + \frac{\beta}{1-\alpha}I)^{-1} \preccurlyeq \frac{1}{1-\alpha}(\Sigma_q + \beta I)^{-1} $ and so, using $\Tr(AM)\le \Tr(BM)$ for $A \preccurlyeq B$ and $M\succcurlyeq 0$, we have:
\begin{multline*}
\Tr \Sigma_{p} (\Sigma_q +\beta I)^{-1} (\Sigma_q - \Sigma_p) (\N+\beta I)^{-1} (\Sigma_q - \Sigma_p) (\Sigma_q +\beta I)^{-1} \\
\leqslant \frac{1}{1-\alpha} \Tr \Sigma_{p} (\Sigma_q +\beta I)^{-1} (\Sigma_q - \Sigma_p) (\Sigma_q +\beta I)^{-1} (\Sigma_q - \Sigma_p) (\Sigma_q +\beta I)^{-1}.
\end{multline*}
We will split the r.h.s. of the previous inequality in four terms, involving twice $\Sigma_q$, two cross terms (bounded similarly) with $\Sigma_p,\Sigma_q$ and twice $\Sigma_p$. 
We have for the first one:
\begin{align*}
  \Tr \Sigma_{p}^{\frac{1}{2}} (\Sigma_q +\beta I)^{-1} \Sigma_q (\Sigma_q +\beta I)^{-1} \Sigma_q (\Sigma_q +\beta I)^{-1}\Sigma_p^{\frac{1}{2}} & \leqslant \Tr \Sigma_{p}^{\frac{1}{2}} (\Sigma_q +\beta I)^{-1} \Sigma_q (\Sigma_q +\beta I)^{-1}\Sigma_p^{\frac{1}{2}} \\
  &\leqslant \Tr \Sigma_{p}(\Sigma_q +\beta I)^{-1}.
\end{align*}
For the cross-term we have:
$$-\Tr \Sigma_{p} (\Sigma_q +\beta I)^{-1} \Sigma_p (\Sigma_q +\beta I)^{-1} \Sigma_q (\Sigma_q +\beta I)^{-1} \leqslant 0. $$
and for the last term we have:
\begin{align*}
    \Tr \Sigma_{p} (\Sigma_q +\beta I)^{-1} \Sigma_p (\Sigma_q +\beta I)^{-1} \Sigma_p (\Sigma_q +\beta I)^{-1} & \leqslant \frac{1}{\mu^2} \Tr \Sigma_{p} (\Sigma_q +\beta I)^{-1}. 
\end{align*}
 Finally, combining our bounds for the terms in $(b)$ and integrating with respect to $\beta$, we get 
 \begin{multline*}
     \alpha^2 \int_{0}^{+\infty} \Tr \Sigma_{p} (\Sigma_q +\beta I)^{-1} (\Sigma_q - \Sigma_p) (\N+\beta I)^{-1} (\Sigma_q - \Sigma_p) (\Sigma_q +\beta I)^{-1} d \beta \\
     \leqslant \frac{\alpha^2}{1-\alpha}\left(1+\frac{1}{\mu^2} \right) \Tr\Sigma_p \log \Sigma_q.
 \end{multline*}
Adding the bounds on (a) and (b) concludes the proof.

\subsection{Proof of \Cref{prop:empirical}}\label{sec:whole_proof_empirical}
 
\subsubsection{Intermediate result 1: Concentration of sums of random self-adjoint operators}\label{sec:concentration_sum_operator}

\begin{lemma}\label{lem:lambda_max} Assume the conditions of \Cref{prop:empirical} hold. Denote  $\hatN=\alpha \Sigma_{\hat{p}}+(1-\alpha)\Sigma_{\hat{q}}$ and $\N$ its population counterpart. Let $\beta>0$, 
     $D = (\N+\beta I)^{-\frac{1}{2}} ( \hatN-\N) (\N+\beta I)^{-\frac{1}{2}}$ and
     $C(\beta) = \sup_{x \in \X} \langle \varphi(x), (\Sigma_p + \beta I)^{-1} \varphi(x) \rangle_{\cH} $
. Then, for      $0<u<1$, 
    $$\mathbb{P}(\lambda_{\max}(D) >u) \leqslant \frac{2}{\mu} C(\beta) \left( 1 + \frac{48}{u^4 (m \wedge n)} \left(\frac{C(\beta)}{\mu} + \frac{u}{6}\right)^2 \right) \exp\left(- \frac{(m \wedge n) u^2}{8(\frac{C(\beta)}{\mu} + \frac{u}{6})}\right)$$
    where $\lambda_{\max}(D)$ is the maximal eigenvalue of $(D^2)^{1/2}$.
\end{lemma}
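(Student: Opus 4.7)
The plan is to view $\hatN$ as an average of independent rank-one operators and apply a Bernstein-type concentration inequality for sums of bounded self-adjoint operators on a Hilbert space, following closely the strategy used by \cite[Proposition 7]{bach2022information} to bound the entropy term.

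First, I would decompose $D$ into a sum of independent centered operators. Writing $\hatN = \frac{\alpha}{n}\sum_{i=1}^{n}\varphi(X_i)\varphi(X_i)^{*} + \frac{1-\alpha}{m}\sum_{j=1}^{m}\varphi(Y_j)\varphi(Y_j)^{*}$ with $X_i \sim p$ and $Y_j \sim q$ independent, one obtains $D = \frac{\alpha}{n}\sum_{i=1}^{n}Z_i + \frac{1-\alpha}{m}\sum_{j=1}^{m}W_j$, where $Z_i = (\N+\beta I)^{-1/2}(\varphi(X_i)\varphi(X_i)^{*}-\Sigma_p)(\N+\beta I)^{-1/2}$ is centered and $W_j$ is defined analogously with $\Sigma_q$.

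Next I would control the almost sure range and the variance of each summand. The hypothesis $dp/dq \leqslant 1/\mu$ gives $\mu \Sigma_p \preccurlyeq \Sigma_q$; since moreover $\mu \leqslant 1$, this yields $\N \succcurlyeq \mu \Sigma_p$, and by matrix monotony (\Cref{sec:background_operator_monotony}) $(\N+\beta I)^{-1} \preccurlyeq \mu^{-1}(\Sigma_p+\beta I)^{-1}$. Hence $\langle \varphi(x),(\N+\beta I)^{-1}\varphi(x)\rangle_{\cH} \leqslant C(\beta)/\mu$ uniformly in $x$, giving an almost sure norm bound of order $C(\beta)/(\mu(m\wedge n))$ on each rescaled summand. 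A parallel computation, using $\Tr(\varphi(x)\varphi(x)^{*} A) = \langle \varphi(x), A\varphi(x)\rangle_{\cH}$ for p.s.d.\ $A$, supplies analogous bounds on the trace and operator norm of the variance proxy $\mathbb{E}[D^{2}]$, of the same order $C(\beta)/(\mu(m\wedge n))$.

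Plugging these quantities into a Bernstein-type deviation inequality for sums of independent centered self-adjoint operators in a separable Hilbert space, in the intrinsic-dimension form where the ambient-dimension prefactor is replaced by a ratio of trace to operator norm of the variance proxy of order $C(\beta)/\mu$, I expect to recover directly the exponential factor $\exp\bigl(-(m\wedge n) u^{2}/(8(C(\beta)/\mu + u/6))\bigr)$; the polynomial prefactor $(2 C(\beta)/\mu)\bigl(1 + 48(C(\beta)/\mu + u/6)^{2}/(u^{4}(m\wedge n))\bigr)$ should then emerge from an explicit tail formula balancing this effective-dimension factor against the deviation probability. The main obstacle I foresee is aligning the two partial sums, one over the $p$-samples and one over the $q$-samples, so that a single $m \wedge n$ and the stated numerical constants appear in the exponent: I would either treat each partial sum with its own operator Bernstein bound and combine through a union argument, or apply a single inequality to a concatenated operator after an appropriate rescaling, in both cases carefully tracking how the factors $\alpha$ and $1-\alpha$ interact with $n$ and $m$ to produce the minimum rather than separate rates.
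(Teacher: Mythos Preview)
Your proposal is correct and matches the paper's argument: the paper splits $D$ into the $p$- and $q$-partial sums, bounds $\lambda_{\max}(D)\le \lambda_{\max}(D_p)+\lambda_{\max}(D_q)$, applies a union bound at level $u/2$ (the first of your two options), and invokes \cite[Lemma~2]{bach2022information} separately on each part with the rescaled feature maps $\sqrt{\alpha}\,(\N+\beta I)^{-1/2}\varphi$ and $\sqrt{1-\alpha}\,(\N+\beta I)^{-1/2}\varphi$. The only cosmetic difference is that the paper uses $\N\succcurlyeq \alpha\Sigma_p$ for the $p$-block and $\N\succcurlyeq (1-\alpha)\Sigma_q\succcurlyeq (1-\alpha)\mu\Sigma_p$ for the $q$-block rather than your uniform $\N\succcurlyeq \mu\Sigma_p$, but after invoking $\mu\le 1$ both routes produce the same $C(\beta)/\mu$ constants in the final bound.
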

\begin{proof}
   Denoting 
   \begin{equation*}
       X_i = (\N+ \beta I)^{-1/2} \varphi(x_i) \varphi(x_i)^* (\N+ \beta I)^{-1/2} \text{ and } Y_j = (\N+ \beta I)^{-1/2} \varphi(y_j) \varphi(y_j)^* (\N+ \beta I)^{-1/2},
   \end{equation*}
   we have $(\N+ \beta I)^{-1/2} \Sigma_p(\N+ \beta I)^{-1/2}  = \mathbb{E}[X]$, $(\N+ \beta I)^{-1/2} \Sigma_q (\N+ \beta I)^{-1/2} = \mathbb{E}[Y] $ 
   and so 
   $(\N+ \beta I)^{-1/2} \N (\N+ \beta I)^{-1/2} = \alpha  \mathbb{E}[X] + (1 - \alpha) \mathbb{E}[Y]$. 
   We can thus write 
    $$D = \frac{\alpha}{n} \sum_{i=1}^n(X_i - \mathbb{E}[X]) + \frac{1-\alpha}{m}\sum_{j=1}^m (Y_j - \mathbb{E}[Y]).$$
However, for two operators $A$ and $B$ we have $\| A + B \|_{op} \leqslant \| A \|_{op} + \| B \|_{op}$ which means that $\lambda_{\max}(A + B ) \leqslant \lambda_{\max}(A) + \lambda_{\max}(B)$. Then, 
$$\lambda_{\max}(D) \leqslant \lambda_{\max}\left(\frac{1}{n} \sum_{i=1}^n\alpha X_i - \mathbb{E}[\alpha X] \right) + \lambda_{\max}\left(\frac{1}{m}\sum_{j=1}^m (1-\alpha) Y_j - \mathbb{E}[(1-\alpha)Y] \right),$$
which is equivalent to 
\begin{multline*}
    \lambda_{\max}(D) \leqslant \lambda_{\max}\left(\alpha (\N+\beta I)^{-\frac12}(\Sigma_{\hat{p}} - \Sigma_p)(\N+\beta I)^{-\frac12} \right)\\
    + \lambda_{\max}\left((1-\alpha) (\N+\beta I)^{-\frac12}(\Sigma_{\hat{q}} -\Sigma_q)(\N+\beta I)^{-\frac12} \right).    
\end{multline*}
The quantities $ \lambda_{\max}\left(D_p\right) := \lambda_{\max}\left(\alpha (\N+\beta I)^{-\frac12}(\Sigma_{\hat{p}} - \Sigma_p)(\N+\beta I)^{-\frac12} \right) $  and $\lambda_{\max}\left(D_q\right) := \lambda_{\max}\left((1-\alpha) (\N+\beta I)^{-\frac12}(\Sigma_{\hat{q}} -\Sigma_q)(\N+\beta I)^{-\frac12} \right)$ are positive so
\begin{equation*}
   \mathbb{P} \left(\lambda_{\max}(D) > u\right) \leqslant \mathbb{P}\left(\lambda_{\max}\left(D_p\right) > \frac{u}{2}\right) +  \mathbb{P}\left(\lambda_{\max}\left(D_q\right)  > \frac{u}{2} \right). 
\end{equation*}  

Then \citet[Lemma 2]{bach2022information} can be applied twice to $\tilde{\varphi}(x) = \sqrt{\alpha} (\N+ \beta I)^{-1/2} \varphi(x)$ and to $\tilde{\tilde{\varphi}}(y) = \sqrt{(1-\alpha)} (\N+ \beta I)^{-1/2} \varphi(y)$.
\\

For the term with $D_p$, \begin{multline*}
    \|\tilde{\varphi}(x)\|_{\mathcal{H}}^2 = \| \sqrt{\alpha}(\N+\beta I )^{-1/2} \varphi(x)\|_{\mathcal{H}}^2 = \alpha \langle \varphi(x), (\N+\beta I )^{-1} \varphi(x) \rangle_{\mathcal{H}} \\
    \leqslant \langle \varphi(x), (\Sigma_p+\beta I )^{-1} \varphi(x) \rangle_{\mathcal{H}} \leqslant C(\beta),    
\end{multline*}
where we used  $\N\succcurlyeq \alpha \Sigma_p$ and define $C(\beta) = \sup_{x \in \X} \langle \varphi(x), (\Sigma_p + \beta I)^{-1} \varphi(x) \rangle_{\mathcal{H}} $. Using the same inequality, we also obtain  
\begin{align*}
    \Tr (\N+\beta I )^{-1/2} \alpha \Sigma_p (\N+\beta I )^{-1/2} &\leqslant  \Tr (\Sigma_p+\beta I )^{-1/2} \Sigma_p (\Sigma_p+\beta I )^{-1/2} \\
    & = \Tr \Sigma_p (\Sigma_p+\beta I)^{-1} \\
    & = \int \langle \varphi(x) (\Sigma_p + \beta I)^{-1} \varphi(x) \rangle_{\cH} dp(x) \leqslant C(\beta),    
\end{align*}
and 
\begin{equation*}
\lambda_{max}((\N+\beta I )^{-1/2} \alpha \Sigma_p (\N+\beta I)^{-1/2} ) = \|(\N+\beta I )^{-1/2} \alpha \Sigma_p (\N+\beta I)^{-1/2} \|_{op} \leqslant 1.
\end{equation*}
Then, 
\begin{align*}
  \mathbb{P}\left(\lambda_{\max}\left(D_p\right) > \frac{u}{2} \right) &\leqslant C(\beta)\left(1 + \frac{48}{u^4n^2}\left(C(\beta) + \frac{u}{6}\right)^2 \right) \exp \left(-\frac{nu^2}{8(C(\beta) + \frac{u}{6})} \right).
\end{align*}

For the term with $D_q$,  using the matrix inequalities $\N\succcurlyeq (1-\alpha) \Sigma_q$ and $\Sigma_q \succcurlyeq \mu \Sigma_p$, with similar computations we get 
\begin{align*}
\|\tilde{\tilde{\varphi}}(y)\|_{\mathcal{H}}^2 \leqslant \frac{1}{\mu} C(\beta),\quad
\Tr (\N+\beta I )^{-1/2} (1-\alpha) \Sigma_q (\N+\beta I )^{-1/2} &\leqslant \frac{1}{\mu} C(\beta),\\
\text{and }\lambda_{\max}\left((\N+\beta I )^{-1/2} (1-\alpha) \Sigma_q (\N+\beta I )^{-1/2}\right) &\leqslant 1.
\end{align*}
Then, 
$$\mathbb{P}\left(\lambda_{\max}\left(D_q \right) > \frac{u}{2} \right) \leqslant \frac{C(\beta)}{\mu} \left( 1 + \frac{48}{u^4 m} \left(\frac{C(\beta)}{\mu} + \frac{u}{6}\right)^2 \right) \exp\left(- \frac{m u^2}{8(\frac{C(\beta)}{\mu} + \frac{u}{6})}\right).$$

We can combine both results on $D_p$ and $D_q$ and use $\mu \le 1$ to get 
\begin{equation*}
    \mathbb{P}\left(\lambda_{\max}(D) > u\right) \leqslant \frac{2}{\mu} C(\beta) \left( 1 + \frac{48}{u^4 (m \wedge n)^2} \left(\frac{C(\beta)}{\mu} + \frac{u}{6}\right)^2 \right) \exp\left(- \frac{(m \wedge n) u^2}{8(\frac{C(\beta)}{\mu} + \frac{u}{6})}\right).\qedhere
\end{equation*}
\end{proof}

\subsubsection{Intermediate result 2: Degrees of freedom estimation}\label{sec:ddf_estimate}
The Proposition below adapts the proof of \citet[Proposition 15]{bach2022information} to  bound the cross terms between the empirical covariance operators of $p$ and $\alpha p +(1-\alpha)q$.

\begin{proposition}[Estimation of skewed degrees of freedom]\label{prop:degrees_of_freedom}
Assume the conditions of \Cref{prop:empirical} hold.  Denote  $\hatN=\alpha \Sigma_{\hat{p}}+(1-\alpha)\Sigma_{\hat{q}}$ and $\N$ its population counterpart. We have: 
\begin{multline}
    \left|\mathbb{E}\left[ \Tr \Sigma_p (\N+\beta I)^{-1} - \Tr \Sigma_{\hat{p}}(\hatN + \beta I)^{-1}\right]\right| \\\leqslant  \left(\frac{12}{\alpha \mu} n \exp(- \frac{m \wedge n}{16 C(\beta)}) + \frac{6}{ \mu}\sqrt{\left(\frac1n + \frac1m\right)}\right) C(\beta) + \frac{28}{\alpha \mu^2} \left(\frac{1}{n}  + \frac{1}{m}\right)C(\beta)^2. 
\end{multline}
where $C(\beta) = \sup_{x \in \X} \langle \varphi(x), (\Sigma_p + \beta I)^{-1} \varphi(x) \rangle_{\mathcal{H}} $ is supposed to be inferior to $\frac{\mu (m \wedge n)}{24}$.
\end{proposition}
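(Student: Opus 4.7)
The plan is to reduce the problem by a centering argument, expand the resolvent difference to second order, and bound the two resulting terms via a good-event / bad-event split based on \Cref{lem:lambda_max}. Since $\N = \alpha\Sigma_p + (1-\alpha)\Sigma_q$ is deterministic and $\E[\Sigma_{\hat p}] = \Sigma_p$, the quantity $\Tr(\Sigma_p - \Sigma_{\hat p})(\N+\beta I)^{-1}$ has zero expectation, so
\begin{equation*}
    \E\bigl[\Tr \Sigma_p (\N+\beta I)^{-1} - \Tr \Sigma_{\hat p}(\hatN+\beta I)^{-1}\bigr] = \E\bigl[\Tr \Sigma_{\hat p}\bigl((\N+\beta I)^{-1} - (\hatN+\beta I)^{-1}\bigr)\bigr].
\end{equation*}
Setting $A \deq \N+\beta I$, $\hat A \deq \hatN+\beta I$, and $E \deq \hatN - \N = \alpha(\Sigma_{\hat p}-\Sigma_p) + (1-\alpha)(\Sigma_{\hat q}-\Sigma_q)$, iterating the resolvent identity $\hat A^{-1} = A^{-1} - \hat A^{-1} E A^{-1}$ once yields
\begin{equation*}
    A^{-1} - \hat A^{-1} = A^{-1} E A^{-1} - \hat A^{-1} E A^{-1} E A^{-1},
\end{equation*}
so I need to bound $\E[\Tr \Sigma_{\hat p} A^{-1} E A^{-1}]$ (linear term) and $\E[\Tr \Sigma_{\hat p} \hat A^{-1} E A^{-1} E A^{-1}]$ (quadratic remainder).

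For the linear piece, the $(1-\alpha)(\Sigma_{\hat q}-\Sigma_q)$ contribution vanishes by independence of $\hat p$ from $\hat q$. The residual $\alpha\,\E[\Tr \Sigma_{\hat p} A^{-1}(\Sigma_{\hat p}-\Sigma_p) A^{-1}]$ collapses, after expanding $\Sigma_{\hat p} = \tfrac{1}{n}\sum_i \varphi(x_i)\varphi(x_i)^*$ and separating diagonal from off-diagonal indices, to a $\tfrac{1}{n}$-scaled remainder of the form $\tfrac{\alpha}{n}\bigl(\E\langle\varphi(X), A^{-1}\varphi(X)\rangle_{\Hk}^2 - \Tr(\Sigma_p A^{-1} \Sigma_p A^{-1})\bigr)$. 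Using $A \succcurlyeq \alpha(\Sigma_p + \beta I)$ (since $\beta I \succcurlyeq \alpha\beta I$) and the uniform bound $\langle\varphi(x), (\Sigma_p+\beta I)^{-1}\varphi(x)\rangle_{\Hk} \leq C(\beta)$, this is absorbed into the $C(\beta)^2/(\alpha\mu^2 n)$ term of the final bound.

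For the quadratic remainder I adapt the strategy of \cite[Proposition 15]{bach2022information} and split expectation on the event $\mathcal{E} \deq \{\lambda_{\max}(D) \leq 1/2\}$ with $D \deq A^{-1/2} E A^{-1/2}$. On $\mathcal{E}$, the operator inequality $\hat A^{-1} \preccurlyeq 2 A^{-1}$ combined with cyclicity reduces the trace to a multiple of $\Tr\bigl(\Sigma_{\hat p} A^{-1/2} D^2 A^{-1/2}\bigr)$; a Cauchy--Schwarz step then separates the two factors of $E$ into independent $(\Sigma_{\hat p}-\Sigma_p)$ and $(\Sigma_{\hat q}-\Sigma_q)$ fluctuations whose Hilbert--Schmidt variances scale as $1/n$ and $1/m$. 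Carrying one factor of $C(\beta)/\mu$ through a single Cauchy--Schwarz produces the $\tfrac{6}{\mu}\sqrt{1/n+1/m}\,C(\beta)$ term, while carrying both factors yields the $\tfrac{28}{\alpha\mu^2}(1/n+1/m)C(\beta)^2$ term. On $\mathcal{E}^c$, the crude bounds $\hat A^{-1},A^{-1}\preccurlyeq \beta^{-1}I$ and $\|\Sigma_{\hat p}\|_{op} \leq n\sup_x k(x,x)$ are combined with \Cref{lem:lambda_max} at $u=1/2$: the hypothesis $C(\beta) \leq \mu(m\wedge n)/24$ tames the polynomial prefactor in the lemma, leaving the exponential factor $\exp(-(m\wedge n)/(16 C(\beta)))$ and producing the first summand in the bound.

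The main obstacle will be the bookkeeping in the good-event analysis: carrying the $\alpha^{-1}$ and $\mu^{-1}$ factors consistently through the operator-monotone inequalities so as not to lose an extra factor, choosing the right Cauchy--Schwarz pairing so that the $C(\beta)^1$ contribution obeys the $\sqrt{1/n+1/m}$ (rather than $1/n+1/m$) rate, and ensuring that the first-order $\Sigma_{\hat p}$ term isolated in the linear step is not double-counted when combining the good-event and bad-event estimates.
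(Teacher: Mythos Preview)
Your overall architecture---good/bad event split on $\lambda_{\max}(D)$, appeal to \Cref{lem:lambda_max} for the bad event, and operator-monotone bookkeeping on the good event---matches the paper's. But your route is genuinely different in one respect: you exploit that the statement asks for $|\E[\,\cdot\,]|$ (not $\E|\cdot|$) to drop the mean-zero piece $\Tr(\Sigma_p-\Sigma_{\hat p})(\N+\beta I)^{-1}$ and then use the second-order resolvent expansion $A^{-1}-\hat A^{-1}=A^{-1}EA^{-1}-\hat A^{-1}EA^{-1}EA^{-1}$. The paper instead keeps the first term and bounds it by its standard deviation, and writes the second via $D(I-D)^{-1}=D+D(I-D)^{-1}D$; both of these leave behind a \emph{first-order} contribution (in $\Tr D$, respectively in the standard deviation) that produces the $\tfrac{6}{\mu}\sqrt{1/n+1/m}\,C(\beta)$ term. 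Your decomposition eliminates exactly those first-order pieces: the linear term $\E[\Tr\Sigma_{\hat p}A^{-1}EA^{-1}]$ is $O(C(\beta)^2/(\alpha n))$ as you say, and the quadratic remainder on the good event is controlled by $\E[\Tr D^2]=O((1/n+1/m)C(\beta)^2/\mu^2)$. So your proof, if carried out, gives a bound \emph{without} the $\sqrt{1/n+1/m}$ term---sharper than the stated claim, hence still valid. Your paragraph explaining how a Cauchy--Schwarz step on the quadratic remainder ``produces the $\tfrac{6}{\mu}\sqrt{1/n+1/m}\,C(\beta)$ term'' does not make sense (a quantity quadratic in $E$ cannot yield a $\sqrt{1/n+1/m}$ rate), but this is a confusion rather than a gap: that term simply does not arise in your approach.

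Two actual fixes are needed. First, your bad-event crude bound is misstated: $\|\Sigma_{\hat p}\|_{op}\le\sup_x k(x,x)$ (not $n$ times), and combining $\hat A^{-1},A^{-1}\preccurlyeq\beta^{-1}I$ with $\Tr\Sigma_{\hat p}\le 1$ only gives $|\Tr\Sigma_{\hat p}(A^{-1}-\hat A^{-1})|\le 2/\beta$, which does not match the $n/(\alpha\mu)$ prefactor in the target bound. The paper's route is $\Tr\Sigma_{\hat p}(\hatN+\beta I)^{-1}\le\alpha^{-1}\Tr\hatN(\hatN+\beta I)^{-1}\le n/\alpha$ (rank) and $\Tr\Sigma_{\hat p}(\N+\beta I)^{-1}\le\tfrac{2}{\mu}C(\beta)\le\tfrac{m\wedge n}{12}$; you should use the same. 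Second, with $u=1/2$ the exponent from \Cref{lem:lambda_max} is $\tfrac{(m\wedge n)/4}{8(C(\beta)/\mu+1/12)}$, and under the hypothesis $C(\beta)\le\mu(m\wedge n)/24$ this does \emph{not} simplify to $(m\wedge n)/(16C(\beta))$; the paper takes $u=3/4$ and still needs $C(\beta)\ge\mu$ for the exponential to take the stated form. You will need to be equally careful (and likely choose $u$ larger than $1/2$) to recover the exponential factor exactly as written.
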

\begin{proof}

 We will denote
  \begin{align}\label{eq:bound_A_first}
      A &:= \Tr \Sigma_p (\N+\beta I)^{-1} - \Tr \Sigma_{\hat{p}}(\hatN + \beta I)^{-1} \nonumber\\
      &= \Tr (\Sigma_p - \Sigma_{\hat{p}})(\N+\beta I)^{-1} + \Tr \Sigma_{\hat{p}} (\N+\beta I)^{-1}(\N- \hatN) (\hatN + \beta I)^{-1}.
  \end{align}

In expectation, the first term in \eqref{eq:bound_A_first} can be upper-bounded as follows, using that $\Tr ((\Sigma_p - \Sigma_{\hat{p}})(\N+\beta I)^{-1})$ is the sum of zero-mean random variables:
\begin{align}
    |\mathbb{E} \Tr (\Sigma_p - \Sigma_{\hat{p}})(\N+\beta I)^{-1}| &\leqslant \sqrt{ \mathbb{E}[(\Tr (\Sigma_p - \Sigma_{\hat{p}})(\N+\beta I)^{-1})^2]} \nonumber \\
    &= \sqrt{\frac1n \Var[\langle \varphi(x), (\N+ \beta I)^{-1} \varphi(x) \rangle_{\mathcal{H}}]}\nonumber \\
    & \leqslant \sqrt{\frac1n \mathbb{E}[\langle \varphi(x), (\N+ \beta I)^{-1} \varphi(x) \rangle_{\mathcal{H}}^2]} \nonumber\\
    &\leqslant \frac{1}{\mu(1-\alpha)} \sqrt{\frac1n \mathbb{E}[\langle \varphi(x), (\Sigma_p + \frac{\beta}{\mu(1-\alpha)} I)^{-1} \varphi(x) \rangle_{\mathcal{H}}^2]} \nonumber\\
    & \leqslant \sqrt{\frac1n} \frac{1}{\mu(1-\alpha)} C\left(\frac{\beta}{\mu(1-\alpha)}\right) \nonumber 
    \\
    & \leqslant \sqrt{\frac1n} \frac{2}{\mu} C (\beta),\label{eq:bound_A_1}
\end{align}
where the third  inequality uses $\N\succcurlyeq (1-\alpha) \Sigma_q\succcurlyeq\mu(1-\alpha)\Sigma_p$ and the fourth uses the definition of $C(\beta)$ for $\beta>0$. The last inequality is due to the facts that $\alpha \leqslant \frac12$ and so $\frac1{1-\alpha} \leqslant 2$, and also that $\mu \leqslant 1$ so $\frac{\beta}{\mu(1-\alpha)} \geqslant \beta $ and because $\beta\mapsto C$ is non increasing on $]0,+\infty[$, we have $C\left(\frac{\beta}{\mu(1-\alpha)}\right) \leqslant C(\beta)$. These simplifications are used many times in this proof. 

The second term in \eqref{eq:bound_A_first} can be written as follows. Consider $D = (\N+\beta I)^{-\frac{1}{2}} (\N- \hatN) (\N+\beta I)^{-\frac{1}{2}}$ defined in \Cref{lem:lambda_max} and consider the case where $\lambda_{max}(D) \leqslant u <1$. Then $D \prec I$. Using the identity $D(I-D)^{-1} = D(I-D)^{-1} (D+I-D)$, as in the proof of \citep{rudi2017generalization}, we can write
\begin{align}\label{eq:B_bound}
  B &:= \Tr \Sigma_{\hat{p}} (\N+\beta I)^{-1}(\N- \hatN) (\hatN + \beta I)^{-1} \nonumber\\ &= \Tr \Sigma_{\hat{p}} (\N+\beta I)^{-\frac{1}{2}} D (I-D)^{-1}(\N+\beta I)^{-\frac{1}{2}}  \nonumber \\ 
  &=\Tr \Sigma_{\hat{p}} (\N+\beta I)^{-\frac{1}{2}} D (I-D)^{-1}D (\N+\beta I)^{-\frac{1}{2}} + \Tr \Sigma_{\hat{p}} (\N+\beta I)^{-\frac{1}{2}} D (\N+\beta I)^{-\frac{1}{2}}.
\end{align}

We have for the first term in \eqref{eq:B_bound}, using $\hatN\succcurlyeq\alpha \Sigma_{\hat{p}}$:
\begin{align*}
    \Tr \Sigma_{\hat{p}} (\N+\beta I)^{-\frac{1}{2}} D (\N+\beta I)^{-\frac{1}{2}} &= \Tr D^{\frac{1}{2}} (\N+\beta I)^{-\frac{1}{2}} \Sigma_{\hat{p}} (\N+\beta I)^{-\frac{1}{2}} D^{\frac{1}{2}} \\
    &\leqslant \frac{1}{\alpha} \Tr D^{\frac{1}{2}} (\N+\beta I)^{-\frac{1}{2}} \hatN (\N+\beta I)^{-\frac{1}{2}} D^{\frac{1}{2}},
\end{align*}
and, by the definition of $D$ above,
\begin{align*}
    - \lambda_{\max}(D) I & \preccurlyeq (\N+\beta I)^{-\frac{1}{2}} (\N- \hatN) (\N+\beta I)^{-\frac{1}{2}} \preccurlyeq \lambda_{\max}(D) I 
\end{align*}
where $ \lambda_{\max}(D)$ is the absolute value of the maximal eigenvalue of $(D^2)^{\frac{1}{2}}$. So 
\begin{equation}
  (\N+\beta I)^{-\frac{1}{2}} \hatN(\N+\beta I)^{-\frac{1}{2}} \preccurlyeq (\lambda_{\max}(D)+1) I \label{eq:bound_by_lambda_max_D}.
\end{equation}
 \ak{ca a l'air vrai l'inégalité du dessus meme si lambda max plus gros que 1?}\cc{fait} In this case, 
\begin{equation}
    \frac{1}{\alpha} \Tr D^{\frac{1}{2}} (\N+\beta I)^{-\frac{1}{2}} \hatN (\N+\beta I)^{-\frac{1}{2}} D^{\frac{1}{2}} \leqslant \frac{1 + \lambda_{\max}(D)}{\alpha} \Tr D.
    \label{eq:bound_A_2}
\end{equation} 
Still considering that $\lambda_{\max}(D)<1$ we have for the second term in \eqref{eq:B_bound}:
\begin{align}
    \Tr \Sigma_{\hat{p}} (\N+\beta I)^{-\frac{1}{2}} D (I-D)^{-1}&D (\N+\beta I)^{-\frac{1}{2}} 
  \nonumber  \\
   & \leqslant \| (\N+\beta I)^{-\frac{1}{2}} \Sigma_{\hat{p}} (\N+\beta I)^{-\frac{1}{2}} \|_{op} \Tr(D^2 (I-D)^{-1}) \nonumber \\
    &\leqslant \frac{1 + \lambda_{\max}(D)}{\alpha} \| (I-D)^{-1} \|_{op} \Tr D^2 \nonumber \\
    &= \frac{1 + \lambda_{\max}(D)}{\alpha(1-\lambda_{\max}(D))} \Tr D^2,\label{eq:bound_A_3}
\end{align}
where in the second inequality we used $\Sigma_{\hat{p}}\preccurlyeq \frac{1}{\alpha}\hatN$ and \eqref{eq:bound_by_lambda_max_D}.
Let $0<u<1$. Combining \eqref{eq:bound_A_1}, \eqref{eq:bound_A_2} and \eqref{eq:bound_A_3}, we have:
\begin{align}\label{eq:bound_A_idk}
    |A| &= \mathds{1}_{\lambda_{\max}(D) > u}|A| + \mathds{1}_{\lambda_{\max}(D) \leqslant u}|A| \nonumber\\
    &\leqslant \mathds{1}_{\lambda_{\max}(D) > u}|A| + \mathds{1}_{\lambda_{\max}(D) \leqslant u} \left(\frac{1 + u}{\alpha(1-u)} \Tr D^2 +  \frac{1 + u}{\alpha} \Tr D +  \sqrt{\frac1n} \frac{2}{\mu} C (\beta) \right)\nonumber \\
    &\leqslant \mathds{1}_{\lambda_{\max}(D) > u}|A| + \frac{1 + u}{\alpha(1-u)} \Tr D^2 +  \frac{1 + u}{\alpha} \Tr D + \sqrt{\frac1n} \frac{2}{\mu} C (\beta).
\end{align}
We now bound the first term of \eqref{eq:bound_A_idk} by upperbounding, going back to the formula given in \eqref{eq:bound_A_first}. 
Using $\N\succcurlyeq \mu(1-\alpha) \Sigma_p$, the term $\Tr \Sigma_p (\N+ \beta I)^{-1}$ is bounded by $\frac{1}{\mu(1-\alpha)} C\left(\frac{\beta}{\mu(1-\alpha)}\right) \leqslant \frac2{\mu} C(\beta) \leqslant  \frac{n \wedge m}{12}$ by hypothesis. And with $\hatN \succcurlyeq \frac1{\alpha}\Sigma_{\hat{p}}$, we have both $\Tr \Sigma_{\hat{p}} (\hatN+\beta I)^{-1} \leqslant \frac1{\alpha} \Tr \Sigma_{\hat{p}} (\Sigma_{\hat{p}}+\frac{\beta}{\alpha} I)^{-1} \leqslant \frac{n}{\alpha}$ and $\Tr \Sigma_{p} (\N+\beta I)^{-1} \leqslant \frac{1}{(1-\alpha) \mu} C(\beta) \leqslant \frac{2}{ \mu} C(\beta) \leqslant \frac{n \wedge m}{12}$. Then, almost surely, $|A| \leqslant \max\left\{\frac{n}{\alpha},\frac{n \wedge m}{12} \right\} \leqslant \frac{2 n}{\alpha}$\ak{ici j'ai l'impression que t'as borné que le 1er terme de \eqref{eq:bound_A_first}}\cc{oui c'est le cas c'était juste pour réécrire en entier le terme qu'on avait à borner}. Then, \eqref{eq:bound_A_idk}
 becomes:
 \begin{multline*}
    \mathbb{E} |A| \leqslant \mathbb{P}(\lambda_{\max}(D) > u) \frac{2 n}{\alpha} + \mathbb{E}\left[ \frac{1 + u}{\alpha(1-u)} \Tr D^2 +  \frac{1 + u}{\alpha} \Tr D \right] + \sqrt{\frac1n} \frac{2}{\mu} C (\beta).
\end{multline*}
With \Cref{lem:lambda_max} we have
\begin{multline*}
    \mathbb{P}(\lambda_{\max}(D) > u) \leqslant \frac{2}{\mu} C(\beta) \left( 1 + \frac{48}{u^4 (m \wedge n)^2} \left(\frac{C(\beta)}{\mu} + \frac{u}{6}\right)^2 \right) \exp\left(- \frac{(m \wedge n) u^2}{8(\frac{C(\beta)}{\mu} + \frac{u}{6})}\right).
\end{multline*}
Using the hypothesis that $C(\beta) \leqslant \frac{\mu (n \wedge m)}{24}$,
\begin{align}\label{eq:bound_expected_A}
   \mathbb{E} |A| &\leqslant \frac{4 n}{\mu \alpha}  C(\beta) \left( 1 + \frac{48}{u^4 (m \wedge n)^2} \left(\frac{m \wedge n}{24} + \frac{u}{6}\right)^2 \right) \exp\left(- \frac{(m \wedge n) u^2}{8(\frac{C(\beta)}{\mu} + \frac{u}{6})}\right)\nonumber \\
   &+ \mathbb{E}\left[ \frac{1 + u}{\alpha(1-u)} \Tr D^2 +  \frac{1 + u}{\alpha} \Tr D \right] + \sqrt{\frac1n} \frac{2}{\mu} C (\beta).
\end{align}

We now turn to bounding $\mathbb{E}[\Tr D]$. We have 
\begin{align}\label{eq:bound_D}
    \mathbb{E}[\Tr D] &\leqslant \sqrt{\mathbb{E}[(\Tr (\N+\beta I)^{-\frac{1}{2}} (\N- \hatN) (\N+\beta I)^{-\frac{1}{2}})^2]} 
\end{align}
and denoting \begin{align*}
X_i &= \Tr (\N+\beta I)^{-\frac{1}{2}} (\N- \varphi(x_i)\varphi(x_i)^*) (\N+\beta I)^{-\frac{1}{2}},\\ Y_j &= \Tr (\N+\beta I)^{-\frac{1}{2}} (\N- \varphi(y_j)\varphi(y_j)^*) (\N+\beta I)^{-\frac{1}{2}},
\end{align*}
we have
\begin{align*}
   \mathbb{E}[(\Tr& (\N+\beta I)^{-\frac{1}{2}} (\N- \hatN) (\N+\beta I)^{-\frac{1}{2}})^2] = \frac{\alpha^2}{n^2} \sum\limits_{i,k=1}^n \mathbb{E}[( \mathbb{E}[X] - X_i) \times  ( \mathbb{E}[X] - X_k)]\\
   &+ \frac{(1-\alpha)^2}{m^2} \sum\limits_{j,l=1}^m \mathbb{E}[\mathbb{E}[Y] - Y_j ) (\mathbb{E}[Y] - Y_l)] + \frac{\alpha(1-\alpha)}{nm}  \sum\limits_{i,j} \mathbb{E}[(\mathbb{E}[X] - X_i) (\mathbb{E}[Y] - Y_j).
\end{align*}
The variables $X_1,...,X_n,Y_1,...,Y_m$ are independent so we get
\begin{align*}
  \mathbb{E}&[(\Tr (\N+\beta I)^{-\frac{1}{2}} (\N- \hatN) (\N+\beta I)^{-\frac{1}{2}})^2] \\ 
  &= \frac{\alpha^2}{n^2} \sum_{i,k = 1}^n \mathbb{E}[X_i X_k] +  \frac{(1-\alpha)^2}{m^2} \sum_{j,l = 1}^n \mathbb{E}[Y_j Y_k] + \frac{\alpha (1-\alpha)}{nm} \sum_{i,j = 1}^n \mathbb{E}[X_i Y_j] \\
  &= \frac{\alpha^2}{n} \mathbb{E}[X^2] + \frac{\alpha^2(n-1)}{n} \mathbb{E}[X]^2 + \frac{(1-\alpha)^2}{m} \mathbb{E}[Y^2] + \frac{(1-\alpha)^2(m-1)}{m} \mathbb{E}[Y]^2\\
  &\qquad+ 2\alpha(1-\alpha) \mathbb{E}[X]\mathbb{E}[Y] \\
  &=\frac{\alpha^2}{n} (\mathbb{E}[X^2] - \mathbb{E}[X]^2) + \frac{(1-\alpha)^2}{m} (\mathbb{E}[Y^2]-\mathbb{E}[Y]^2) + (\alpha \mathbb{E}[X] + (1-\alpha) \mathbb{E}[Y])^2 \\
  &= \frac{\alpha^2}{n} \Var[X] + \frac{(1-\alpha)^2}{m} \Var[Y]. 
\end{align*}
The last equality is due to $\mathbb{E}[\alpha \varphi(x) \varphi(x)^* + (1-\alpha) \varphi(y)\varphi(y)^*] = 0$. We have 
\begin{align*}
  \Var[X] &= \Var[\Tr (\N+\beta I)^{-\frac{1}{2}} (\N- \varphi(x)\varphi(x)^*) (\N+\beta I)^{-\frac{1}{2}}]\\ &= \Var[\Tr (\N+\beta I)^{-\frac{1}{2}}  \varphi(x)\varphi(x)^* (\N+\beta I)^{-\frac{1}{2}}]\\
  &\leqslant \mathbb{E}[(\Tr (\N+\beta I)^{-\frac{1}{2}}  \varphi(x)\varphi(x)^* (\N+\beta I)^{-\frac{1}{2}})^2]\\ &= \mathbb{E}[(\langle \varphi(x), (\N+\beta I)^{-1} \varphi(x) \rangle_{\cH}^2]  
\end{align*} and the equivalent inequality is also verified for $Y$. Then, 
\begin{align*}
    \mathbb{E}[(\Tr (\N+\beta I)^{-\frac{1}{2}} &(\N- \hatN) (\N+\beta I)^{-\frac{1}{2}})^2] \\ 
    &\leqslant \frac{\alpha^2}{n} \mathbb{E} [\langle \varphi(x), (\N+\beta I)^{-1} \varphi(x) \rangle_{\cH}^2] + \frac{(1-\alpha)^2}{m}  \mathbb{E} [\langle \varphi(y), (\N+\beta I)^{-1} \varphi(y) \rangle_{\cH}^2] \\
   & \leqslant \frac{1}{\mu^2(1-\alpha)^2} (\frac{\alpha^2}{n}  + \frac{(1-\alpha)^2}{m})C\left(\frac{\beta}{\mu(1-\alpha)}\right)^2 \leqslant \frac{4}{\mu^2}\left(\frac1n + \frac1m\right) C(\beta)^2,
\end{align*}
so we \eqref{eq:bound_D} becomes 
\begin{equation*}
    \mathbb{E} [\Tr D] \leqslant \frac{2}{\mu}\sqrt{\left(\frac1n + \frac1m\right)} C(\beta).
\end{equation*}
Using similar calculations, we obtain 
\begin{equation*}\mathbb{E} [\Tr D^2] \leqslant \frac{4}{\mu^2} \left(\frac{1}{n}  + \frac{1}{m}\right)C(\beta)^2.
\end{equation*}
Finally, \eqref{eq:bound_expected_A} becomes
\begin{align*}\mathbb{E} |A| &\leqslant  \frac{4}{\alpha \mu} n \left( 1 + \frac{48}{u^4 (m \wedge n)^2} \left(\frac{m \wedge n}{24} + \frac{u}{6}\right)^2 \right) \exp\left(- \frac{(m \wedge n) u^2}{8(\frac{C(\beta)}{\mu} + \frac{u}{6})}\right) C(\beta) \\ & + \frac{1 + u}{\alpha(1-u)} \frac{4}{\mu^2} \left(\frac{1}{n}  + \frac{1}{m}\right)C(\beta)^2 +  \left(\frac{1 + u}{\alpha} \sqrt{\left(\frac1n + \frac1m\right)}  + \sqrt{\frac1n} \right)\frac{2}{\mu} C(\beta).
\end{align*}
Taking $ u = \frac34$ we get the final bound
\begin{align*}
  \mathbb{E} |A| &\leqslant \frac{4}{\alpha \mu} n \left( 1 + \frac{160}{(m \wedge n)^2} \left(\frac{m \wedge n}{24} + \frac{1}{8}\right)^2 \right) \exp\left(- \frac{9(m \wedge n) }{16(8\frac{C(\beta)}{\mu} + 1)}\right) C(\beta) \\ & + \frac{28}{\alpha \mu^2} \left(\frac{1}{n}  + \frac{1}{m}\right)C(\beta)^2 +  \frac{11}{2 \mu}\sqrt{\left(\frac1n + \frac1m\right)}  
  C(\beta)\\
  &\leqslant \left(\frac{12}{\alpha \mu} n \exp(-\mu \frac{m \wedge n}{16 C(\beta)}) + \frac{6}{ \mu}\sqrt{\left(\frac1n + \frac1m\right)}\right) C(\beta) + \frac{28}{\alpha \mu^2} \left(\frac{1}{n}  + \frac{1}{m}\right)C(\beta)^2.\qedhere
\end{align*}

\end{proof}

\subsubsection{Final proof of \Cref{prop:empirical}}\label{sec:final_proof_empirical_bound}

    From \citep[Proposition 7]{bach2022information} we already have a bound on the entropy term:
    \begin{equation}\label{eq:borne_francis}
        \mathbb{E}[|\Tr(\Sigma_{\hat{p}} \log \Sigma_{\hat{p}}) - \Tr(\Sigma_p \log \Sigma_p)|] \leqslant \frac{1+c(8\log n)^2}{n} + \frac{17}{\sqrt{n}} (2 \sqrt{c} + \log n).
    \end{equation}
In the following we will closely follow the proof of \citep[Proposition 7]{bach2022information} in order to bound the cross terms difference. We can write with the integral representation in \cite{bach2022information} (Eq 5),
$$\Tr \Sigma_{\hat{p}} \log \hatN - \Tr \Sigma_{p} \log \N= \int_{0}^{+ \infty} \Tr \Sigma_{p}  (\N+  \beta I)^{-1} - \Tr \Sigma_{\hat{p}} (\hatN +  \beta I)^{-1} d\beta.$$

We will treat separately the integral part close to infinity, the one close to zero and the central part. Let $\beta_1 > \beta_0 > 0$. From $\beta_1$ to infinity we have
\begin{align*}
  \int_{\beta_1}^{+ \infty} \Tr \Sigma_{p}  (\N+  \beta I)^{-1} - \Tr \Sigma_{\hat{p}} (\hatN + \beta I)^{-1} d\beta &= \Tr \Sigma_{\hat{p}} \log (\hatN + \beta_1 I) - \Sigma_{p} \log (\N+\beta_1 I) \\
  &\leqslant \log(1 + \beta_1) - \log \beta_1 \leqslant \frac{1}{\beta_1}. 
\end{align*}
From 0 to $\beta_0$ we have
\begin{align*}
    \int_{0}^{\beta_0} \Tr \Sigma_{p}& (\N+ \beta I)^{-1}  d\beta  \leqslant  \int_{0}^{\beta_0} \Tr \Sigma_{p} ((1-\alpha)\mu \Sigma_p + \beta I)^{-1} d\beta \\
    &= \frac{1}{\mu(1-\alpha)} \int_{0}^{\beta_0}  \Tr \Sigma_{p} (\Sigma_p + \frac{\beta}{\mu(1-\alpha)} I)^{-1} d\beta \\
    &\leqslant \frac{1}{\mu(1-\alpha)} \int_{0}^{\beta_0} \sup_{x \in \X} \langle \varphi(x), (\Sigma_p + \frac{\beta}{\mu(1-\alpha)} I)^{-1} \varphi(x)\rangle_{\cH}  d\beta \leqslant \frac{2}{\mu} \int_{0}^{\beta_0} C(\beta) d\beta,   
\end{align*}
where $C(\beta) = \sup_{x \in \X} \langle \varphi(x), (\Sigma_p + \beta I)^{-1} \varphi(x)\rangle_{\cH}$. We also have 
\begin{align*}
 \int_{0}^{\beta_0} \Tr \Sigma_{\hat{p}} (\hatN + \beta I)^{-1} d\beta &\leqslant \int_{0}^{\beta_0} \Tr \Sigma_{\hat{p}} (\alpha \Sigma_{\hat{p}} + \beta I)^{-1} d\beta \leqslant \frac{1}{\alpha} n \beta_0.
\end{align*}

By \Cref{prop:degrees_of_freedom} we have 
\begin{align*}
    &\mathbb{E} |\Tr \Sigma_{\hat{p}} \log \hatN - \Tr \Sigma_{p} \log \N| \\ &\leqslant \frac{1}{\mu \beta_1} + \frac{1}{\alpha} n \mu \beta_0 + \int_{0}^{\beta_0} C(\beta) d\beta  + \int_{\beta_0}^{\beta_1} \mathbb{E} |\Tr \Sigma_{\hat{p}} \log \hatN - \Tr \Sigma_{p} \log \N| d\beta \\ 
    &\leqslant \frac{1}{\mu \beta_1} + \frac{1}{\alpha} n \mu \beta_0 + \int_{0}^{\beta_0} C(\beta) d\beta + \left(\frac{12}{\alpha \mu} n \exp(- \mu \frac{m \wedge n}{16 C(\beta_0)}) + \frac{6}{\mu}\sqrt{\left(\frac1n + \frac1m\right)}\right) \int_{\beta_0}^{\beta_1}  C(\beta) d \beta \\
    & + \frac{28}{\alpha \mu^2} \left(\frac{1}{n}  + \frac{1}{m}\right) \int_{\beta_0}^{\beta_1} C(\beta)^2 d \beta.
\end{align*}
We now take $\beta_0$ such that $C(\beta_0) = \mu \frac{n \wedge m}{24 \log(n)}$. The function $\beta\mapsto C(\beta)$ being non-increasing on $]0,\infty[$,  the condition of \Cref{prop:degrees_of_freedom}, which is $C(\beta) \leqslant \frac{\mu (m \wedge n)}{24}$, is well satisfied between $\beta_0$ and $\beta_1$ for this choice of $\beta_0$. We then have
\begin{align*}
   \frac{12}{\alpha \mu} n \exp(- \mu \frac{m \wedge n}{16 C(\beta_0)}) &\leqslant  \frac{12}{\alpha \mu} n \exp(-\frac{24}{16} \log(n)) \leqslant \frac{12}{\alpha \mu} \frac{1}{\sqrt{n}} \leqslant  \frac{12}{\alpha \mu} \sqrt{\frac1n + \frac1m},
\end{align*}
and also, $\int_{\beta_0}^{\beta_1} C(\beta)^2 d \beta \leqslant c$. Then,
\begin{multline*}\mathbb{E} |\Tr \Sigma_{\hat{p}} \log \hatN - \Tr \Sigma_{p} \log \N| \leqslant \frac{1}{\mu \beta_1} + \frac{1}{\alpha} n \mu \beta_0 + \int_{0}^{\beta_0} C(\beta) d\beta \\
+ \frac{18}{\alpha \mu}\sqrt{\left(\frac1n + \frac1m\right)}\int_{\beta_0}^{\beta_1}  C(\beta) d \beta + \frac{28 c}{\alpha \mu^2} \left(\frac{1}{n}  + \frac{1}{m}\right).
\end{multline*}

The function $\beta \mapsto C(\beta)$ is decreasing so, $C(\beta)^2 \frac{\beta}{2} \leqslant \int_{\beta/2}^{\beta} C(\beta')^2 d \beta' \leqslant c$ and so, $C(\beta) \leqslant \sqrt{\frac{2c}{\beta}}$. We also deduce from that 
\begin{equation}\label{eq:bound_beta0}
    \beta_0 \leqslant 2c \left(\frac{ 24 \log (n)}{\mu (m \wedge n)}\right)^2.
\end{equation} 
Hence
$$\frac{1}{\alpha} n \mu \beta_0 + \int_{0}^{\beta_0} C(\beta) d\beta \leqslant \frac{2c}{\alpha} n \left(\frac{ 24 \log (n)}{\mu (m \wedge n)}\right)^2   + \frac{96 c \log(n)}{\mu(n\wedge m)}.$$

We now take $\beta_1 = \beta_0 + n$. We have
\begin{align*}
    \int_{\beta_0}^{\beta_1} C(\beta) d\beta \leqslant \int_0^{\beta_0+1} C(\beta) d\beta + \log \frac{\beta_0+n}{\beta_0 + 1} \leqslant \log n + 2 \sqrt{c(1+\beta_0)}.
\end{align*}

Then, plugging the bound on $\beta_0$ given by \eqref{eq:bound_beta0},
\begin{multline}\label{eq:final_empirical_bound_cross_term}
  \mathbb{E} |\Tr \Sigma_{\hat{p}} \log \hatN - \Tr \Sigma_{p} \log \N|   \leqslant \frac{1}{\mu n} + \frac{2c}{\alpha} \frac{ \times  n (24 \log n)^2}{\mu (m \wedge n)^2}   + \frac{96 c \log(n)}{\mu(n\wedge m)} \\
    +  \frac{18}{\alpha \mu}\sqrt{\left(\frac1n + \frac1m\right)}(\log n + 2 \sqrt{c(1+\frac{2c}{\mu^2} \frac{(24 \log n)^2}{(m\wedge n)^2} })) + \frac{28 c}{\alpha \mu^2} \left(\frac{1}{n}  + \frac{1}{m}\right).
\end{multline}

Concatenating with (\ref{eq:borne_francis}), we get 
\begin{multline}\label{eq:vrai_borne}
\mathbb{E} | \KKL_{\alpha}(\hat{p}||\hat{q}) - \KKL_{\alpha}(p||q) | \leqslant  \frac{1+ \frac{1}{\mu}+c(8\log n)^2}{n}   + \frac{2c}{\mu \alpha} \frac{ n (24 \log n)^2}{(m \wedge n)^2}  + \frac{17}{\sqrt{n}} (2 \sqrt{c} + \log n) \\   +\frac{28 c}{\alpha \mu^2} \left(\frac{1}{n}  + \frac{1}{m}\right)+ \frac{96 c \log(n)}{\mu(n\wedge m)} 
    +  \frac{18}{\alpha \mu}\sqrt{\left(\frac1n + \frac1m\right)}(\log n + 2 \sqrt{c(1+\frac{2c}{\mu^2} \frac{(24 \log n)^2}{(m \wedge n)^2})}) .
\end{multline}

Using increments such that $1 \leqslant \log n \leqslant (\log n)^2$, $\frac1n \leqslant \frac1{n\wedge m} \leqslant \frac1{\sqrt{m\wedge n}} \leqslant 1$ and $\alpha, \mu \leqslant 1$, we can upperbound (\ref{eq:vrai_borne}) by a  simpler bound, while still retaining the main convergence rates in $\frac{\log n}{\sqrt{m \wedge n}}$ and in $\frac{(\log n)^2}{m \wedge n}$. This bound is 
\begin{multline}
\mathbb{E} | \KKL_{\alpha}(\hat{p}||\hat{q}) - \KKL_{\alpha}(p||q) | \leqslant \frac{35}{ \sqrt{m \wedge n}} \frac1{\alpha \mu}(2 \sqrt{c} + \log n) \\
+ \frac{1}{m \wedge n} \left( 1 + \frac1{\mu} +  (24 \log n)^2 \frac{c}{\alpha \mu^2} (1 + \frac{n}{m \wedge n})\right).
\end{multline}

As mentioned in \Cref{remark:stat_bound_alpha}, it is possible to re-write this proof without considering the assumptions that $p \ll q$ and $\frac{dp}{dq} \le \frac1{\mu}$ and to derive a bound similar to this one but which scales in $O(\frac1{\alpha^2})$ instead of $O(\frac1 {\alpha})$. This bound is 
\begin{multline}
\mathbb{E} | \KKL_{\alpha}(\hat{p}||\hat{q}) - \KKL_{\alpha}(p||q) | \leqslant \frac{32}{\alpha \sqrt{m \wedge n}} (2 \sqrt{c} + \log n) \\
+ \frac{2}{m \wedge n} \left(\frac{1}{\alpha} +   \frac{c(26 \log n)^2}{\alpha^2} (1 + \frac{n}{m \wedge n})\right).
\end{multline}
To get this new upper bound, the operator inequality $\N \succcurlyeq (1-\alpha) \Sigma_q \succcurlyeq (1-\alpha) \mu \Sigma_p$ must be replaced, each time it is used, by $\N \succcurlyeq \alpha \Sigma$. This way, the operator inequality $(\N + \beta I)^{-1} \preccurlyeq \frac1{\mu(1-\alpha)} (\Sigma_p + \beta I)^{-1} $ becomes $(\N + \beta I)^{-1} \preccurlyeq \frac1{\alpha} (\Sigma_p + \beta I)^{-1} $ which explains the additional factor $\frac1{\alpha}$ in the final bound.

\subsection{Proof of \Cref{prop:kkla_discrete}}\label{sec:proof_kkla_discrete}

According to \cite[Proposition 6]{bach2022information} we have that the eigenvalues of $\Sigma_{\hat{p}}$ (resp  $\Sigma_{\hat{q}}$) are the same than the ones of $\nicefrac{1}{n}\Kp$; and  we also have that for an eigenvalue $\lambda$ of $\nicefrac{1}{n}\Kp$ with associated eigenvector $\bm{\alpha}$, the function $f = \sum_{i=1}^n \alpha_i \varphi(x_i)$ is an eigenvector of $\Sigma_{\hat{p}}$ associated to the same eigenvalue. Hence, denoting $(\lambda_i)_{i=1}^n$ the eigenvalues of $\nicefrac{1}{n}\Kp$ and $\bm{a}^s$ the associated normalized eigenvectors, the first term in \eqref{eq:regularized_kl} writes:
$$\Tr(\Sigma_{\hat{p}} \log \Sigma_{\hat{p}}) = \Tr(\frac1n \Kp \log \frac1n \Kp)=\sum_{i=1}^{n} \lambda_i \log(\lambda_i).$$
We now turn to the second term in \eqref{eq:regularized_kl}. 
Let $\phi_x = (\varphi(x_1),\dots, \varphi(x_n))^*$ ,  $\phi_y =  (\varphi(y_1),\dots, \varphi(y_m))^*$ and $\psi = \begin{pmatrix}
    \sqrt{\frac{\alpha}{n}} \phi_x \\ \sqrt{\frac{1-\alpha}{m}} \phi_y
\end{pmatrix}$. We have $\Sigma_{\hat{p}} = \psi^T \begin{pmatrix}
        \frac{1}{\alpha} I && 0 \\ 0 && 0
    \end{pmatrix} \psi$ and $\alpha \Sigma_{\hat{p}} + (1-\alpha) \Sigma_{\hat{q}} = \psi^T \psi$. We also remark that $\psi \psi^T = K$. Knowing that the operator $\psi^T \psi$ and the matrix $\psi \psi^T$ have the same spectrum, we will replace  $\alpha \Sigma_{\hat{p}} + (1-\alpha) \Sigma_{\hat{q}}$ by $K$ in the expression of $\KKL_{\alpha}$, which we can do with the following lemma. 

\begin{lemma}\label{lem:psi}
    If $\psi \in \mathbb{R}^{d \times r}$ with $d > r$, $g : \mathbb{R}^+ \rightarrow \mathbb{R}$ and $\psi \psi^T$ is invertible, then 
    $$g(\psi^T \psi) = \psi^T (\psi \psi^T)^{-\frac{1}{2}} g(\psi \psi^T) (\psi \psi^T)^{-\frac{1}{2}} \psi $$
\end{lemma}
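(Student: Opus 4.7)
My plan is to prove the identity by reducing to the diagonal case via the singular value decomposition. First I would write $\psi = U S V^T$ with $U \in \mathbb{R}^{d \times d}$ and $V \in \mathbb{R}^{r \times r}$ orthogonal, and $S$ the rectangular matrix carrying the singular values $\sigma_1, \ldots, \sigma_{\min(d,r)}$ on its main diagonal. Invertibility of $\psi\psi^T$ forces all singular values to be positive (and, for the dimensions to be compatible, the effective rank of $\psi$ to be the smaller of $d,r$), so that $SS^T = \operatorname{diag}(\sigma_1^2, \ldots, \sigma_d^2)$ is a genuinely invertible diagonal matrix. This reduces the problem to a purely algebraic check on diagonal matrices.

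With the SVD substituted in, the right-hand side becomes
\begin{equation*}
\psi^T (\psi\psi^T)^{-1/2} g(\psi\psi^T) (\psi\psi^T)^{-1/2} \psi
= V \bigl[ S^T (SS^T)^{-1/2} g(SS^T) (SS^T)^{-1/2} S \bigr] V^T,
\end{equation*}
and I would compute the bracketed $r \times r$ matrix directly in coordinates. Because $S$ has nonzero entries only in its top-left diagonal positions and $(SS^T)^{-1/2} = \operatorname{diag}(1/\sigma_1, \ldots, 1/\sigma_d)$, the inner product collapses: the $(j,j)$ entry equals $\sigma_j \cdot \sigma_j^{-1} \cdot g(\sigma_j^2) \cdot \sigma_j^{-1} \cdot \sigma_j = g(\sigma_j^2)$ for $j \le d$ and vanishes otherwise. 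On the other hand, functional calculus yields $g(\psi^T \psi) = V\, g(S^T S)\, V^T$, and $S^T S$ has the same nonzero spectrum $(\sigma_1^2, \ldots, \sigma_d^2)$. Hence the two expressions agree on the range of $\psi^T$, which is all that is used in the application.

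A fully algebraic alternative, if preferred, is to first verify the identity for monomials $g(x) = x^k$ with $k \ge 1$ via the telescoping $(\psi^T \psi)^k = \psi^T (\psi\psi^T)^{k-1} \psi$, and then extend by linearity and continuity. The only subtle point, and the one I would treat carefully, is the interpretation of $g(0)$: for $g = \log$ this value is not defined, but in the setting of \Cref{prop:kkla_discrete} the identity is always invoked inside traces against $\Sigma_{\hat p} = \psi^T I_\alpha \psi$, whose range sits inside the range of $\psi^T$, so the zero eigenvalues of $\psi^T \psi$ contribute nothing and the formula gives the correct value. This is the main (minor) obstacle, and it is resolved by restricting attention from the start to the range of $\psi^T$, where $\psi^T \psi$ has a strictly positive spectrum that matches the spectrum of $\psi \psi^T$.
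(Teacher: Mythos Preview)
Your approach is essentially the paper's: both proofs reduce to the diagonal case via the singular value decomposition and then read off the identity entrywise (the paper phrases the last step as $(\psi\psi^T)^{-1/2}\psi = UV^T$). You go slightly further by flagging the $g(0)$ subtlety and by sketching an alternative monomial-plus-continuity argument, neither of which appears in the paper. One bookkeeping caveat: the lemma's dimension labels are inconsistent (with $\psi\in\mathbb{R}^{d\times r}$ and $d>r$ the matrix $\psi\psi^T$ cannot be invertible); the paper's own proof and the application both treat $\psi$ as having the \emph{smaller} number of rows, so your claim that $SS^T=\operatorname{diag}(\sigma_1^2,\ldots,\sigma_d^2)$ is invertible should be read with the roles of $d$ and $r$ swapped.
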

\begin{proof}
Let $\psi = U Diag(S) V^T$ the singular value decomposition of $\psi$ with $U \in \mathbb{R}^{r \times r}$ and $V \in \mathbb{R}^{d \times d}$ orthonormal matrices. We have : $\psi \psi^T = U Diag(S^2) U^T$ and $\psi^T \psi = V Diag(S^2) V^T$, so, $g(\psi \psi^T) = U Diag(g(S^2)) U^T$ and $g(\psi^T \psi) = V Diag(g(S^2)) V^T$. And so, $g(\psi^T \psi) = V U^T g(\psi \psi^T) U V^T$. Noticing that $(\psi \psi^T)^{-\frac{1}{2}} \psi = U V^T$  concludes the proof. 
\end{proof}
By \Cref{lem:psi} we can write  the second term in \eqref{eq:regularized_kl} as:
\begin{align*}
   \Tr (\Sigma_{\hat{p}} \log (\alpha \Sigma_{\hat{p}} + (1-\alpha) \Sigma_{\hat{q}})) &= \Tr \left(\psi^T \begin{pmatrix}
        \frac{1}{\alpha} I && 0 \\ 0 && 0
    \end{pmatrix} \psi \psi^T (\psi \psi^T)^{-\frac{1}{2}} \log(\psi \psi^T) (\psi \psi^T)^{-\frac{1}{2}} \psi\right) \\
    &= \Tr \left( \begin{pmatrix}
        \frac{1}{\alpha} I && 0 \\ 0 && 0
    \end{pmatrix} K^{\frac{1}{2}} \log(K) K^{\frac{1}{2}}\right) \\
    & = \Tr \left(\begin{pmatrix}
        \frac{1}{\alpha} I && 0 \\ 0 && 0
    \end{pmatrix} K \log(K)\right),
\end{align*}
where the last equality results from the fact that $K^{\frac{1}{2}}$ and $\log K$ commute because they have the same eigenbasis.

\subsection{Proof of \Cref{prop:first_variation}}\label{sec:proof_first_variation}
We write $\cF = \cF_1+\cF_2$ and derive the first variation of each functional in the next two lemmas. Then, we conclude on the first variation of $\cF$.

\begin{lemma}\label{lem:F1} Let $\hat{p}$ as defined in \Cref{prop:kkla_discrete}. 
    The first variation of $\cF_1:\hat{p} \rightarrow \Tr (\Sigma_{\hat{p}} \log \Sigma_{\hat{p}})$ at $\hat{p}$ is, for $x \in \supp(\hat{p})$:
    $$\cF_1'(\hat{p})(x) = \Tr(\varphi(x)\varphi(x)^*(I + \log \Sigma_{\hat{p}})),$$
    and $+\infty$ else.
\end{lemma}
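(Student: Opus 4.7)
The goal is to identify $\cF_1'(\hat{p})(x)$ via the defining identity~\eqref{eq:first_var_def} with test direction $\xi = q - \hat{p}$ for $q \in \cP_2(\R^d)$. The key observation is that $\mu \mapsto \Sigma_\mu$ is linear, so $\Sigma_{\hat{p}+\epsilon\xi} = \Sigma_{\hat{p}} + \epsilon\Sigma_\xi$ with $\Sigma_\xi = \int \varphi(x)\varphi(x)^* d\xi(x)$, and $\cF_1(\hat{p}) = \sum_i f(\lambda_i(\Sigma_{\hat{p}}))$ with $f(t) = t\log t$. I will follow the residue-calculus approach already used in the sketch of \Cref{prop:first_variation}: choose a simple closed contour $\gamma \subset \mathbb{C}$ enclosing all strictly positive eigenvalues of $\Sigma_{\hat{p}}$ (and of its perturbation for small $\epsilon$), while avoiding the non-positive real axis where $z\log z$ has its branch cut. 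This gives
$$\cF_1(\hat{p}+\epsilon\xi)-\cF_1(\hat{p}) = \frac{1}{2 i \pi}\oint_\gamma f(z)\left[\Tr((zI-\Sigma_{\hat{p}}-\epsilon\Sigma_\xi)^{-1}) - \Tr((zI-\Sigma_{\hat{p}})^{-1})\right] dz.$$

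Next, I will apply the first-order resolvent expansion $(zI-\Sigma_{\hat{p}}-\epsilon\Sigma_\xi)^{-1} = (zI-\Sigma_{\hat{p}})^{-1} + \epsilon (zI-\Sigma_{\hat{p}})^{-1}\Sigma_\xi(zI-\Sigma_{\hat{p}})^{-1} + O(\epsilon^2)$, and expand each resolvent in the orthonormal eigenbasis $(f_k)_k$ of $\Sigma_{\hat{p}}$, so that $\Tr((zI-\Sigma_{\hat{p}})^{-1}\Sigma_\xi(zI-\Sigma_{\hat{p}})^{-1}) = \sum_k \langle f_k,\Sigma_\xi f_k\rangle_{\cH}/(z-\lambda_k)^2$. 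Each summand has a double pole at $\lambda_k$ inside $\gamma$, and the residue theorem gives $\frac{1}{2 i \pi}\oint_\gamma f(z)/(z-\lambda_k)^2\,dz = f'(\lambda_k) = 1+\log\lambda_k$. Collecting terms and dividing by $\epsilon$ then yields
$$\lim_{\epsilon\to 0}\frac{1}{\epsilon}\big[\cF_1(\hat{p}+\epsilon\xi)-\cF_1(\hat{p})\big] = \sum_k (1+\log\lambda_k)\,\langle f_k,\Sigma_\xi f_k\rangle_{\cH} = \Tr\big((I+\log\Sigma_{\hat{p}})\Sigma_\xi\big).$$
Substituting $\Sigma_\xi = \int \varphi(x)\varphi(x)^* d\xi(x)$ and using linearity of the trace identifies $\cF_1'(\hat{p})(x) = \Tr(\varphi(x)\varphi(x)^*(I+\log\Sigma_{\hat{p}}))$ at every $x$ for which $\varphi(x) \in \mathrm{Range}(\Sigma_{\hat{p}})$, which is the case for $x \in \supp(\hat{p})$.

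The case $x \notin \supp(\hat{p})$ must be handled separately: along the admissible direction $\xi = \delta_x - \hat{p}$ one has $\Sigma_{(1-\epsilon)\hat{p}+\epsilon\delta_x} = (1-\epsilon)\Sigma_{\hat{p}} + \epsilon\varphi(x)\varphi(x)^*$, which for $\varphi(x) \notin \mathrm{Range}(\Sigma_{\hat{p}})$ develops a new eigenvalue of order $\epsilon\|P_{\Ker}\varphi(x)\|_{\cH}^2$. The corresponding $\epsilon c\log(\epsilon c)$ contribution has an $\epsilon$-derivative that diverges as $\epsilon \to 0^+$, matching the natural convention under which the formal expression $\Tr(\varphi(x)\varphi(x)^*(I+\log\Sigma_{\hat{p}}))$ extends to an infinite value on the complement of $\supp(\hat{p})$.

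The main technical obstacle is the justification of the contour argument when the perturbation could send eigenvalues across $\gamma$. For $\xi$ supported inside $\supp(\hat{p})$ the eigenvectors remain in a fixed finite-dimensional subspace and $\gamma$ can be chosen uniformly in $\epsilon$, so the residue computation is clean; the boundary case $x \notin \supp(\hat{p})$ is then treated by the separate direct expansion above, which avoids tracking new eigenvalues escaping from zero inside the contour. An alternative route is the Daleckii-Krein integral representation of derivatives of operator functions, which would reach the same identity without contour integrals but break the parallel with the proof of \Cref{prop:first_variation}.
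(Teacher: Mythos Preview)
Your proof is correct and follows essentially the same route as the paper: both use the contour/residue representation $\sum_i f(\lambda_i(\Sigma_{\hat p}+\Delta))-f(\lambda_i(\Sigma_{\hat p}))=\frac{1}{2i\pi}\oint_\gamma f(z)\bigl[\Tr((zI-\Sigma_{\hat p}-\Delta)^{-1})-\Tr((zI-\Sigma_{\hat p})^{-1})\bigr]\,dz$ with $f(z)=z\log z$, the first-order resolvent expansion, the eigenbasis decomposition leading to double poles at the $\lambda_k$, and the residue $f'(\lambda_k)=1+\log\lambda_k$. The only substantive addition in your write-up is the explicit treatment of the off-support case $x\notin\supp(\hat p)$ via the emerging eigenvalue of order $\epsilon$, which the paper states but does not argue; otherwise the arguments coincide.
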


\begin{proof}
    In this proof we use residual formula which  is useful to derive spectral functions \footnote{See \url{https://francisbach.com/cauchy-residue-formula/}.}.  Indeed, we can write $\Tr (\Sigma_{\hat{p}} \log \Sigma_{\hat{p}}) = \sum_{i=1}^n f(\lambda_i(\Sigma_{\hat{p}}))$ with $f : \mathbb{C}\backslash \mathbb{R}^- \rightarrow \mathbb{C}, z \rightarrow z \log z$. Consider a perturbation $\xi \in \cP(\X)$, $\varepsilon > 0 $ and let  $\Delta = \varepsilon \Sigma_{\xi}$.  Let $z \in \mathbb{C}$. Using the linearity of $p\mapsto \Sigma_p$, we have
\begin{align*}
       \Tr ((\Sigma_{\hat{p}+\epsilon\xi}) \log (\Sigma_{\hat{p}+\epsilon \xi})) - \Tr (\Sigma_{\hat{p}} \log \Sigma_{\hat{p}})&=  \Tr ((\Sigma_{\hat{p}}+\Delta) \log (\Sigma_{\hat{p}}+\Delta)) - \Tr (\Sigma_{\hat{p}} \log \Sigma_{\hat{p}})\\
       &= \sum_{i=1}^n f(\lambda_i(\Sigma_{\hat{p}} + \Delta)) - f(\lambda_i(\Sigma_{\hat{p}})).
    \end{align*}
Let $\gamma$ be a closed directed contour in $\mathbb{C}\backslash \mathbb{R}^-$ which surrounds all the positive eigenvalues of $\Sigma_{\hat{p}}$ and $\Sigma_{\hat{p}} + \Delta$.  We have 
\begin{align*}
  \sum_{i=1}^n f(\lambda_i(\Sigma_{\hat{p}} + \Delta)) - f(\lambda_i(\Sigma_{\hat{p}})) & = \frac{1}{2 i \pi}  \oint_{\gamma} f(z) \Tr((zI-\Sigma_{\hat{p}} - \Delta)^{-1}) - f(z) \Tr((zI-\Sigma_{\hat{p}})^{-1}) dz \\
  &= \frac{1}{2 i \pi}  \oint_{\gamma} f(z) \Tr((zI-\Sigma_{\hat{p}} - \Delta)^{-1} -(zI-\Sigma_{\hat{p}})^{-1}) dz ,
\end{align*}
where
$$(zI-\Sigma_{\hat{p}} - \Delta)^{-1} -(zI-\Sigma_{\hat{p}})^{-1} = (zI-\Sigma_{\hat{p}})^{-1} \Delta (zI-\Sigma_{\hat{p}})^{-1} + o(\|\Delta\|_{op}).$$ Hence,  denoting $\Sigma_{\hat{p}} = \sum_{i=1}^n \lambda_i f_i f_i^*$ the singular value decomposition of $\Sigma_{\hat{p}}$, we have
\begin{align*}
   \Tr ((\Sigma_{\hat{p}}+\Delta) \log (\Sigma_{\hat{p}}+\Delta)) &- \Tr \Sigma_{\hat{p}} \log \Sigma_{\hat{p}} \\
   &=  \frac{1}{2 i \pi} \oint_{\gamma} f(z) \Tr((zI-\Sigma_{\hat{p}})^{-1} \Delta (zI-\Sigma_{\hat{p}})^{-1})dz + o(\varepsilon) \\
   &= \frac{1}{2 i \pi}  \oint_{\gamma} f(z) \Tr(\sum_{i=1}^n \sum_{k=1}^n \frac{f_i f_i^* \Delta f_k f_k^*}{(z- \lambda_i)(z-\lambda_k)}) dz + o(\varepsilon) \\
   &= \frac{1}{2 i \pi} \sum_{k=1}^n \oint_{\gamma}  \frac{f(z)}{(z-\lambda_k)^2} dz \Tr(f_k^* f_k \Delta) + o(\varepsilon).
\end{align*}
The residue of $h(z)=\frac{f(z)}{(z-\lambda_k)^2}=\frac{z \log z}{(z-\lambda_k)^2}$ at $\lambda_k$ is\footnote{Using that if $h(z)=\frac{f(z)}{(z-\lambda)^2}$, then $Res(h,\lambda)=f''(z)$ where $f(z)=z\log(z).$ Recall that $Res(h,\lambda)=\frac{1}{2i\pi} \oint_{\gamma} h(z)dz$ where $\gamma$ is a contour circling strictly $\lambda$.} 
$Res(h,\lambda_k) = 1 + \log \lambda_k$. Applying again the residue formula we have 
\begin{align*}
   \Tr ((\Sigma_{\hat{p}}+\Delta) \log (\Sigma_{\hat{p}}+\Delta)) - \Tr (\Sigma_{\hat{p}} \log \Sigma_{\hat{p}})&= \sum_{k=1}^n (1+\log \lambda_k) \Tr(f_k  f_k^* \Delta) + o(\varepsilon) \\
   &= \Tr ( (I + \log \Sigma_{\hat{p}}) \Delta) +o(\epsilon) \\
   &= 1 + \Tr(\log (\Sigma_{\hat{p}}) \Delta) +o(\epsilon) 
\end{align*}
This concludes the proof by dividing the later quantity by $\epsilon$ and taking the limit as $\epsilon\to 0$.
\end{proof}

\begin{lemma} Let $\hat{p}, \hat{q}$ as defined in \Cref{prop:kkla_discrete}. 
    The first variation of $\cF_2:\hat{p} \rightarrow \Tr (\Sigma_{\hat{p}} \log (\alpha \Sigma_{\hat{p}} + (1-\alpha) \Sigma_{\hat{q}}))$ at $\hat{p}$ is, for any $x\in \supp(\hat{p})$:
    \begin{multline}
        \cF'_2 (\hat{p})(x) = \Tr \left(\log (\alpha \Sigma_{\hat{p}} + (1-\alpha) \Sigma_{\hat{q}}) \varphi(x) \varphi(x)^* \right)  \\
    + \alpha\Tr \left(\left( \sum_{j=1}^{n+m} \frac{h_j h_j^* \Sigma_{\hat{p}} h_j h_j^*}{\eta_j}  +\sum_{j\neq k} \frac{\log \eta_j - \log \eta_k}{\eta_j - \eta_k} h_j h_j^* \Sigma_{\hat{p}} h_k h_k^*\right) \varphi(x) \varphi(x)^*\right),\label{eq:first_var_2}
    \end{multline}
    where $(\eta_j,h_j)_{i=1}^{n+m}$ are the eigenvalues and eigenvectors of $\alpha \Sigma_{\hat{p}}+(1-\alpha)\Sigma_{\hat{q}}$.
\end{lemma}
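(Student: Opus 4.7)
\begin{sproof}
The plan is to follow the residue calculus approach already used in the proof of \Cref{lem:F1}, treating separately the two places in which $\hat{p}$ appears in $\cF_2(\hat{p}) = \Tr(\Sigma_{\hat{p}} \log M_{\hat{p}})$ with $M_{\hat{p}} := \alpha \Sigma_{\hat{p}} + (1-\alpha)\Sigma_{\hat{q}}$. Consider a perturbation $\xi = q - \hat{p}$ with $q\in\cP(\X)$, set $\Delta = \varepsilon \Sigma_{\xi}$, and observe that by linearity $\Sigma_{\hat{p}+\varepsilon \xi} = \Sigma_{\hat{p}} + \Delta$ and $M_{\hat{p}+\varepsilon\xi} = M_{\hat{p}} + \alpha \Delta$. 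Writing
\begin{equation*}
\cF_2(\hat{p}+\varepsilon\xi) - \cF_2(\hat{p}) = \Tr\bigl(\Delta\, \log M_{\hat{p}+\varepsilon\xi}\bigr) + \Tr\bigl(\Sigma_{\hat{p}}\,[\log M_{\hat{p}+\varepsilon\xi} - \log M_{\hat{p}}]\bigr),
\end{equation*}
the first summand contributes $\Tr(\log M_{\hat{p}} \Delta) + o(\varepsilon) = \varepsilon \int \Tr(\log M_{\hat{p}}\, \varphi(x)\varphi(x)^\ast)\, d\xi(x) + o(\varepsilon)$, which yields the first term in \eqref{eq:first_var_2}.

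For the second summand, I would invoke the Cauchy residue representation $\log M = \tfrac{1}{2i\pi} \oint_{\gamma} \log(z) (zI - M)^{-1} dz$ with $\gamma$ a contour in $\mathbb{C}\setminus \mathbb{R}^{-}$ enclosing the positive spectra of both $M_{\hat{p}}$ and $M_{\hat{p}+\varepsilon\xi}$. Using the resolvent identity $(zI - M - \alpha \Delta)^{-1} - (zI - M)^{-1} = \alpha (zI-M)^{-1} \Delta (zI-M)^{-1} + o(\|\Delta\|_{op})$ yields
\begin{equation*}
\Tr\bigl(\Sigma_{\hat{p}}\,[\log M_{\hat{p}+\varepsilon\xi} - \log M_{\hat{p}}]\bigr) = \frac{\alpha}{2 i\pi} \oint_{\gamma} \log(z)\, \Tr\bigl(\Sigma_{\hat{p}} (zI-M_{\hat{p}})^{-1} \Delta (zI-M_{\hat{p}})^{-1}\bigr)\, dz + o(\varepsilon).
\end{equation*}

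Next, I would plug in the spectral decomposition $M_{\hat{p}} = \sum_{j=1}^{n+m} \eta_j h_j h_j^{\ast}$, so that $(zI - M_{\hat{p}})^{-1} = \sum_{j} (z-\eta_j)^{-1} h_j h_j^{\ast}$, and expand the trace under the integral as a double sum over $j,k$ of terms $\Tr(\Sigma_{\hat{p}} h_j h_j^\ast \Delta h_k h_k^\ast)/[(z-\eta_j)(z-\eta_k)]$. Applying the residue formula to each factor then gives the two characteristic weights: for the diagonal $j=k$ one computes $\tfrac{1}{2i\pi}\oint_{\gamma} \tfrac{\log z}{(z-\eta_j)^2} dz = 1/\eta_j$, while for $j\neq k$ a partial fraction decomposition yields $\tfrac{1}{2i\pi}\oint_{\gamma} \tfrac{\log z}{(z-\eta_j)(z-\eta_k)} dz = (\log \eta_j - \log \eta_k)/(\eta_j - \eta_k)$.

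Collecting these contributions, using cyclicity of the trace to rewrite $\Tr(\Sigma_{\hat{p}} h_j h_j^\ast \Delta h_k h_k^\ast) = \Tr(h_j h_j^\ast \Sigma_{\hat{p}} h_k h_k^\ast \Delta)$, and substituting $\Delta = \varepsilon \int \varphi(x)\varphi(x)^\ast d\xi(x)$, I would divide by $\varepsilon$ and let $\varepsilon \to 0$ to identify the first variation $\cF_2'(\hat{p})(x)$ from $\int \cF_2'(\hat{p})(x) d\xi(x)$, exactly in the claimed form. The main obstacle is the clean handling of the contour integral with the spectral expansion, in particular the $j=k$ vs $j\neq k$ case split arising from the partial fractions, together with tracking the factor $\alpha$ inherited from the perturbation of $M_{\hat{p}}$; the identification of $\cF_2'$ from the linear functional in $\xi$ is then straightforward since the measure $d\xi$ only enters linearly through $\Delta$.
\end{sproof}
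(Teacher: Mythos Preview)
Your proposal is correct and follows essentially the same route as the paper: the same two-term decomposition of $\cF_2(\hat p+\varepsilon\xi)-\cF_2(\hat p)$, the same contour-integral representation of $\log M$ with the resolvent identity to linearize in $\Delta$, and the same residue computation with the $j=k$ versus $j\neq k$ case split. The only cosmetic difference is that the paper routes the decomposition through an add-and-subtract of $\Tr((\Sigma_{\hat p}+\Delta)\log\hatN)$, arriving at the identical pair of terms you wrote directly.
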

\begin{proof}
    Denote $\hatN = (1-\alpha) \Sigma_{\hat{q}} + \alpha \Sigma_{\hat{p}}$. As for \Cref{lem:F1}, let $\Delta = \varepsilon \Sigma_{\xi}$. We have: 
\begin{align*}
 &\Tr (\Sigma_{\hat{p}+\Delta} \log (\alpha \Sigma_{\hat{p}+\Delta} + (1-\alpha) \Sigma_{\hat{q}}))   =\Tr (\Sigma_{\hat{p}} + \Delta) \log (\hatN + \alpha \Delta) - \Tr (\Sigma_{\hat{p}}) \log \hatN \\
   &=\Tr ((\Sigma_{\hat{p}} + \Delta) \log (\hatN + \alpha \Delta)) - \Tr ((\Sigma_{\hat{p}} + \Delta) \log \hatN) + \Tr ((\Sigma_{\hat{p}} + \Delta) \log \hatN) - \Tr (\Sigma_{\hat{p}} \log \hatN) \\
   & = \Tr (\Sigma_{\hat{p}}(\log (\hatN + \alpha \Delta) - \log \hatN)) + \Tr  (\Delta \log \hatN).
\end{align*}
The second term on the r.h.s. is already linear in $\Delta$ as desired, hence we focus on the first one. Using a singular value decomposition of $\hatN+\alpha \Delta$ and $\hatN$ we write: 
:
\begin{multline*}
\Tr (\Sigma_{\hat{p}}(\log (\hatN + \alpha \Delta) - \log \hatN)) =\\
\Tr \left(\Sigma_{\hat{p}} \left(\sum_{j=1}^{n+m} \log\eta_j(\hatN + \alpha \Delta) h_j' h_j'^*\right)\right)  - \Tr \left(\Sigma_{\hat{p}} \left(\sum_{j=1}^{n+m} \log\eta_j(\hatN) h_j h_j^*\right)\right),
\end{multline*}
where $(h_j')_j$ are the eigenvectors of positive eigenvalues of $\hatN + \alpha \Delta$. Let $\gamma$  a loop surrounding all the eigenvalues $\eta_j(\hatN + \alpha \Delta)$ and $\eta_j(\hatN)$, then,
\begin{multline*}
\sum_{j=1}^{n+m} \log \eta_j(\hatN + \alpha \Delta) h_j' h_j'^* = \frac{1}{2i\pi} \oint_{\gamma} \log (z) (zI - \hatN - \alpha \Delta)^{-1} dz\\
\text{ and }\sum_{j=1}^{n+m}\log \eta_j(\hatN) h_j h_j^* = \frac{1}{2i\pi} \oint_{\gamma} \log (z) (zI - \hatN)^{-1} dz.
\end{multline*}
Moreover, we have $(zI - \hatN - \alpha \Delta)^{-1} - (zI - \hatN)^{-1} = (zI - \hatN)^{-1} \alpha \Delta (zI - \hatN)^{-1} + o(\varepsilon) $. Hence, 
\begin{align*}
\Tr (\Sigma_{\hat{p}}(\log (\hatN + \alpha \Delta) - \log \hatN)) &= \frac{1}{2i\pi} \oint_{\gamma} \Tr (\Sigma_{\hat{p}} \log(z)(zI - \hatN)^{-1} \alpha \Delta (zI - \hatN)^{-1}) dz + o(\varepsilon) \\
&= \frac{\alpha}{2i\pi} \oint_{\gamma}  \log(z) \Tr \left(\Sigma_{\hat{p}} \left(\sum_{j,k = 1}^{n+m} \frac{h_j h_j^* \Delta h_k h_k^*}{(z-\eta_j)(z-\eta_k)}\right)\right) dz \\
&= \frac{\alpha}{2i\pi} \left(\sum_{j,k = 1}^{n+m} \oint_{\gamma}  \frac{ \log(z) }{(z-\eta_j)(z-\eta_k)} dz  \Tr(\Sigma_{\hat{p}} h_j h_j^* \Delta h_k h_k^*) \right).
\end{align*}
With the residue theorem, for $j \neq k$, $\oint_{\gamma}  \frac{ \log (z) }{(z-\eta_j)(z-\eta_k)} dz = 2i\pi \left(\frac{\log \eta_j}{(\eta_j - \eta_k)} + \frac{\log \eta_k}{(\eta_k - \eta_j)} \right) = 2i\pi \frac{\log{\eta_j} - \log{\eta_k}}{\eta_j - \eta_k}$, and for $k=j$, $\oint_{\gamma}  \frac{ \log (z) }{(z-\eta_j)^2} dz =  \frac{2i \pi}{\eta_j}$. 
We then have: 
\begin{multline*}
    \Tr (\Sigma_{\hat{p}}(\log (\hatN + \alpha \Delta) - \log \hatN)) = \alpha \sum_{j=1}^{n+m} \frac{1}{\eta_j} \Tr(h_j h_j^* \Sigma_{\hat{p}} h_j h_j^* \Delta)\\ + \alpha \sum_{j \neq k}^{n+m} \frac{\log{\eta_j}   - \log{\eta_k}}{\eta_j - \eta_k} \Tr(h_k h_k^* \Sigma_{\hat{p}} h_j h_j^* \Delta) + o(\varepsilon).
\end{multline*}
We note that if $\Sigma_{\hat{p}}$ an $\Sigma_{\hat{q}}$ were diagonalizable in the same eigenbasis, then the previous quantity would be equal to $\alpha \sum_{j=1}^{n+m} \frac{\lambda_j}{\eta_j} \Tr(h_j h_j^* \Delta) = \Tr \Sigma_p \N^{\dagger} \Delta $ where $\N^{\dagger}$ is the pseudo inverse of $\N$.\ak{c'est quoi $\N^{\dagger}$?}\cc{le pseudo inverse, j'ai rajouté}
We conclude again dividing the latter quantity by $\epsilon$ and considering its limit as $\epsilon\to 0$.
\end{proof}

We can now write the matrix expression for the first variation of $\cF$ using \Cref{lem:psi}. 
We remind that $\phi_x = \begin{pmatrix}
    \varphi(x_1)^* \\ .. \\ \varphi(x_n)^*
\end{pmatrix}$ (resp $\phi_y$) and $\psi = \begin{pmatrix}
    \sqrt{\frac{\alpha}{n}} \phi_x \\ \sqrt{\frac{1-\alpha}{m}} \phi_y.
\end{pmatrix}$, and that $\psi^T \psi = \alpha \Sigma_{\hat{p}} + (1-\alpha) \Sigma_{\hat{q}}$ and $\psi \psi^T = K$. We remark that $\Sigma_{\hat{p}} = \frac{1}{\sqrt{n}}\phi_x^T \frac{1}{\sqrt{n}}\phi_x$ and $\phi_x \phi_x^T = \frac{1}{n}\Kp$.
By \Cref{lem:psi}, we have \begin{align*}
    \Tr(\log (\Sigma_{\hat{p}}) \varphi(x) \varphi(x)^*) &= \Tr(\frac{1}{n}  \phi_x^T \left(\frac{1}{n}\Kp\right)^{-\frac{1}{2}} \log \left(\frac{1}{n} \Kp\right) \left(\frac{1}{n}\Kp\right)^{-\frac{1}{2}} \phi_x \varphi(x) \varphi(x)^*)\\
    &= S(x)^T g(\Kp) S(x)
\end{align*}
since $S(x) = \phi_x \varphi(x)$. We show the same way that $\Tr \log(\alpha \Sigma_{\hat{p}} + (1-\alpha) \Sigma_{\hat{q}}) \varphi(x) \varphi(x)^* = T(x)^T g(K) T(x)$.
\\

For the third term in the first variation of $\cF=\cF_1+\cF_2$, i.e. the second one in \Cref{eq:first_var_2}, our goal is to rewrite $\Tr(h_k h_k^* \Sigma_{\hat{p}} h_j h_j^* \Delta)$ in terms of matrices. We have $h_k = \psi^T \bm{c}_k / \|\psi^T \bm{c}_k\|$ (idem j) where $\bm{c}_k$ is an eigenvector of $K$ of eigenvalue $\eta_k$, and $\|\psi^T \bm{c}_k\|^2 = K \bm{c}_k = \eta_k$. Hence, 
\begin{align*}
   \Tr(h_k h_k^* \Sigma_{\hat{p}} h_j h_j^* \Delta) &= \Tr( \psi^T \bm{c}_j \bm{c}_j^T \psi \psi^T \begin{pmatrix}
    \frac{1}{\alpha} I && 0 \\ 0 && 0
\end{pmatrix} \psi \psi^T \bm{c}_k \bm{c}_k^T \psi \Delta)/ (\eta_k \eta_j) \\
&= \Tr( \bm{c}_j \bm{c}_j^T K \begin{pmatrix}
    \frac{1}{\alpha} I && 0 \\ 0 && 0
\end{pmatrix} K \bm{c}_k \bm{c}_k^T \psi \Delta \psi^T) / (\eta_k \eta_j) \\
&= \Tr( \bm{c}_j \bm{c}_j^T \begin{pmatrix}
    \frac{1}{\alpha} I && 0 \\ 0 && 0
\end{pmatrix} \bm{c}_k \bm{c}_k^T \psi \Delta \psi^T) \\
&= \varepsilon \int \Tr(\bm{c}_j \bm{c}_j^T \begin{pmatrix}
    \frac{1}{\alpha} I && 0 \\ 0 && 0
\end{pmatrix} \bm{c}_k \bm{c}_k^T \psi \varphi(x) \varphi(x)^* \psi^T) d\xi(x).
\end{align*} 
We have $\psi \varphi(x) \varphi(x)^* \psi^T = V(x) V(x)^T$ where $ V(x) =\left(\frac{\alpha}{n} k(x,x_1),  \dots, \frac{1-\alpha}{m} k(x,y_1) \right)^T$
, and if we note $\bm{c_j} = (\bm{a}_j, \bm{b}_j)^T$:

$$\bm{c}_j \bm{c}_j^T \begin{pmatrix}
    \frac{1}{\alpha} I && 0 \\ 0 && 0
\end{pmatrix} \bm{c}_k \bm{c}_k^T = \frac{\langle \bm{a}_j,\bm{a}_k \rangle}{\alpha} c_j c_k^T.$$
Finally, \begin{multline}\Tr (\Sigma_{\hat{p}}(\log (\hatN + \alpha \Delta) - \log \hatN)) = \\\varepsilon \int \Tr \left( \sum_{j=1}^{n+m} \frac{\|a_j \|^2}{\eta_j} c_j c_j^T + \sum_{j \neq k} \frac{\log \eta_j - \log \eta_k}{\eta_j - \eta_k} \langle \bm{a}_j, \bm{a_k} \rangle c_j c_k^T \right) V(x) V(x)^T \qedhere.
\end{multline}

\section{Additional Experiments}\label{sec:additional_experiments}

\paragraph{Skewness and concentration of the $\KKL$.}
In these examples, we plot $\KKL_{\alpha}(\hat{p}||\hat{q})$  as the number of $n=m$ samples of two distributions $p,q$ increases. In  \Cref{fig:ev_n_d_moving} we plot the $\KKL$ for two Gaussians by varying the dimension $d$. It can be seen that the larger $d$ is, the less $\KKL$  oscillates. We can also remark that the value of $\KKL$ increases with the dimension, reflecting the effect of the constants of \Cref{prop:empirical}. \Cref{fig:ev_n_d_2} is the same experiment as in the main text, except that the dimension is 2. We can also see here that $\KKL_{\alpha}$ is monotone in $\alpha$. We can also see that convergence to the value of $\KKL$ in population is faster in this case than for $d=10$. The third experiment in~\Cref{fig:ev_n_expo_law} is in dimension 1 and the distribution of $q$ is an exponential distribution with parameter $\lambda = 1$, while $p$ is a Gaussian distribution. We can notice a few points, such as for example that the values taken by KKL are smaller and that it varies less with $\alpha$ than in the case of \Cref{fig:ev_n_d_2}.

\begin{figure}[h!]
    \centering
    \begin{minipage}[t]{0.32\textwidth}
        \centering
        \includegraphics[width=\textwidth]{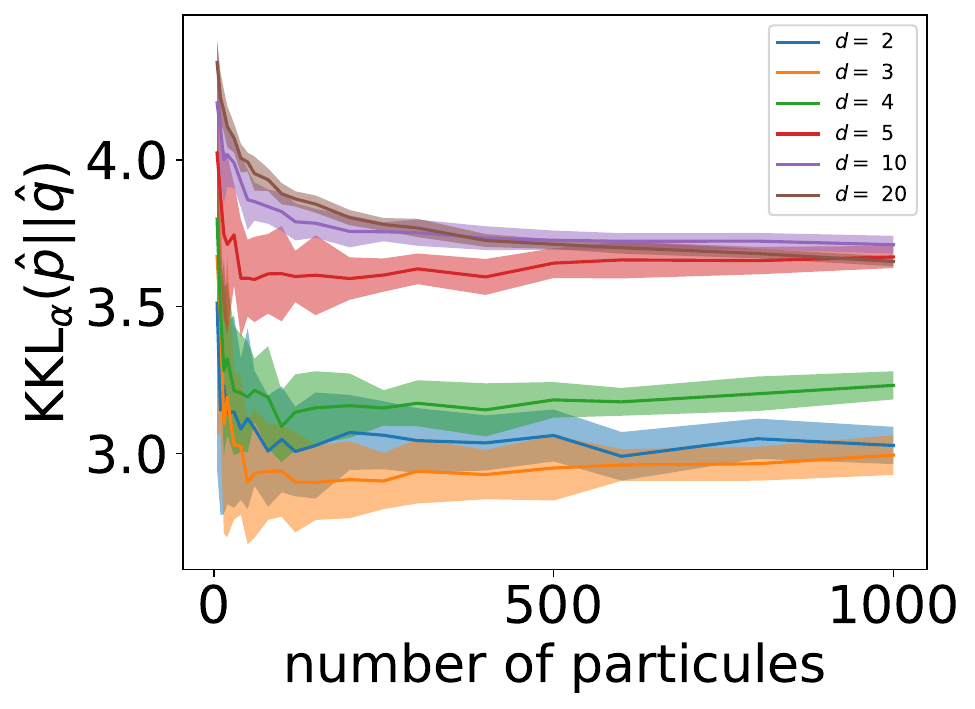}
        \caption{$\alpha = 0.01$, $p,q$ Gaussians, $\sigma$ is the square of the mean of distances between $\hat{p}$ and $\hat{q}$.}
        \label{fig:ev_n_d_moving}
    \end{minipage}\hfill
    \begin{minipage}[t]{0.32\textwidth}
        \centering
        \includegraphics[width=\textwidth]{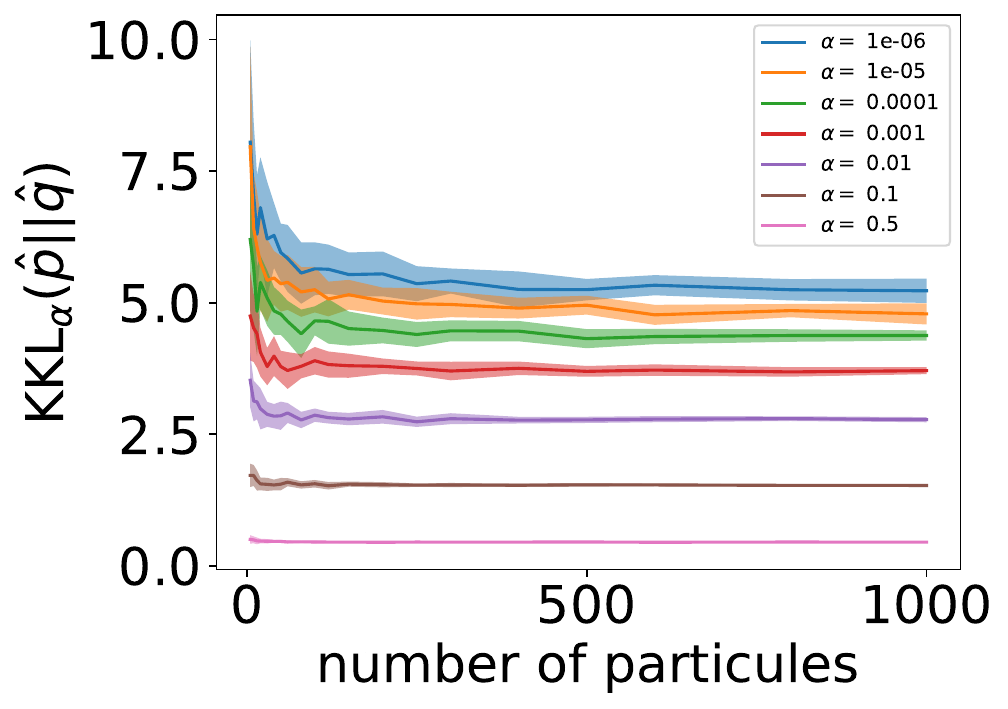}
        \caption{$\alpha = 0.1$, $\sigma = 2$.}
        \label{fig:ev_n_d_2}
    \end{minipage}\hfill
    \begin{minipage}[t]{0.32\textwidth}
        \centering
        \includegraphics[width=\textwidth]{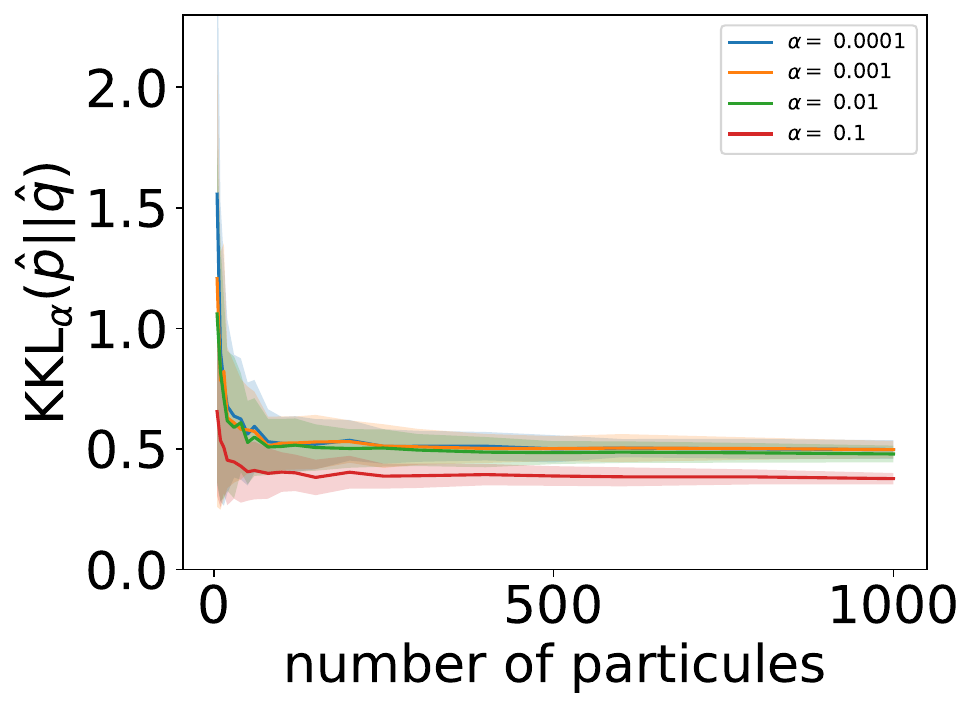}
        \caption{$p \sim \mathcal{N}(1,1)$, $q \sim \mathcal{E}(1)$, $\alpha = 0.1$, $\sigma=2$.}
        \label{fig:ev_n_expo_law}
    \end{minipage}
\end{figure}

\paragraph{Sampling with KKL gradient descent on Gaussians and mixtures of Gaussians.} We are interested in sampling on Gaussians or mixtures of 2 Gaussians by varying the dimension. \Cref{fig:gd_mixgauss_d_moy_mean} and \Cref{fig:gd_gauss_d} show the evolution of the KKL value during the gradient descent  of different dimensions $d$, starting with a Gaussian $p$ and taking $q$ to be a mixture of 2 Gaussians for \Cref{fig:gd_mixgauss_d_moy_mean}  and $p$ and $q$ Gaussians distributions for \Cref{fig:gd_gauss_d}. In \Cref{fig:gd_mixgauss_d_moy_mean}  and \Cref{fig:gd_gauss_d}, the stepsize $h = \frac{1}{n} \left(\sum_{i,j} \|x_i - y_j\|^2 \right)^{1/2} n^{-1/(d+4)}$. For each $d$, we report the average and error bars of our results for 20 runs, varying the samples drawn from the initialization and the thick lines represents the average value. We can see that in both cases the convergence is faster for small values of $d$. On \Cref{fig:gd_mixgauss_d10_moy_mean} we observe the evolution of $W_2(\hat{p}||\hat{q})$, the 2 Wasserstein distance, during gradient descent in dimension $d = 10$ for various parameters $\alpha$. The distribution $p$ and $q$ are respectively a Gaussian and a mixture of 2 Gaussians. The values of $W_2$ at each iteration $t$ is computed as the mean of $W_2(\hat{p},\hat{q})$ on 10 runs of the gradient descent where for each the mean of $p$ is drawn at random. We can see that if the $\alpha$ value is too high, then convergence in 2-Wasserstein is slower, whereas if it is too small, convergence is faster at the beginning, but does not lead to an optimal value in Wasserstein distance at the end of the algorithm.

\begin{figure}[h!]
    \centering
    \begin{minipage}[t]{0.32\textwidth}
        \centering
        \includegraphics[width=\textwidth]{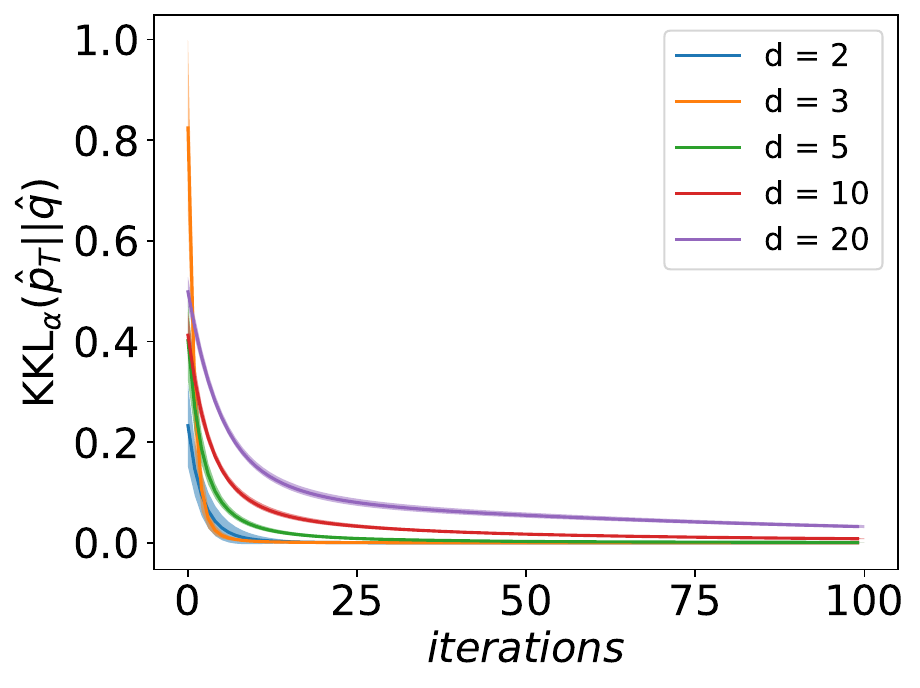}
        \caption{$\alpha = 0.01$, $p$ is a Gaussian distribution and $q$ a mixture of 2 Gaussians,  $\sigma$ is the square of the mean of distances between $\hat{p}$ and $\hat{q}$.}
        \label{fig:gd_mixgauss_d_moy_mean}
    \end{minipage}\hfill
    \begin{minipage}[t]{0.32\textwidth}
        \centering
        \includegraphics[width=\textwidth]{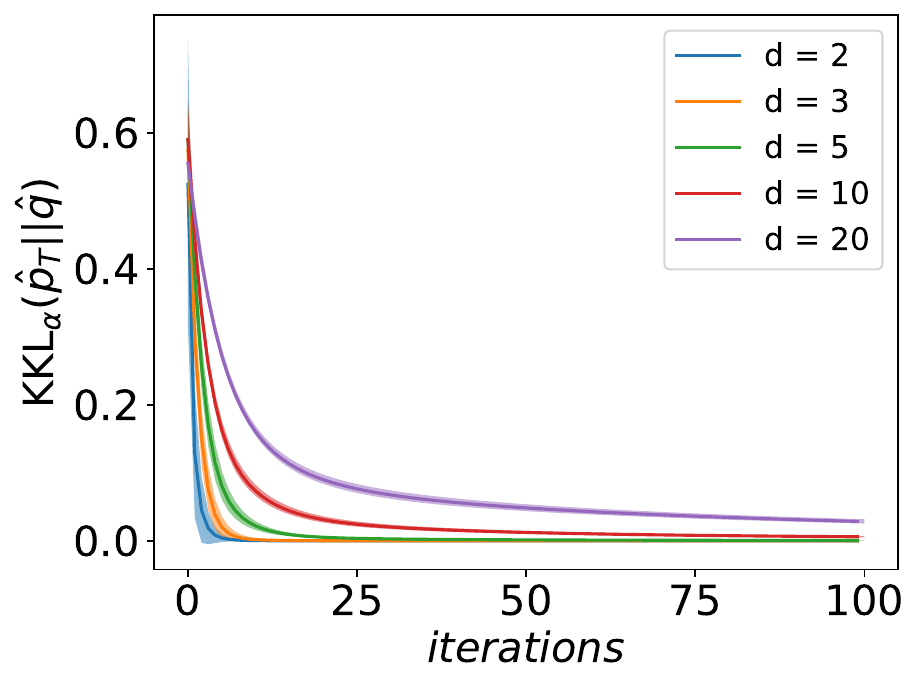}
        \caption{$\alpha = 0.01$, $p$ and $q$ are Gaussians. Bandwidth $\sigma$ is the mean of the square distances between $\hat{p}$ and $\hat{q}$ points.}
        \label{fig:gd_gauss_d}
    \end{minipage}\hfill
    \begin{minipage}[t]{0.32\textwidth}
        \centering
        \includegraphics[width=\textwidth]{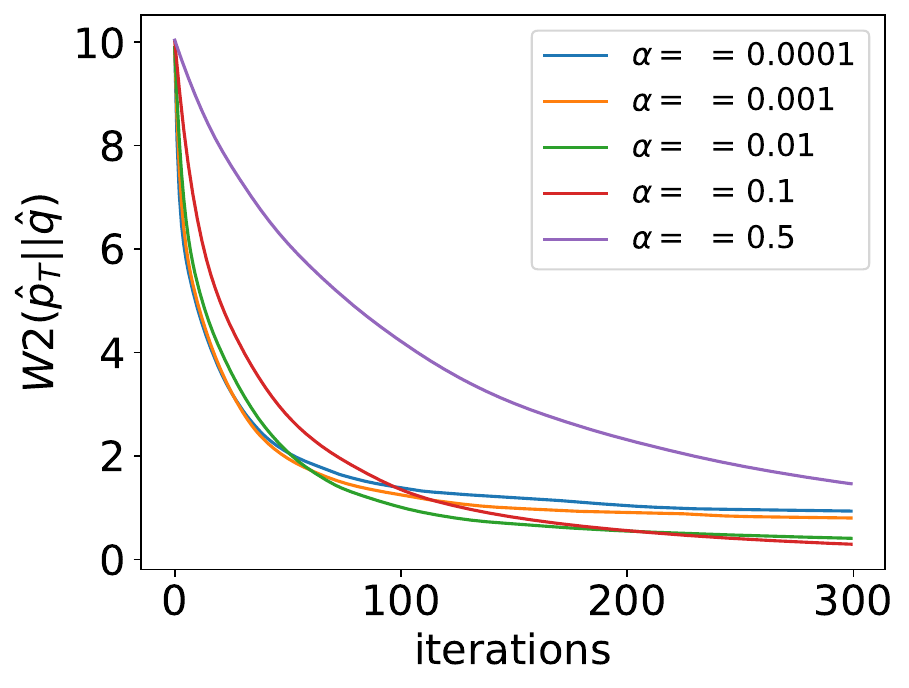}
        \caption{$\sigma = 10$, $h = 5$, $p$ is Gaussian and $q$ is a mixture of 2 Gaussians.}
        \label{fig:gd_mixgauss_d10_moy_mean}
    \end{minipage}
\end{figure}

\paragraph{3 rings.}\Cref{fig:comparaison_metrics} compares the evolution of the gradient flows of MMD, Kale and $\KKL_{\alpha}$ in terms of Wassertsein distance and Energy distance in the case where optimisation of $\KKL$ is done with L-BFGS linesearch in \Cref{fig:3_rings_main}. We observe that both Kale and $\KKL$ seem to converge towards 0 in terms of energy distance and Wasserstein distance but $\KKL_{\alpha}$ is faster to converge, in term of number of iterations than Kale. The MMD flow decreases the energy distance but does not converge to 0 in 2-Wasserstein distance, unlike Kale and KKL, reflecting the fact that some particles are not supported on the target support. The bandwidth of $k$ is fixed at $\sigma = 0.1$ for Kale and MMD and at $\sigma = 0.3$ for $\KKL$. In \Cref{fig:3rings_alpha_0_01} this time we repeat the experiment but for a simple gradient descent for $\KKL$ with constant step $h=0.01$. We see that in this case the speed of convergence in terms of iterations for $\KKL$ is slower than for Kale (there are only about 100 iterations necessary for Kale and MMD and 300 for $\KKL$) but it ends up obtaining (see \Cref{fig:3_rings_wass_dist}), in terms of Wassertsein distance, a similar limit. On the other hand, the execution time of the gradient descent for 300 iterations of $\KKL$ is about the same as for Kale and MMD for 100 iterations.

\begin{figure}[h!]
    \centering
    \begin{minipage}[b]{0.32\textwidth}
        \centering
        \includegraphics[width=\textwidth]{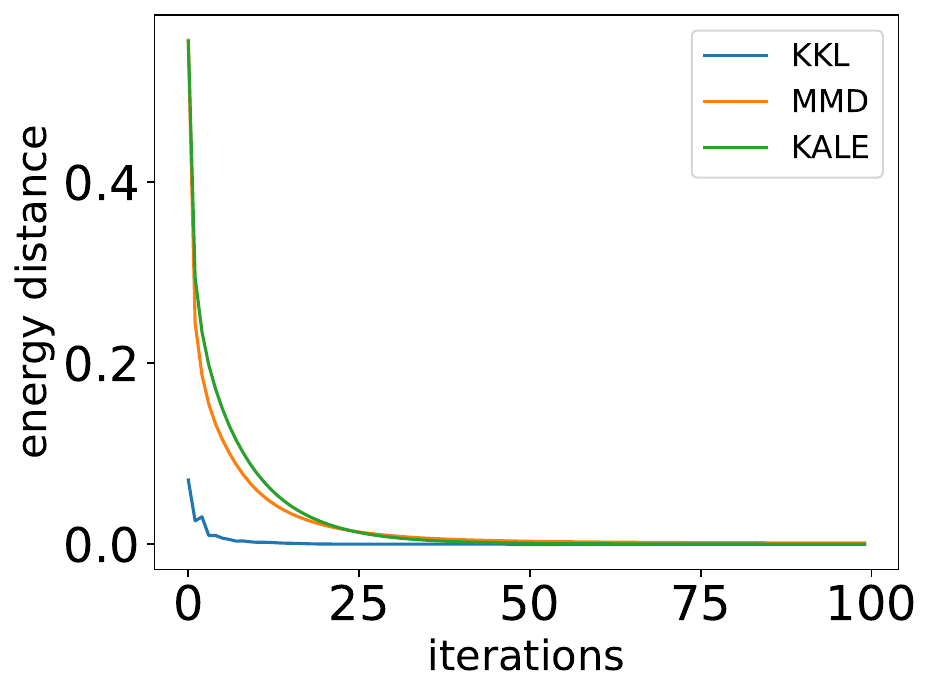}
        \caption{L-BFGS, $\sigma = 0.3$, $\alpha = 0.01$}
        \label{fig:comparison_metrics_ED}
    \end{minipage}
    \hfill
    \begin{minipage}[b]{0.32\textwidth}
        \centering
        \includegraphics[width=\textwidth]{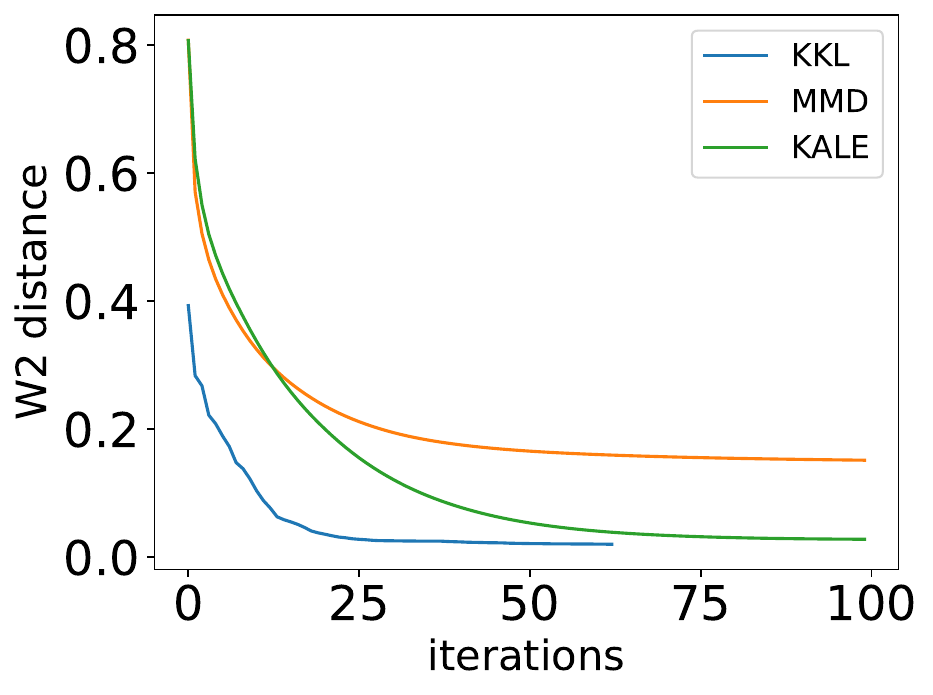}
        \caption{L-BFGS, $\sigma = 0.3$, $\alpha = 0.01$}
        \label{fig:comparison_metrics_W2}
    \end{minipage}
    \hfill
    \begin{minipage}[b]{0.32\textwidth}
        \centering
        \includegraphics[width=\textwidth]{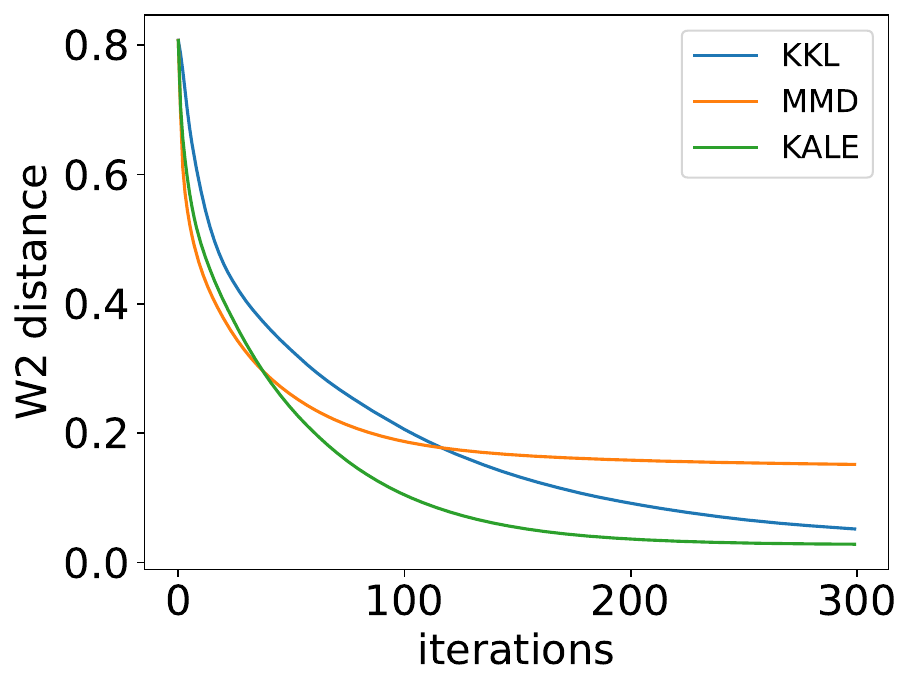}
        \caption{Constant step size $h = 0.01$, $\sigma = 0.3$, $\alpha = 0.001$}
        \label{fig:3_rings_wass_dist}
    \end{minipage}
    \label{fig:comparaison_metrics}
\end{figure}

\begin{figure}[h!]
    \centering
    \includegraphics[width = 10cm]{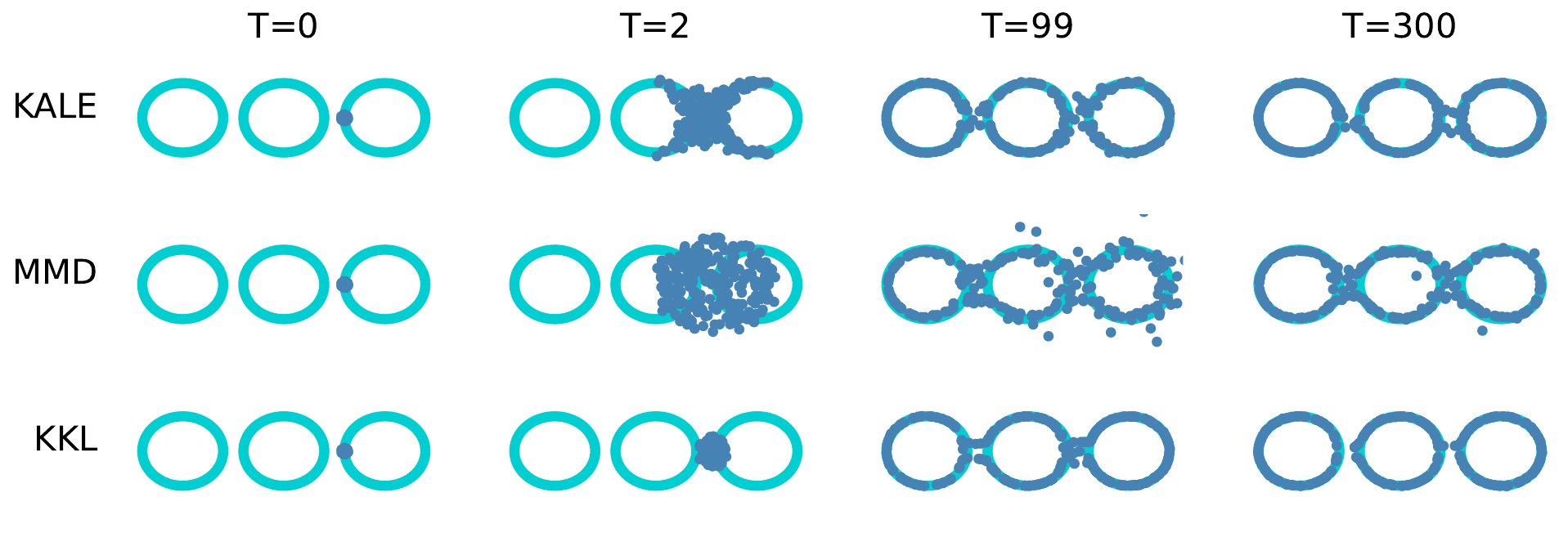}
    \caption{}
    \label{fig:3rings_alpha_0_01}
\end{figure}

\newpage



\end{document}